\DeclareMathOperator\erf{erf}
\DeclareMathOperator*{\argmax}{arg\,max}
\DeclareMathOperator*{\argmin}{arg\,min}
\algnewcommand{\IIf}[1]{\State\algorithmicif\ #1\ \algorithmicthen}
\algnewcommand{\EElse}[1]{\State \algorithmicelse\ #1\ }
\algnewcommand{\EndIIf}{\unskip}
\newcommand{\squishlist}{
   \begin{list}{$\bullet$}
    { \setlength{\itemsep}{2pt}    \setlength{\parsep}{0pt}
      \setlength{\topsep}{5pt}     \setlength{\partopsep}{0pt}
      \setlength{\leftmargin}{1.35em} \setlength{\labelwidth}{1em}
      \setlength{\labelsep}{0.5em} } }
\newcommand{\squishend}{
    \end{list}  }
\newtheorem{mydef}{Definition}
\newtheorem{theorem}{Theorem}
\newtheorem{lemma}{Lemma}
\newtheorem{proposition}{Proposition}
\newcommand{\covvec}{\mathbf{r}}
\newcommand{\piInf}[1]{\pi_{\min}(#1)}
\newcommand{\piSup}[1]{\pi_{\max}(#1)}
\newcommand{\piInfL}[1]{\pi_{\min}^L(#1)}
\newcommand{\piInfU}[1]{\pi_{\min}^U(#1)}
\newcommand{\piSupL}[1]{\pi_{\max}^L(#1)}
\newcommand{\piSupU}[1]{\pi_{\max}^U(#1)}
\newcolumntype{R}[2]{%
    >{\adjustbox{angle=#1,lap=\width-(#2)}\bgroup}%
    c%
    <{\egroup}%
}
\author{%
  Arno Blaas\thanks{Equal Contributions.} \\
  Department of Engineering Science\\
  University of Oxford\\
   \texttt{arno@robots.ox.ac.uk} \\
  %\texttt{hippo@cs.cranberry-lemon.edu} \\
  % examples of more authors
   \And
   Andrea Patane$^*$ \\
   Department of Computer Science \\
   University of Oxford \\
   \texttt{andrea.patane@cs.ox.ac.uk} \\
   \And
   Luca Laurenti$^*$ \\
   Department of Computer Science \\
   University of Oxford \\
   \texttt{luca.laurenti@cs.ox.ac.uk} \\
   \And
  Luca Cardelli \\
   Department of Computer Science \\
   University of Oxford \\
   \texttt{luca.cardelli@cs.ox.ac.uk} \\
   \And
   Marta Kwiatkowska \\
   Department of Computer Science \\
   University of Oxford \\
   \texttt{marta.kwiatkowska@cs.ox.ac.uk} \\
   \And
   Stephen Roberts\\
   Department of Engineering Science\\
   University of Oxford \\
   \texttt{sjrob@robots.ox.ac.uk} \\
}
\begin{document}

\twocolumn[
\aistatstitle{Adversarial Robustness Guarantees for Classification with Gaussian Processes}

% The \author macro works with any number of authors. There are two commands
% used to separate the names and addresses of multiple authors: \And and \AND.
%
% Using \And between authors leaves it to LaTeX to determine where to break the
% lines. Using \AND forces a line break at that point. So, if LaTeX puts 3 of 4
% authors names on the first line, and the last on the second line, try using
% \AND instead of \And before the third author name.
\aistatsauthor{ Arno Blaas$^*$ \And Andrea Patane$^*$ \And  Luca Laurenti$^*$}
\aistatsaddress{ Department of Engineering Science \\ University of Oxford \And Department of Computer Science \\ University of Oxford \And Department of Computer Science \\ University of Oxford}
\aistatsauthor{  Luca Cardelli \And Marta Kwiatkowska \And Stephen Roberts}
\aistatsaddress{Department of Computer Science \\ University of Oxford \And Department of Computer Science \\ University of Oxford \And Department of Engineering Science \\ University of Oxford}
]

%syntax highlighting
\newcommand{\varr}[1]{{\color{green!60!black}#1}}
\newcommand{\varc}[1]{{\color{blue!60!black}#1}}
\newcommand{\varl}[1]{{\color{red!60!black}#1}}

%pre-post constraint
\mathchardef\mhyphen="2D
\newcommand{\prepost}[1]{\mathsf{pre\mhyphen post}_{#1}}
\newcommand{\inv}[1]{\mathsf{inv}_{#1}}

\newcommand{\ap}[1]{{\color{purple}[AP: #1]}}
\newcommand{\LL}[1]{{\color{red}[LL: #1]}}
\newcommand{\AB}[1]{{\color{green}[AB: #1]}}
\newcommand{\MK}[1]{{\color{blue}[MK: #1]}}
\newcommand{\SR}[1]{{\color{cyan}[SR: #1]}}
\newcommand{\rev}[1]{{\color{blue}#1}}
\newcommand\blfootnote[1]{%
  \begingroup
  \renewcommand\thefootnote{}\footnote{#1}%
  \addtocounter{footnote}{-1}%
  \endgroup
}
%\begin{center} \Large{\bf Syntax-Guided Optimal Synthesis for Chemical Reaction Networks} \end{center}

\begin{abstract}
%We consider Bayesian classification with  Gaussian process classifiers (GPCs) and define \rev{adversarial} robustness of a classifier in terms  of the worst-case difference in the classification probabilities with  respect to input perturbations. For a compact subset of the input space  $T\subseteq \mathbb{R}^d$ evaluating this robustness reduces to computing the minimum and maximum of the classification probabilities for all points  in $T$, \rev{which in general cannot be done in finite time}. 
%However, using the theory of Gaussian processes, we develop a framework that, for a given freely selectable error threshold  $\epsilon>0$, computes $\epsilon-$exact approximations to these minimum and maximum values in finite time by introducing suitable lower and upper bounds to the problem and iteratively refining them inside a taylored branch and bound algorithm until convergence.
%In experiments on a synthetic dataset and two real-world datasets, we demonstrate the usefulness of our results in tasks such as verifying safety, comparing adversarial robustness among different training regimes, or even conducting interpretability analyses for GPCs.

We investigate adversarial robustness of Gaussian Process Classification (GPC) models. Given a compact subset of the input space $T\subseteq \mathbb{R}^d$ enclosing a test point $x^*$ and a GPC trained on a dataset $\mathcal{D}$, we aim to compute the minimum and the maximum classification probability for the GPC over all the points in $T$.
In order to do so, we show how functions lower- and upper-bounding the GPC output in $T$ can be derived, and implement those in a branch and bound optimisation algorithm. For any error threshold $\epsilon > 0$ selected \emph{a priori}, we show that our algorithm is guaranteed to reach values $\epsilon$-close to the actual values in finitely many iterations.
We apply our method to investigate the robustness of GPC models on a 2D synthetic dataset, the SPAM dataset and a subset of the MNIST dataset, providing comparisons of different GPC training techniques, and show how our method can be used for interpretability analysis. 
Our empirical analysis suggests that GPC robustness increases with more accurate posterior estimation.

\end{abstract}

\section{INTRODUCTION}\blfootnote{$^*$Equal Contributions.}
Adversarial examples (i.e.\ input points intentionally crafted to trick a model into misclassification) have raised serious concerns about the security and robustness of models learned from data \citep{biggio2018wild}.
Since test accuracy fails to account for the behaviour of a model in adversarial settings, the development of techniques capable of quantifying the adversarial robustness of machine learning models is an essential pre-condition for their application in safety-critical scenarios \citep{ribeiro2016should}.
In particular, Gaussian Processes (GPs), thanks to their favourable analytical properties, allow for the computation of the uncertainty over model predictions in Bayesian settings, which can then be propagated through the decision pipeline to facilitate decision-making \citep{rasmussen2004gaussian}.
However, while techniques for the computation of robustness guarantees have been developed for a variety of non-Bayesian machine learning models \citep{katz2017reluplex,huang2017safety,biggio2018wild}, to the best of our knowledge studies of \textit{adversarial classification robustness} of GPs have been limited to statistical (i.e.\ input distribution dependent) \citep{abdelaziz2017data} and heuristic analyses \citep{grosse2018limitations,bradshaw2017adversarial}, and methods for the computation of adversarial robustness guarantees are missing. %have been little studied.

In this work, given a trained GP Classification (GPC) model and a compact subset of the input space $T\subseteq \mathbb{R}^d$, we pose the problem of computing the maximum and minimum of the GPC class probabilities over all $x \in T$.
We show that such values naturally allow us to compute robustness properties %widely 
employed for analysis of deep learning models \citep{ruan2018reachability}, e.g.\ can be used to provide guarantees of non-existence of adversarial examples
%against the existence of adversarial perturbations 
and for the computation of classification ranges for sets of input points.
Unfortunately,  exact direct computation of the maximum and minimum class probabilities over compact sets is not possible, as these would require providing an \textit{exact solution} of a global non-linear optimisation problem, for which no general method exists \citep{neumaier2004complete}. 
We show how upper and lower bounds for the maximum and minimum classification probabilities of GPCs can be computed on any given compact set $T$, and then iteratively refine these bounds in a branch and bound algorithmic scheme until convergence to the minimum and maximum is obtained.

Specifically, through discretisation of the GPC latent space, we derive an upper and lower bound on the GPC class confidence output by analytically optimising a set of Gaussian integrals, whose parameters depend upon extrema of the GPC posterior mean and variance in $T$.
We show how the latter can be bounded by solving a set of convex quadratic and linear programming problems, for which solvers are readily available \citep{boyd2004convex}. 
Finally, for any given error tolerance 
$\epsilon > 0,$ we prove that there exists a discretisation of the latent space that ensures convergence of the branch and bound to values $\epsilon$-close to the actual maximum and minimum class probabilities in finitely many steps.
The method we propose
%proposed here 
is anytime (the bounds provided are at every step an over-estimation of the actual classification ranges over $T$, and can hence be used to provide guarantees) and $\epsilon$-exact (the actual values are retrieved in finitely many steps up to an error $\epsilon$ selected a-priori).

We apply our approach to analyse the robustness profile of GPCs on 
a two-dimensional dataset, the SPAM dataset, and a feature-based analysis of a binary and a $3$-class subset of the MNIST dataset. 
In particular, we compare the guarantees computed by our method with the robustness estimation approximated by adversarial attack methods for GPCs  \citep{grosse2018limitations}, discussing in which settings the latter fails. Then, we analyse the effect of approximate Bayesian inference techniques and hyper-parameter optimisation procedures on the GPC adversarial robustness.
Interestingly, across the three datasets analysed here, we observe that approximation based on Expectation Propagation \citep{minka2001expectation} gives more robust classification models than Laplace approximation \citep{rasmussen2004gaussian}, and that 
GPC robustness increases with the number of training epochs.
%, as opposed to what happens in deep neural networks, where a marked trade-off between robustness and accuracy is generally observed \citep{cardelli2019statistical,tsipras2018robustness}.
Finally, we show how robustness can be used to perform interpretability analysis of GPC predictions and compare our methodology with LIME \citep{ribeiro2016should}.

In summary, the paper presents the following contributions:
\begin{itemize}
    \item We develop a method for computing lower and upper bounds for GPC probabilities over compact sets.
    \item We incorporate the bounding procedure in a branch and bound algorithm, which we show to converge for any specified error  $\epsilon >0$ in finitely many steps. 
    \item We empirically evaluate the robustness of a variety of GPC models on three datasets, and 
    %further 
    demonstrate how our method can be used for interpretability analysis.
\end{itemize}

\paragraph{Related Work}
%\ap{here}
%\ap{Consider extending this a bit and go a bit more in details with differences with [20]. It is not just a matter of regression vs. Classification. Here we take into accoutn decision making, and guarantee convergence, and have anytime bounds.}

Different notions of robustness have been studied for GPs. For instance, \citet{kim2008outlier} consider robustness against outliers, while \citet{hernandez2011robust} study robustness against labelling errors.
In this paper  we consider robustness against local adversarial perturbations, whose quantification for Bayesian models is a problem addressed in several papers. Heuristic approaches based on studying adversarial examples  are developed by \citet{grosse2018limitations,feinman2017detecting}.
Formal guarantees are derived by \citet{cardelli2018robustness,bogunovic2018adversarially,smith2019adversarial} for GPs and by \citet{cardelli2019statistical} for Bayesian neural networks. 
In particular, \citet{cardelli2018robustness}  derive an upper bound on the probability that there
exists a point in the neighbourhood of a given test point of a GP such that the prediction
of the GP on the latter differs from the initial test input point
by at least a specified threshold, whereas  \citet{bogunovic2018adversarially} consider a GP optimisation algorithm  in which the returned solution is guaranteed to be robust to adversarial perturbations with a certain probability. 
The problem and the techniques developed in this paper are substantially different from both of these.
First, we consider a classification problem,
for which the bounds in the referenced papers cannot be applied due to its non-Gaussian nature. 
Then, the approach in this paper gives stronger (i.e., non-probabilistic) guarantees, is  guaranteed to converge to  any given error $\epsilon >0$ in finite time, and is anytime (i.e., at any time it gives sound  upper and lower bounds of the classification probabilities).
This also differs from \citet{cardelli2019statistical}, where the authors consider statistical guarantees that require the solution of many non-linear optimisation problems (one for each sample from the posterior distribution).
Our approach also differs from that in \citet{smith2019adversarial},  where the authors give guarantees for GPC in a binary classification setting under the $L_0$-norm and  only consider the mean of the distribution in the latent space without taking into account the uncertainty intrinsic in the GPC framework. In contrast, our approach also considers multi-class classification, takes into account the full posterior distribution and allows for exact (up to $\epsilon>0$) computation under any $L_p$-norm.

\section{BAYESIAN CLASSIFICATION WITH GAUSSIAN PROCESSES}\label{Section:ClassificationIntro}
In this section we provide background for classification with GP priors. %, and briefly review the approximate inference methods employed in the remainder of the paper. 
We consider the classification problem associated to a dataset $\mathcal{D}=\{(x,y) \, | \, x \in \mathbb{R}^d,y \in \{1,...,C \} \}$.
In GPC settings, given a test point $x^* \in \mathbb{R}^d$, the probability assigned by the GPC to $x^*$ belonging to class $c$ is given by:
\begin{equation}
\label{Eq:MultiClassClassification}
    \pi^c(x^*|\mathcal{D}) = \int \sigma^c(\bar{f}) p(f(x^*)=\bar{f} | \mathcal{D}) d\bar f,
\end{equation}
where $f(x^*)=[f^1(x^*), \ldots ,f^C(x^*)]$ is the latent function vector, $\sigma^c:\mathbb{R}^C \to[0,1]$ is the likelihood function for class $c$, $p(f(x^*)=\bar{f} | \mathcal{D})$ is the predictive posterior distribution of the GP, and the integral is computed over the $C$-dimensional latent space \citep{rasmussen2004gaussian}.
The vector of class probabilities, ${\Pi}(x^*)=[\pi^1(x^* | \mathcal{D}),...,\pi^C(x^*| \mathcal{D})]$, can be computed by iterating Eqn \eqref{Eq:MultiClassClassification} for each class $c=1,\ldots,C$.
Of particular interest in applications is the binary classification case (i.e., when C=2), which
%In fact, setting $C=2$ 
leads to a significant simplification of the inference equations and techniques, while still encompassing important practical applications \citep{nickisch2008approximations} (notice that the binary case can be used for multi-class classification as well by means of, e.g., one-vs-all classifiers \citep{rasmussen2004gaussian,hsu2002comparison}).
More specifically, in this case it suffices to compute $\pi (x^* | \mathcal{D})= \int \sigma (\bar{f}) p(f(x^*)=\bar{f} | \mathcal{D})d\bar{f} := \pi^1(x^* | \mathcal{D}) $, with $f$ being a univariate latent function and setting  $\pi^2(x^* | \mathcal{D}) := 1 - \pi (x^* | \mathcal{D})$.
%More specifically, in this case, if we let $\pi (x^* | \mathcal{D})$ denote the probability of $x^*$ belonging to class $1$, then the probability of $x^*$ belonging to class $2$ is simply $1-\pi (x^*|\mathcal{D})$.
%Therefore, it suffices to compute $\pi (x^* | \mathcal{D})= \int \sigma (\bar{f}) p(f(x^*)=\bar{f} | \mathcal{D})d\bar{f},$ with $f$ being a univariate latent function.
In other words, when $C=2$ the latent space of the GPC model is one-dimensional. 
%This simplification can also be used for the multiclass case ($C > 2$) by reducing it to the binary case via one-vs.-all classification \citep{rasmussen2004gaussian},\citep{hsu2002comparison}.

Unfortunately, even under the GP prior assumption, the posterior distribution in classification settings, $p(f(x^*)=\bar{f} | \mathcal{D})$, is non-Gaussian and intractable \citep{rasmussen2004gaussian}.
Several approximation methods have been developed to perform GPC inference, either by sampling (e.g.\ Markov Chain Monte Carlo), or by developing suitable analytic approximations of the posterior distribution.
In this work we focus on Gaussian analytic approximations, that is, we employ GPC methods that perform approximate inference of  $p(f(x^*)=\bar{f} | \mathcal{D})$ by estimating a Gaussian distribution $q(f(x^*)=\bar{f} | \mathcal{D})  = \mathcal{N}(\bar f \, | \, \mu(x^*),\Sigma(x^*)  ) $. 
The latter is then used at inference time in Eqn \eqref{Eq:MultiClassClassification} in place of the exact posterior $p(f(x^*)=\bar{f} | \mathcal{D})$\footnote{With an abuse of notation, in the rest of the paper we will consider $\pi^c(x^*|\mathcal{D}) = \int \sigma^c(\bar{f}) q(f(x^*)=\bar{f} | \mathcal{D}) d\bar f$.}.
In particular, in Section \ref{Sec:ExperimentalResults} we will give experimental results for when $q$ is derived using either \textit{Laplace approximations} \citep{williams1998bayesian} or \emph{Expectation Propagation} (EP) \citep{minka2001expectation}.
However, we remark that the methods presented in this paper do not depend on the particular Gaussian approximation method used
and can be trivially extended to the case where $q$ is a mixture of Gaussian distributions.

\section{ADVERSARIAL ROBUSTNESS}
\label{sec:prob_form}
Given a GPC model trained on a dataset $\mathcal{D}$ and a test point $x^*,$ we are interested in quantifying the adversarial robustness of the GPC in a neighborhood of $x^*$.
To do so, for a compact set $T$ and a class $c \in \{1, \ldots,C \}$, we pose the problem of computing 
the minimum and the maximum that the GPC assigns to the probability of class $c$ in $T$, that is:
%\begin{equation}\label{Eq:infSubROsustness} 
%    \begin{aligned}
%   \pi_{\min}^c(T) &:=\min_{x \in T} \pi^c (x \, | \, \mathcal{D})  \\
%     \pi_{\max}^c(T) &:=\max_{x \in T} \pi^c (x \, | \, \mathcal{D}),
%    \end{aligned}
%\end{equation}
\begin{align}
   \label{Eq:infSubROsustness} 
   \hspace*{-0.1cm} \pi_{\min}^c(T) \hspace*{-0.05cm} := \hspace*{-0.05cm} \min_{x \in T} \pi^c (x | \mathcal{D}) \quad 
     \pi_{\max}^c(T) \hspace*{-0.05cm} :=\hspace*{-0.05cm} \max_{x \in T} \pi^c (x  |  \mathcal{D})
\end{align}
%\begin{align}
%   \piInf{T} &:=\min_{x \in T} \pi^c (x \, | \, \mathcal{D})  \label{Eq:infSubROsustness}  \\  \piSup{T} &:=\max_{x \in T} \pi^c (x \, | \, \mathcal{D}), \nonumber
%\end{align}
%where we omit $c$ in the subscript of the LHS for notational brevity.  %\MK{I think this is confusing - I would keep the subscript in Equation 2 and then say that we simplify as we work with binary and drop it assuming c=1}
The computation of the classification %\MK{you mean extrema not ranges - range is the same as interval} 
extrema in $T$ allows us to determine the reachable interval of class probabilities over $T$.
In the case in which $T$ is defined as a neighborhood around a test point $x^*$, Eqn \eqref{Eq:infSubROsustness} provides a quantification of the local GPC robustness at $x^*$, that is, %of the GPC robustness 
against local adversarial perturbations.
Unfortunately, exact computation of Eqn \eqref{Eq:infSubROsustness} involves the solution of two non-linear optimisation problems, for which no general  solution method exists. 
%Furthermore, in general, the inference equation for $ \pi^c(x^*|\mathcal{D})$ cannot be written in closed form.
Nevertheless, in Section \ref{Sec:BinaryBounds} we derive a branch and bound scheme for the anytime computation of the classification ranges of Eqn \eqref{Eq:infSubROsustness} that is guaranteed to converge in finitely many iterations up to any arbitrary error tolerance $\epsilon > 0 $.

In what remains of this section we discuss two notions of adversarial robustness %\MK{not widely}
%widely 
employed for the  analysis of deep learning models \citep{ruan2018reachability} that arise as particular instances of Eqn \eqref{Eq:infSubROsustness}, which will be investigated in the experimental results discussed in Section \ref{Sec:ExperimentalResults}.
%In fact, computing the classification ranges over $T$ allow us to check whether these two robustness notions are satisfied or not. 
%\MK{You don't say how to check}
\begin{mydef}{(Adversarial Local Robustness)}\label{ProbDef:Robustness}
%Consider the training dataset $\mathcal{D}$. 
Let $T\subseteq \mathbb{R}^d$ and $x^* \in T.$ Then, for $\delta >0$ we say that the classification of $x^*$ is $\delta-$robust in $T$ iff $
    \forall x \in T, \, |\Pi(x^*)-\Pi(x)|\leq \delta,
$
where $|\cdot|$ is a given norm.
\end{mydef}
If $T$ is a $\gamma-$ball around a test point $x^*$, then robustness defined in Definition \ref{ProbDef:Robustness} allows one to quantify how much, in the worst case, the prediction in $x^*$  can be affected by input perturbations of radius no greater than $\gamma$.
Adversarial examples are defined in terms of invariance of the classification in $T$ w.r.t.\ the label of a test point $x^*$. For the case of the Bayesian optimal classifier, this is defined as follows.
\begin{mydef}{(Adversarial Local Safety)}\label{ProbDef:Safety}
%Consider the training dataset $\mathcal{D}$. 
Let $T\subseteq \mathbb{R}^d$ and $x^* \in T$. Then, we say that the classification of $x^*$ is safe in $T$ iff $\forall x \in T, \, \argmax_{c \in \{1,...,C \}}\pi^c(x \vert \mathcal{D}) = \argmax_{c \in \{1,...,C \}}\pi^c(x^* \vert \mathcal{D} ).$
\end{mydef}
Adversarial local safety establishes whether adversarial examples exist in $T$, yielding formal guarantees against adversarial attacks for GPCs. %, such as those developed in \cite{grosse2018limitations}.
If we again consider $T$ to be a $\gamma-$ball around $x^*$, the satisfaction of Definition \ref{ProbDef:Safety} guarantees that it is not possible to cause a misclassification by perturbing $x^*$ by a magnitude of up to $\gamma$\footnote{In the multiclass case to check if $x^*$ is safe in $T$, for $\bar{c}=\argmax_{c \in \{1,...,C \}} \pi^c(x^*| \mathcal{D}),$ we need to check that $\min_{x\in T} \left( \pi^{\bar{c}}(x | \mathcal{D}) - \max_{c\neq \bar{c}}\pi^{
{c}}(x | \mathcal{D})  \right)>0$, which can be computed with a trivial extension of the results presented in this paper.}.
\section{BOUNDS FOR BINARY CLASSIFICATION}
\label{Sec:BinaryBounds}
In this section we show how the classification ranges of a two-class GPC model in any given compact set $T \subseteq \mathbb{R}^d$ can be computed up to any arbitrary precision $\epsilon > 0 $. As explained in Section \ref{Section:ClassificationIntro}, the latent space of the GPC model is one-dimensional in this case, and we thus omit the class superscript $c$ in this section. 
The extension to the multi-class scenario is then described in Section \ref{Sec:Multiclass}.
Proofs for the results %Propositions and Theorems 
stated %in this section 
are given in the Supplementary Material. 

\textbf{Outline of Approach }
\begin{figure*}
	\centering
	 \includegraphics[width = 0.76\textwidth]{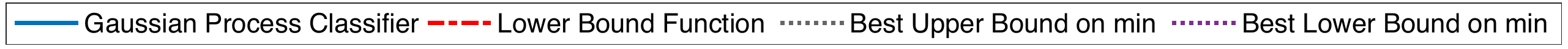} \\
	\includegraphics[width = 0.36\textwidth]{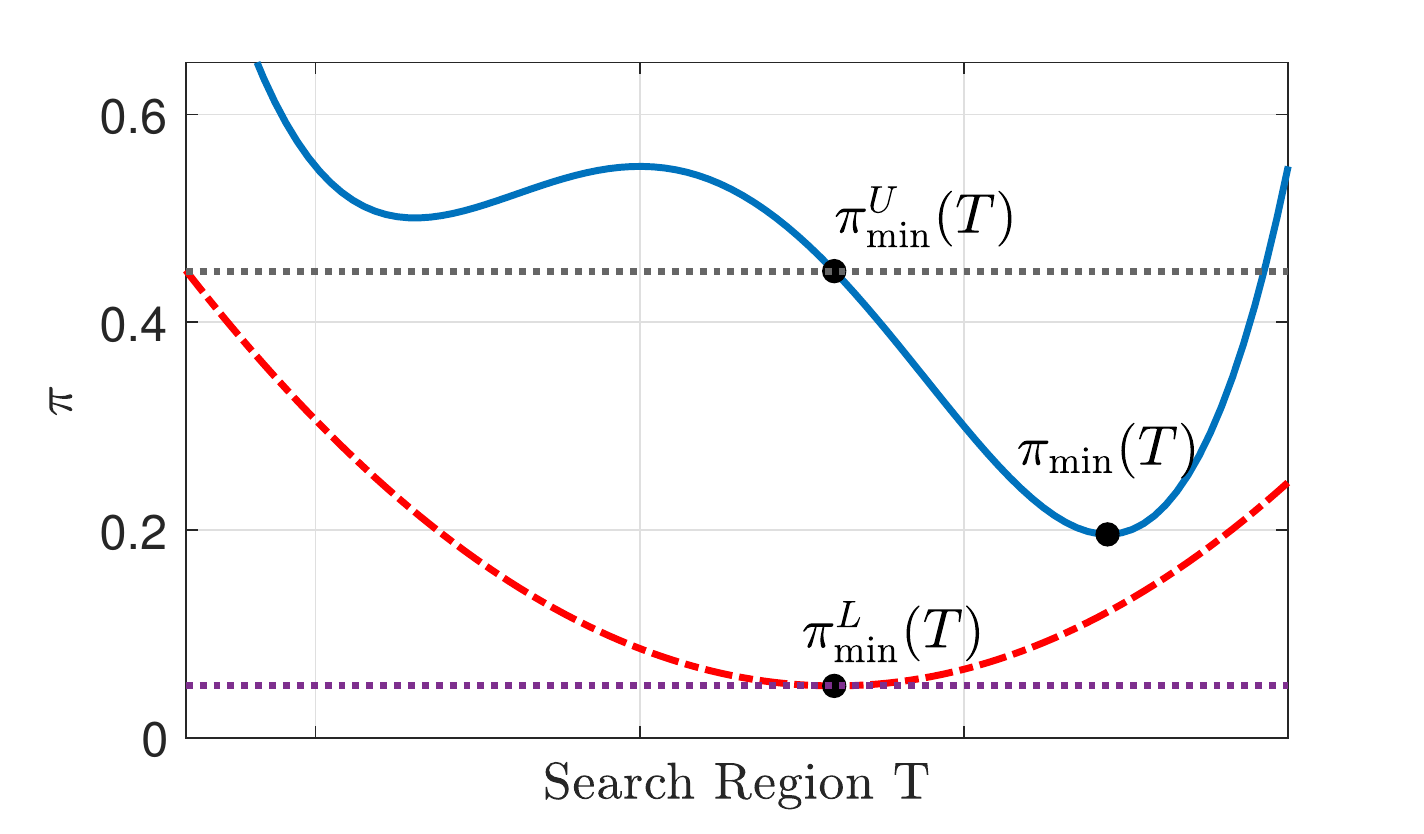}  
	\includegraphics[width = 0.36\textwidth]{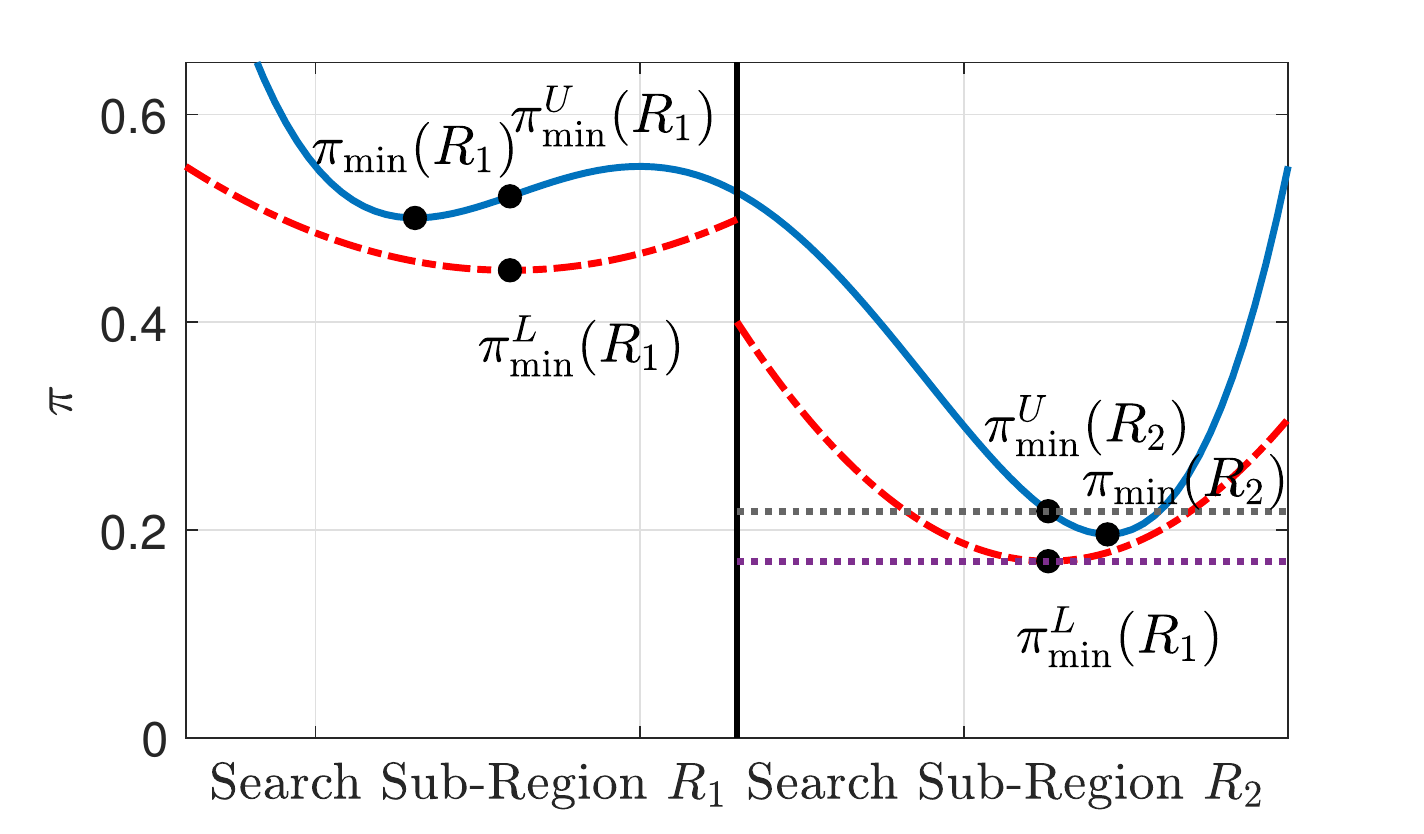}
	\caption{\textbf{Left:} Computation of upper and lower bounds on $\piInf{T}$, i.e. the minimum  of the classification range on the search region $T$. \textbf{Right:} The search region is repeatedly partitioned into sub-regions according to Algorithm \ref{alg:bnb_sketch} (only first partitioning visualised), reducing the gap between best lower and upper bounds until convergence (up to $\epsilon$) is reached. }
	\label{fig:branch_and_bound}
\end{figure*}
An outline of our approach is depicted in Figure \ref{fig:branch_and_bound} for the computation of $\piInf{T}$ %\footnote{In the case of binary classification, as explained in Section \ref{Section:ClassificationIntro}, we can omit the class superscript $c.$}
over a one-dimensional set $T$ plotted along the x-axis (the method for the computation of $\piSup{T}$ is analogous).
For any given region $T$ we aim to compute lower and upper bounds on both $\piInf{T}$ and $\piSup{T}$, that is, we compute real values $\piInfL{T}$, $\piInfU{T}$, $\piSupL{T}$ and $\piSupU{T}$ such that:
\begin{align}\label{eq:lower_upper_condition1}
    \piInfL{T} \leq &\piInf{T} \leq \piInfU{T} \\ 
 \label{eq:lower_upper_condition2}   \piSupL{T} \leq &\piSup{T} \leq \piSupU{T}.
\end{align}
In order to do so, we compute a lower and an upper bound function (the lower bound function is depicted with a dashed red curve in Figure \ref{fig:branch_and_bound}) to the GPC output (solid blue curve) in the region $T$. % by applying Proposition \ref{Theorem:BOundsGenericBIclass}.
We then find the minimum of the lower bound function, $\piInfL{T}$ (shown in the plot), and the maximum of the upper bound function, $\piSupU{T}$ (not shown).
%By definition of lower and upper bounds, $\piInfL{T}$ and $\piSupU{T}$ thus computed satisfy Eqns \eqref{eq:lower_upper_condition1} and  \eqref{eq:lower_upper_condition2}, respectively. 
Then, valid values for $\piInfU{T}$ and $\piSupL{T}$ can be computed by evaluating the GPC on any point in $T$ (a specific $\piInfU{T}$ is depicted in Figure \ref{fig:branch_and_bound}).
%We describe in the Supplementary Material how these can be computed in practice in order to speed up convergence.
Finally, we iteratively refine the lower and upper bounds computed in $T$ with a branch and bound algorithm.
Namely, the region $T$ is recursively subdivided into sub-regions, for which we compute new (tighter) bounds, until these converge up to a desired tolerance $\epsilon > 0$.

\textbf{Computation of Bounds}
In this paragraph we show how to compute $\piSupU{T}$, an upper bound on the maximum, and $\piInfL{T}$,  a lower bound on the minimum of the GPC outputs. 
We work on the assumption that the likelihood function $\sigma(f)$ is a monotonic, non-decreasing, and continuous function of the latent variable (notice that this is satisfied by commonly used likelihood functions, e.g., logistic and probit  \citep{kim2006bayesian}).
In the following proposition we show how the GPC output can be upper- and lower-bounded in $T$ by a summation of Gaussian integrals.

%For the binary case we assume that the likelihood function $\sigma(f)$ is a monotonic, non-decreasing, and continuous function of the latent variable, assumptions that are satisfied by all commonly used likelihood functions (e.g.\ logistic and probit \cite{kim2006bayesian}). 
%We can then derive Proposition \ref{Theorem:BOundsGenericBIclass}, which guarantees that $\piInf{T}$ and $\piSup{T}$ can be bounded by a summation of Gaussian integrals.
%Its proof, as well as the proof of all other propositions and theorems in the paper, can be found in Appendix.

\begin{proposition}
\label{Theorem:BOundsGenericBIclass}% \ap{Partizione ad intervalli}
Let $\mathcal{S}=\{S_i \mid i \in \{1,...N \}\}$ be a partition of $\mathbb{R}$ (the latent space) in a finite set of intervals. Call $a_i=\inf_{\bar f \in S_i} \bar f$ and $b_i=\sup_{\bar f \in S_i} \bar f$. Then,  it holds that:
%\begin{align}\label{Eq:LowerBOundGenericBidimensional}
%    \sum_{i=1}^{N}  \sigma(a_i) \inf_{x \in T} \int_{a_i}^{b_i} \mathcal{N}(\bar f | \mu(x),\Sigma(x))d \bar{f}\leq   \pi(x | \mathcal{D})  \leq \sum_{i=1}^{N}  \sigma(b_i) \sup_{x \in T} \int_{a_i}^{b_i} \mathcal{N}(\bar f | \mu(x),\Sigma(x))d \bar{f}
%\end{align}
\begin{align}
    \label{Eq:LowerBOundGenericBidimensional}
    & \piInf{T} \geq  \sum_{i=1}^{N}  \sigma(a_i) \min_{x \in T} \int_{a_i}^{b_i} \mathcal{N}(\bar f | \mu(x),\Sigma(x))d \bar{f}
    \\
    \label{Eq:UpperBOundGenericBidimensional}
    & \piSup{T} \leq  \sum_{i=1}^{N}  \sigma(b_i) \max_{x \in T} \int_{a_i}^{b_i} \mathcal{N}(\bar f | \mu(x),\Sigma(x))d \bar{f},
\end{align}
where $\mu(x)$ and $\Sigma(x)$ are mean and variance of the predictive posterior $q(f(x)=\bar f | \mathcal{D})$.
\end{proposition}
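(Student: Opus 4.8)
The plan is to establish the two inequalities by first bounding the GPC output $\pi(x|\mathcal{D})$ pointwise, for every fixed $x \in T$, via a decomposition of the defining integral over the partition $\mathcal{S}$, and then transferring those pointwise bounds to the extrema $\piInf{T}$ and $\piSup{T}$. Fixing an arbitrary $x \in T$ and using that $\mathcal{S}$ partitions the latent space $\mathbb{R}$, I would write
\[
\pi(x|\mathcal{D}) = \int_{\mathbb{R}} \sigma(\bar f)\, \mathcal{N}(\bar f | \mu(x),\Sigma(x))\, d\bar f = \sum_{i=1}^{N} \int_{a_i}^{b_i} \sigma(\bar f)\, \mathcal{N}(\bar f | \mu(x),\Sigma(x))\, d\bar f .
\]
On each interval $S_i$, the monotonicity and non-decreasingness of $\sigma$ give $\sigma(a_i) \le \sigma(\bar f) \le \sigma(b_i)$ for $\bar f \in S_i$, and since the Gaussian density is non-negative, the integrand is sandwiched between $\sigma(a_i)\mathcal{N}(\cdot)$ and $\sigma(b_i)\mathcal{N}(\cdot)$. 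Integrating over each $S_i$ and summing yields, for every $x \in T$,
\[
\sum_{i=1}^{N} \sigma(a_i) \int_{a_i}^{b_i} \mathcal{N}(\bar f | \mu(x),\Sigma(x))\, d\bar f \;\le\; \pi(x|\mathcal{D}) \;\le\; \sum_{i=1}^{N} \sigma(b_i) \int_{a_i}^{b_i} \mathcal{N}(\bar f | \mu(x),\Sigma(x))\, d\bar f .
\]

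The second step is to push the minimum and maximum inside the respective sums. For the lower bound \eqref{Eq:LowerBOundGenericBidimensional}, I note that $\sigma(a_i) \ge 0$ and each Gaussian integral is non-negative, so for every $i$ and every $x \in T$ we have $\int_{a_i}^{b_i}\mathcal{N}(\bar f|\mu(x),\Sigma(x))\,d\bar f \ge \min_{x' \in T}\int_{a_i}^{b_i}\mathcal{N}(\bar f|\mu(x'),\Sigma(x'))\,d\bar f$. Weighting by $\sigma(a_i)$ and summing, the right-hand side becomes a constant independent of $x$ that lower-bounds $\pi(x|\mathcal{D})$ for all $x \in T$; taking the minimum over $x$ on the left then gives exactly \eqref{Eq:LowerBOundGenericBidimensional}. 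The upper bound \eqref{Eq:UpperBOundGenericBidimensional} follows by the symmetric argument with $\max$ in place of $\min$, using the elementary facts that the minimum of a sum is at least the sum of the minima and the maximum of a sum is at most the sum of the maxima.

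The argument is essentially elementary once this decomposition is in place, so there is no single hard obstacle; the points that need care are bookkeeping ones. I would make sure to (i) handle the possibly unbounded end intervals of the partition, where $a_1 = -\infty$ or $b_N = +\infty$, by reading $\sigma(a_i)$ and $\sigma(b_i)$ as the limits that exist because $\sigma$ is monotone and bounded in $[0,1]$, with the corresponding improper Gaussian integrals being well defined; and (ii) observe that whether the intervals $S_i$ are open, half-open, or closed is immaterial, since the Gaussian measure is absolutely continuous and assigns zero mass to the endpoints, so neither the interval bounds nor the pointwise $\sigma$-inequalities are affected. The non-negativity of $\sigma$ (as a map into $[0,1]$) is precisely what licenses factoring the constants $\sigma(a_i)$ and $\sigma(b_i)$ out of the $\min$ and $\max$ without any reversal of the inequalities.
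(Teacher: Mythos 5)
Your proof is correct and follows essentially the same route as the paper's: decompose the inference integral over the latent-space partition, replace $\sigma(\bar f)$ by its endpoint values $\sigma(a_i)$, $\sigma(b_i)$ via monotonicity and non-negativity of the Gaussian density, and then exchange the extremum over $T$ with the sum. Your added care about unbounded end intervals and endpoint conventions is a minor (and sound) refinement the paper leaves implicit, not a different argument.
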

Proposition \ref{Theorem:BOundsGenericBIclass} guarantees that the GPC output in $T$ can be bounded by solving $N$ optimisation problems. Each of these problems seeks to find the mean and variance that maximise or minimise the integral of a Gaussian over $T$.
This has been studied by \cite{cauchi2019efficiency} for variance-independent points and is generalised in the following proposition. 
We introduce the following notation for lower and upper bounds on mean and variance in $T$:
\begin{align} \label{eq:mean_variance_bounds1}
\mu^L_{T} &\leq \min_{x \in T} \mu(x) \quad \; \mu^U_T\geq \max_{x \in T}\mu(x)\quad \\ 
\label{eq:mean_variance_bounds2} \Sigma^L_{T} &\leq \min_{x \in T} \Sigma(x) \quad \Sigma^U_T \geq \max_{x \in T}\Sigma(x),
\end{align}
Then by inspection of the derivatives of the integrals in Eqns \eqref{Eq:LowerBOundGenericBidimensional} and \eqref{Eq:UpperBOundGenericBidimensional} the following proposition follows.
\begin{proposition}\label{Prop:Gaussian}
Let $\mu^m = \frac{a+b}{2}$ and $\Sigma^m(\mu) = \frac{{(\mu-a)^2-(\mu-b)^2}}{{2 \log \frac{\mu-a}{\mu-b}}} $. Then it holds that:
\begin{align}
    &\max_{x \in T}\int_{a}^{b} \mathcal{N}(\bar f | \mu(x),\Sigma(x))d \bar{f} \leq \int_{a}^{b} \mathcal{N}( \bar {f} \vert \overline{\mu} , \overline{\Sigma} ) d \bar{f} \nonumber \\
    &= \frac{1}{2} \left( \erf \left(\frac{\overline{\mu}-a}{\sqrt{2\overline{\Sigma}}}\right) - \erf \left(\frac{\overline{\mu}-b}{\sqrt{2\overline{\Sigma}}}\right)  \right) \label{eq:gen_like2}\\
    &\min_{x \in T}\int_{a}^{b} \mathcal{N}(\bar f | \mu(x),\Sigma(x))d \bar{f}  \geq \int_{a}^{b} \mathcal{N}( \bar {f} \vert \underline{\mu} , \underline{\Sigma} ) d \bar{f} \nonumber \\
    &= \frac{1}{2} \left ( \erf \left(\frac{\underline{\mu}-a}{\sqrt{2\underline{\Sigma}}}\right) - \erf \left(\frac{\underline{\mu}-b}{\sqrt{2\underline{\Sigma}}}\right)  \right) \label{eq:inf_gauss_int}
\end{align}
where: $\overline{\mu} = \argmin_{\mu \in [ \mu^L_T, \mu^U_T ]} \vert \mu^m - \mu \vert$ and $\overline{\Sigma}$ is equal to $\Sigma^L_T$ if $\overline{\mu} \in [a,b]$, otherwise  $\overline{\Sigma} = \argmin_{\Sigma \in [\Sigma^L_T, \Sigma^U_T]} \vert \Sigma^m(\overline{\mu}) - \Sigma \vert$. Analogously, for the minimum we have: $\underline{\mu} = \argmax_{\mu \in [ \mu^L_T, \mu^U_T ]} \vert \mu^m - \mu \vert $ and $\underline{\Sigma} =\argmin_{\Sigma \in \{\Sigma^L_T, \Sigma^U_T\}} [\erf(b \vert \underline{\mu}, \Sigma) - \erf(a \vert \underline{\mu}, \Sigma)] $.
\end{proposition}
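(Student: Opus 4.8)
The plan is to first relax the feasible set. Since $x\mapsto(\mu(x),\Sigma(x))$ maps $T$ into the box $[\mu^L_T,\mu^U_T]\times[\Sigma^L_T,\Sigma^U_T]$ by Eqns \eqref{eq:mean_variance_bounds1}--\eqref{eq:mean_variance_bounds2}, optimising the Gaussian integral freely over this box yields an upper bound for the maximum and a lower bound for the minimum. It therefore suffices to optimise
$$G(\mu,\Sigma):=\int_a^b \mathcal{N}(\bar f\mid\mu,\Sigma)\,d\bar f=\frac12\left(\erf\left(\tfrac{\mu-a}{\sqrt{2\Sigma}}\right)-\erf\left(\tfrac{\mu-b}{\sqrt{2\Sigma}}\right)\right)$$
over the rectangle, the closed form following from the definition of $\erf$ and $\erf(-z)=-\erf(z)$. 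I would then exploit separability, writing $\max_{\mu,\Sigma}G=\max_\Sigma\max_\mu G$ and symmetrically for the minimum, so that the $\mu$- and $\Sigma$-optimisations are carried out in sequence.

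Second, I analyse $G(\cdot,\Sigma)$ for fixed $\Sigma$. Differentiating, $\partial G/\partial\mu$ is proportional to $\phi\big(\tfrac{\mu-a}{\sqrt\Sigma}\big)-\phi\big(\tfrac{\mu-b}{\sqrt\Sigma}\big)$ with $\phi$ the standard normal density; since $\phi$ is strictly decreasing in $|z|$, this vanishes precisely at the midpoint $\mu^m=(a+b)/2$, is positive for $\mu<\mu^m$ and negative for $\mu>\mu^m$. Hence $G(\cdot,\Sigma)$ is unimodal with peak at $\mu^m$ regardless of $\Sigma$. Over $[\mu^L_T,\mu^U_T]$ the maximiser is therefore the point closest to $\mu^m$, namely $\overline\mu$, and the minimiser is the endpoint farthest from $\mu^m$, namely $\underline\mu$; both are independent of $\Sigma$, which legitimises fixing $\mu=\overline\mu$ (resp.\ $\underline\mu$) before treating $\Sigma$.

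Third, and this is where the real work lies, I study $G(\mu,\cdot)$ for fixed $\mu$. Setting $s=\sqrt\Sigma$, the derivative $\partial G/\partial\Sigma$ shares the sign of $(\mu-b)\phi\big(\tfrac{\mu-b}{s}\big)-(\mu-a)\phi\big(\tfrac{\mu-a}{s}\big)$. When $\mu\in[a,b]$ both contributions are nonpositive, so $G$ is strictly decreasing in $\Sigma$; the maximum over $[\Sigma^L_T,\Sigma^U_T]$ is then at $\Sigma^L_T$, giving $\overline\Sigma=\Sigma^L_T$, and the minimum at $\Sigma^U_T$. When $\mu\notin[a,b]$, equating the two terms and taking logarithms yields the unique critical point $\Sigma=\Sigma^m(\mu)=\frac{(\mu-a)^2-(\mu-b)^2}{2\log\frac{\mu-a}{\mu-b}}$; together with $G\to0$ as $\Sigma\to0^+$ and as $\Sigma\to\infty$, uniqueness forces this critical point to be a maximum and $G(\mu,\cdot)$ to be unimodal. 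The maximiser over $[\Sigma^L_T,\Sigma^U_T]$ is thus the projection of $\Sigma^m(\overline\mu)$ onto the interval, matching $\overline\Sigma$, while the minimiser must be one of the two endpoints $\Sigma^L_T,\Sigma^U_T$, which is precisely the stated $\underline\Sigma$. The main obstacle is this $\Sigma$-analysis: deriving the transcendental critical-point formula and proving it is the unique interior extremum, so that the optimum always lies either at $\Sigma^m(\mu)$ or at an endpoint. Substituting $(\overline\mu,\overline\Sigma)$ and $(\underline\mu,\underline\Sigma)$ into the closed form for $G$ then yields Eqns \eqref{eq:gen_like2} and \eqref{eq:inf_gauss_int}.
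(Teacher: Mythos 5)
Your proposal is correct and follows essentially the same route as the paper's proof: relax the optimisation from $x \in T$ to the box $[\mu^L_T,\mu^U_T]\times[\Sigma^L_T,\Sigma^U_T]$, write the integral in closed form via $\erf$, and determine the extrema by sign analysis of the partial derivatives in $\mu$ and $\Sigma$, which yields exactly the critical points $\mu^m$ and $\Sigma^m(\mu)$ and the endpoint cases. The only step to tighten is your claim that the $\mu$-minimiser is the endpoint farthest from $\mu^m$: unimodality alone does not imply this, and one must also use that the integral is symmetric in $\mu$ about $\mu^m$ (i.e.\ $G(\mu^m+t,\Sigma)=G(\mu^m-t,\Sigma)$), a fact the paper invokes explicitly and which follows immediately from $\erf(-z)=-\erf(z)$.
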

%\MK{Define erf?}
That is, given lower and upper bounds for the a-posteriori mean and variance in $T$, Proposition \ref{Prop:Gaussian} allows us to analytically bound the $N$ optimisations of Gaussian integrals posed by Equations \eqref{Eq:LowerBOundGenericBidimensional} and \eqref{Eq:UpperBOundGenericBidimensional}. Through this, we can compute values for $\piInfL{T}$ and $\piSupU{T}$,
%\MK{Eqn vs Equation}
which satisfy the LHS of Eqn \eqref{eq:lower_upper_condition1} and the RHS of Eqn \eqref{eq:lower_upper_condition2}. 
Furthermore, note that by definition of $\piInf{T}$ and $\piSup{T}$,
we have that, for every $\bar{x} \in T$, setting $\piInfU{T} = \piSupL{T} = \pi(\bar{x})$ provides values which satisfy the RHS of Eqn \eqref{eq:lower_upper_condition1} and the LHS of Eqn \eqref{eq:lower_upper_condition2} (in the Supplementary Material we discuss how to pick values for $\bar{x}$ to speed up convergence).
Details on the computation of bounds for the a-posteriori mean and variance are discussed in the Supplementary Material.
Interestingly, when the (scaled) probit function is chosen for the likelihood, $\sigma(f)$, then the inference integral over $q(f(x^*)=\bar{f} | \mathcal{D})$ can be expressed in closed form \citep{rasmussen2004gaussian}, which
leads to a simplification of Proposition \ref{Theorem:BOundsGenericBIclass}. Details are given in the Supplementary Material.
\textbf{Branch and Bound Algorithm }
%\subsection{A Branch and Bound Algorithm for \rev{Convergence}}
%\label{Sec:AlgorithmBrench}
%In the previous subsection, we discussed how the extrema of a GPC in a  compact set $T$ can be bounded.
In this paragraph we implement the bounding procedure into a branch and bound algorithm and prove  convergence up to any a-priori specified  $\epsilon > 0 $. 
%Consider a GPC and a compact set $T$. 
We summarise our method for computation of $\piInf{T}$ in Algorithm \ref{alg:bnb_sketch}, which we now briefly describe (analogous arguments hold for $\piSup{T}$).
After initialising $\piInfL{T}$ and $\piInfU{T}$ to trivial values and initialising 
the exploration regions stack $\mathbf{R}$ to the singleton $\{T\}$, the main optimisation loop is entered until convergence (lines 2--9).
Among the regions in the stack, we select the region $R$ with the most promising lower bound (line 3), and refine its lower bounds using Propositions \ref{Theorem:BOundsGenericBIclass} and \ref{Prop:Gaussian} (lines 4--5) as well as its upper bounds through evaluation of points in $R$ (line 6). If further exploration of $R$ is necessary for convergence (line 7), then the region $R$ is partitioned into two smaller regions $R_1$ and $R_2$, which are added to the  regions stack and inherit $R$'s bound values (line 8).
Finally, the freshly computed bounds local to $R \subseteq T$ are used to update the global bounds for $T$ (line 9). Namely, $\piInfL{T}$ is updated to the smallest value among the $\piInfL{R}$ values for $R \in \mathbf{R}$, while $\piInfU{T}$ is set to the lowest observed value yet explicitly computed in line 6.
%
%\vspace{-0.5em}
\begin{spacing}{1.0}
\begin{algorithm} 
\caption{Branch and bound for  $\piInf{T}$}\label{alg:bnb_sketch}
\textbf{Input:} Input space subset $T$; error tolerance $\epsilon>0$; latent mean/variance functions $\mu(\cdot)$ and $\Sigma(\cdot)$ of $q(f(x)=\bar{f} | \mathcal{D})$ \\
\textbf{Output:} Lower and upper bounds on $\piInf{T}$ with $\piInfU{T} - \piInfL{T} \leq \epsilon$
\begin{algorithmic}[1]
\State \emph{Initialisation:} Stack of regions $\mathbf{R} \gets \{ T \}$; \quad $\piInfL{T} \gets - \infty $; \quad $\piInfU{T} \gets + \infty $
\While{ $\piInfU{T} - \piInfL{T} > \epsilon$}
\State Select region $R \in \mathbf{R}$ with lowest bound $\piInfL{R}$
\Statex \hspace{0.5cm}  and delete it from stack
\State Find bounds $[\mu^L_R,\mu^U_R]$ and $[\Sigma^L_R,\Sigma^U_R]$ for latent
\Statex \hspace{0.5cm} mean and variance functions over $R$
%\State $[\Sigma^L_R\Sigma^U_R] \gets $ Solve Eqn \eqref{eq:mean_variance_bounds} in $R$ for ranges of $\Sigma(x)$
%\State $\piInfL(R) \gets 0$
\State Compute $\piInfL{R}$ from $[\mu^L_R,\mu^U_R]$ and $[\Sigma^L_R\Sigma^U_R]$ \Statex \hspace{0.5cm} using Propositions \ref{Theorem:BOundsGenericBIclass} and \ref{Prop:Gaussian}
\State Find $\piInfU{R}$ by evaluating GPC in a point in $R$
\If{$\piInfU{R} - \piInfL{R} > \epsilon$}
\State Split $R$ into two sub-regions $R_1,R_2$, add them 
\Statex \hspace{1cm} to stack and use $\piInfL{R},\piInfU{R}$ as initial 
\Statex \hspace{1cm} bounds for both sub-regions
%\State $\piInfL(R_j)$
\EndIf
%\State $\piInfU(T) \gets \min \left( \piInfU(T), \piInfU(R) \right)$;  $\piInfL(T) \gets \max$ of $\piInfL(R^*)$ for  $R^*$ in $\mathbf{R}$ 
%\If{$\piInfU{R} < \piInfU{T}$}
\State Update $\piInfL{T}$ and $\piInfU{T}$ with current best 
\Statex \hspace{0.5cm} bounds found 
%\State Update current best bounds $\piInfL{T},\piInfU{T}$ as 
%\Statex \hspace{0.5cm} lowest (highest) $\piInfL{R}$ ($\piInfU{R}$)
%\EndIf
\EndWhile
%\State $n^C  $
\State \textbf{return} $[\piInfL{T},\piInfU{T}]$
%\WHILE{$ n < n_{\max}$}
%\STATE $w  sample from $p(w | )$
%\\ENDWHILE
%\STATE return $\hat{p}$
\end{algorithmic}
\end{algorithm}
\end{spacing}
%\vspace{-0.5em}
%In order to prove that the algorithm terminates in a finite time, we show that for any $\epsilon$
For our approach to work, it is crucial that Algorithm 1 converges, i.e.\ that the loop of lines $2-9$ terminates.
Given an a-priori specified threshold $\epsilon$, Theorem \ref{TH:convergenceBrenchAndBOund} ensures that there exists a latent space discretisation such that the bounding error (i.e.\ the difference between the upper and lower bound) vanishes.
Thanks to the properties of branch and bound algorithms \citep{balakrishnan1991branch}, this guarantees that our method %for the computation of adversarial global \MK{we don't discuss global} robustness of GPC will 
converges in finitely many iterations.
%The following theorem guarantees the convergence of our method to the actual values up to any a-priori specified error threshold $\epsilon$.
%\begin{theorem}
%\label{TH:convergenceBrenchAndBOund}
%Assume $\mu:\mathbb{R}^d \to \mathbb{R}$ and $\Sigma:\mathbb{R}^d \to \mathbb{R}$ are Lipschitz continuous in $T\subseteq \mathbb{R}^{m}$.
%Then, for any $\epsilon >0$ and $x \in T$, there exists a partition of the latent space $S$, and a radius of the branch and bound $r$ \LL{TO be defined} such that $$  |\pi(x) - \pi^{inf}(S,r)|\leq \epsilon \quad  \wedge \quad |\pi(x) - \pi^{sup}(S,r)|  \leq \epsilon,$$
%\end{theorem}
%\LL{Proof to be updated.}
\begin{theorem}
\label{TH:convergenceBrenchAndBOund}
Assume $\mu:\mathbb{R}^d \to \mathbb{R}$ and $\Sigma:\mathbb{R}^d \to \mathbb{R}$ are Lipschitz continuous in $T\subseteq \mathbb{R}^{m}$.
Then, for $\epsilon >0$, there exists a partition of the latent space $\mathcal{S}$ and $r>0$  such that, for every $R \subseteq T$ of side length of less than $r$, it holds that $\vert \piInfU{R} - \piInfL{R} \vert \leq \epsilon $ and  $\vert \piSupU{R} - \piSupL{R} \vert  \leq \epsilon.$
\end{theorem}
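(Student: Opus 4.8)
The plan is to bound the gap $\piInfU{R}-\piInfL{R}$ (the argument for $\piSupU{R}-\piSupL{R}$ is symmetric) by splitting it into a \emph{discretisation error}, coming from replacing $\sigma$ by its endpoint values $\sigma(a_i),\sigma(b_i)$ on each latent interval $S_i$, and an \emph{optimisation error}, coming from the spread of the posterior mean and variance over $R$. Writing $I_i(\mu,\Sigma)=\int_{a_i}^{b_i}\mathcal{N}(\bar f\mid\mu,\Sigma)\,d\bar f$ and taking the evaluation point $\bar x\in R$ that defines $\piInfU{R}=\pi(\bar x)$, with $(\mu_*,\Sigma_*)=(\mu(\bar x),\Sigma(\bar x))$, monotonicity of $\sigma$ gives $\pi(\bar x)\le\sum_i\sigma(b_i)I_i(\mu_*,\Sigma_*)$, while by Propositions \ref{Theorem:BOundsGenericBIclass}--\ref{Prop:Gaussian} the computed lower bound is $\piInfL{R}=\sum_i\sigma(a_i)I_i(\underline{\mu}_i,\underline{\Sigma}_i)$, where $(\underline{\mu}_i,\underline{\Sigma}_i)$ minimises $I_i$ over the box $B_R=[\mu^L_R,\mu^U_R]\times[\Sigma^L_R,\Sigma^U_R]$. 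Adding and subtracting $\sigma(a_i)I_i(\mu_*,\Sigma_*)$ and summing yields
\begin{align}
\piInfU{R}-\piInfL{R} &\le \underbrace{\sum_i \big(\sigma(b_i)-\sigma(a_i)\big)\,I_i(\mu_*,\Sigma_*)}_{\text{(discretisation)}} \nonumber \\
&\quad + \underbrace{\sum_i \sigma(a_i)\big(I_i(\mu_*,\Sigma_*)-I_i(\underline{\mu}_i,\underline{\Sigma}_i)\big)}_{\text{(optimisation)}}. \nonumber
\end{align}

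For the discretisation error I would exploit that $\mu,\Sigma$ are continuous on the compact set $T$, so the admissible posterior means lie in a bounded interval and the variances in a compact interval bounded away from $0$; hence a single $M$ makes the Gaussian tail mass $\int_{|\bar f|>M}\mathcal{N}(\bar f\mid\mu,\Sigma)\,d\bar f$ uniformly small over all admissible $(\mu,\Sigma)$. On $[-M,M]$ the likelihood $\sigma$ is uniformly continuous, so I would \emph{first} choose the partition $\mathcal{S}$ fine enough that $\sigma(b_i)-\sigma(a_i)<\delta$ on every interval meeting $[-M,M]$, whose contribution is then at most $\delta\sum_i I_i\le\delta$; the tail intervals contribute at most the uniform tail mass, since $\sigma(b_i)-\sigma(a_i)\le1$. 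Taking $\delta$ of order $\epsilon$ drives the discretisation error below $\epsilon/2$ uniformly in $(\mu_*,\Sigma_*)$, and crucially this \emph{fixes} the number $N$ of latent intervals.

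The hard part is the optimisation error, precisely because it is a sum of $N$ terms and a naive per-interval Lipschitz bound on $I_i$ sums to a quantity growing with $N$; one therefore cannot refine the partition and shrink $R$ simultaneously. The resolution is exactly the order of quantifiers in the statement: having fixed $\mathcal{S}$ (hence $N$) above, I then shrink $r$. Each $I_i$ is smooth in $(\mu,\Sigma)$ on the admissible compact range, so $0\le I_i(\mu_*,\Sigma_*)-I_i(\underline{\mu}_i,\underline{\Sigma}_i)\le (\mu^U_R-\mu^L_R)\sup|\partial_\mu I_i|+(\Sigma^U_R-\Sigma^L_R)\sup|\partial_\Sigma I_i|$, and Lipschitz continuity of $\mu,\Sigma$ makes the interval widths $\mu^U_R-\mu^L_R$ and $\Sigma^U_R-\Sigma^L_R$ of order $\mathrm{diam}(R)\le\sqrt{d}\,r$, while the endpoint densities entering $\partial_\mu I_i,\partial_\Sigma I_i$ are uniformly bounded because $\Sigma$ is bounded away from $0$. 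Hence the optimisation error is at most $N\,C\,r$ for a constant $C$ depending only on $\mathcal{S}$ and the ranges of $\mu,\Sigma$, and choosing $r<\epsilon/(2NC)$ makes it below $\epsilon/2$, so $\piInfU{R}-\piInfL{R}\le\epsilon$. Repeating the computation with the roles of $\sigma(a_i),\sigma(b_i)$ swapped and the maximisers $(\overline{\mu}_i,\overline{\Sigma}_i)$ gives $\piSupU{R}-\piSupL{R}\le\epsilon$ for the same $\mathcal{S}$ and $r$. I expect the only delicate bookkeeping to be the uniform tail estimate and the two unbounded end intervals ($a_1=-\infty$, $b_N=+\infty$), where a single finite endpoint enters $I_i$ and its derivatives and the treatment goes through unchanged.
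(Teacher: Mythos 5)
Your proof is correct, and its skeleton is the same as the paper's: split the gap into a discretisation error (replacing $\sigma$ by interval endpoint values) plus an optimisation error (spread of $(\mu,\Sigma)$ over $R$), fix the latent partition \emph{first} so the discretisation error is below $\epsilon/2$ independently of $R$, and only then shrink $r$ so that the remaining sum of a \emph{fixed} number $N$ of terms is below $\epsilon/2$ — you correctly identify this quantifier ordering as the crux. The two executions differ in instructive ways. For the discretisation step, the paper avoids your tail bookkeeping entirely by partitioning the \emph{output} axis of $\sigma$ uniformly, i.e.\ choosing $\mathcal{S}$ so that $\sigma(b_i)=\sigma(a_i)+\frac{1}{N}$ for every interval (possible since $\sigma$ is continuous, monotonic and bounded); then the discretisation error is at most $\frac{1}{N}\sum_i I_i \leq \frac{1}{N}$ with no compactness-of-range argument, no uniform tail estimate, and no special treatment of the unbounded end intervals, whereas your route needs $\Sigma$ bounded away from zero on $T$ and a uniform Gaussian tail bound (both legitimate, via positivity and continuity of the posterior variance on compact $T$, but extra work). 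For the optimisation step the roles reverse: the paper argues only qualitatively that each of the $N$ terms $\vert I_i(\mu(\bar x),\Sigma(\bar x))-I_i(\mu(x^i),\Sigma(x^i))\vert$ vanishes as $r\to 0$ and invokes finiteness of $N$, while your derivative bound gives an explicit rate $NCr$ and, moreover, is phrased directly in terms of the box $[\mu^L_R,\mu^U_R]\times[\Sigma^L_R,\Sigma^U_R]$ used by Proposition \ref{Prop:Gaussian} — i.e.\ the quantity the algorithm actually computes — whereas the paper's proof bounds the gap only to the Proposition \ref{Theorem:BOundsGenericBIclass}-level quantity $\sum_i\sigma(a_i)\min_{x\in R}I_i(\mu(x),\Sigma(x))$. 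The one caveat in your version is the claim that the box widths $\mu^U_R-\mu^L_R$ and $\Sigma^U_R-\Sigma^L_R$ are $O(r)$: this is immediate for the exact extrema by Lipschitz continuity, but for the \emph{computed} outer bounds it is an implicit tightness assumption on the bounding procedure; the paper's proof silently sidesteps the same issue, so this does not put you below its standard of rigor.
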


\paragraph{Computational Complexity}

Proposition \ref{Prop:Gaussian} implies that the bounds in Proposition \ref{Theorem:BOundsGenericBIclass} can be obtained in $\mathcal{O}(N)$, with $N$ being the number of intervals the real line is being partitioned into (this scales like $\frac{1}{\epsilon}$, as discussed in the proof of Theorem \ref{TH:convergenceBrenchAndBOund}).
Computation of $\mu^L_T$ and $\mu^U_T$ is performed in  $\mathcal{O}(\vert \mathcal{D} \vert),$ while obtaining $\Sigma^U_T$ involves the solution of a convex quadratic problem in $d + \vert \mathcal{D} \vert$ variables, where $d$ is the dimension of the input space.
Solving for $\Sigma^L_T$ requires the solution of $2 \vert \mathcal{D} \vert + 1$ linear programming problems in $d + \vert \mathcal{D} \vert$ dimensions.
Refining through branch and bound has a worst-case cost exponential in the number of non-trivial dimensions of $T$.
The CPU time required for convergence of our method is analysed in the Supplementary Material.

\section{MULTICLASS CLASSIFICATION}
\label{Sec:Multiclass}
In this section we show how the results for binary classification can be generalised to the multi-class case.
Given a class index $c \in \{1,\ldots, C \},$ we are interested in computing upper and lower bounds on $\pi^c(x | \mathcal{D})$ for every $x \in T$.
In order to do so, we extend Proposition \ref{Theorem:BOundsGenericBIclass} to the multi-class case in Proposition \ref{Theorem:BOundsGenericMulticlass}, and show that the resulting multi-dimensional integrals can be reduced to the two-class case by marginalisation (Proposition \ref{Prop:MultiCLass}). 
\begin{proposition}
\label{Theorem:BOundsGenericMulticlass}
%Let $\mathcal{S}=\{S_i\subseteq \mathbb{R}^{C}, i \in \{1,...N \}\}$ be a fine partition of $\mathbb{R}^{C}$. Then, for $c \in \{1,...,N \}$, it holds that
Let $\mathcal{S}=\{S_i \mid i \in \{1,...N \}\}$ be a finite partition of $\mathbb{R}^{C}$ (the latent space). Then, for $c \in \{1,...,C \}$:
%\begin{align}   \label{Eq:LowerBOundGenericMulticlass}
%     \sum_{i=1}^{N}  \inf_{x\in S_i}\sigma^c(x) \inf_{x \in T} \int_{S_i} \mathcal{N}(\bar f | \mu(x),\Sigma(x))d \bar{f} \leq \pi^c(x^* | \mathcal{D}) \leq  \sum_{i=1}^{N}  \sup_{x\in S_i}\sigma^c(x) \sup_{x \in T} \int_{S_i} \mathcal{N}(\bar f  | \mu(x),\Sigma(x))d \bar{f}
%\end{align}
%
\begin{align}
  %\label{Eq:LowerBOundGenericMulticlass}
    &\pi_{\min}^c(T) \geq  \sum_{i=1}^{N}  \min_{x\in S_i}\sigma^c(x) \min_{x \in T} \int_{S_i} \mathcal{N}(\bar f | \mu(x),\Sigma(x))d \bar{f} \nonumber
   \\
    %\label{Eq:UpperBOundGenericMulticlass}
    &\pi_{\max}^c(T) \leq  \sum_{i=1}^{N}  \max_{x\in S_i}\sigma^c(x) \max_{x \in T} \int_{S_i} \mathcal{N}(\bar f  | \mu(x),\Sigma(x))d \bar{f} \nonumber
\end{align}
\end{proposition}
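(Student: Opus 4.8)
The plan is to mirror the argument underlying Proposition \ref{Theorem:BOundsGenericBIclass}, the only conceptual change being that the one-dimensional monotonicity of $\sigma$ is replaced by cell-wise extrema of $\sigma^c$ over the partition $\mathcal{S}$ of the latent space $\mathbb{R}^C$. First I would start from the (Gaussian-approximate) inference integral and use that $\mathcal{S}$ partitions $\mathbb{R}^C$ to write, for every fixed $x \in T$,
$$\pi^c(x \mid \mathcal{D}) = \int_{\mathbb{R}^C}\sigma^c(\bar f)\,\mathcal{N}(\bar f \mid \mu(x),\Sigma(x))\,d\bar f = \sum_{i=1}^N \int_{S_i}\sigma^c(\bar f)\,\mathcal{N}(\bar f \mid \mu(x),\Sigma(x))\,d\bar f.$$

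Next, on each cell $S_i$ I would bound the likelihood by its extrema over that cell. Since the multivariate density $\mathcal{N}(\bar f \mid \mu(x),\Sigma(x))$ is non-negative, multiplying the pointwise inequalities $\min_{\bar f\in S_i}\sigma^c(\bar f) \leq \sigma^c(\bar f)\leq \max_{\bar f\in S_i}\sigma^c(\bar f)$ by the density and integrating over $S_i$ preserves both inequalities, and the cell-wise constants factor out of the integral. Summing over $i$ gives, for every $x \in T$,
$$\sum_{i}\Big(\min_{\bar f\in S_i}\sigma^c(\bar f)\Big)\!\int_{S_i}\!\mathcal{N}(\bar f \mid \mu(x),\Sigma(x))\,d\bar f \;\leq\; \pi^c(x\mid\mathcal{D}) \;\leq\; \sum_{i}\Big(\max_{\bar f\in S_i}\sigma^c(\bar f)\Big)\!\int_{S_i}\!\mathcal{N}(\bar f \mid \mu(x),\Sigma(x))\,d\bar f.$$
This is precisely the step that generalises the binary result: there $\sigma$ is monotone non-decreasing, so $\min_{\bar f\in S_i}\sigma = \sigma(a_i)$ and $\max_{\bar f\in S_i}\sigma = \sigma(b_i)$ recover the interval endpoints, whereas in the multi-class setting no single ordering of $\mathbb{R}^C$ is available and the cell extrema must be retained explicitly.

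Finally I would optimise over $x \in T$. Applying $\min_{x\in T}$ to the lower bound and $\max_{x\in T}$ to the upper bound, and noting that each coefficient $\min_{\bar f\in S_i}\sigma^c$ (resp.\ $\max_{\bar f\in S_i}\sigma^c$) is a non-negative constant independent of $x$, I can pass the optimisation inside each summand using the elementary inequalities $\min_x\sum_i g_i(x)\geq\sum_i\min_x g_i(x)$ and $\max_x\sum_i g_i(x)\leq\sum_i\max_x g_i(x)$, since each term may be optimised by a different $x$. This yields exactly the claimed bounds on $\pi_{\min}^c(T)$ and $\pi_{\max}^c(T)$. There is no serious obstacle here; the only delicate point is this last interchange of optimisation and summation, which is where the bound loosens (a single worst-case $x$ need not realise every term simultaneously), and the requirement that the $\sigma^c$-extrema be non-negative so that multiplying through by them preserves the direction of the inequalities.
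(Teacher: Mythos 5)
Your proposal is correct and follows essentially the same route as the paper's own proof: split the inference integral over the partition of the latent space, bound $\sigma^c$ on each cell by its extrema using non-negativity of the Gaussian density, and then interchange the optimisation over $x \in T$ with the summation term by term. The paper's argument proceeds through exactly these steps (additivity of the integral, non-negativity of $q$, and the min/sum interchange), so there is nothing to add.
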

Proposition \ref{Theorem:BOundsGenericMulticlass} guarantees that, for all $x\in T$, $\pi^c(x | \mathcal{D})$  can be upper- and lower-bounded by solving $2 N$ optimisation problems.
%As shown in the Supplementary Material,  under the assumption that $\sigma$ is the softmax function and that $S_i$ is an axis-parallel hyper-rectangle, % assumption that can always be enforced,
%$\sup_{x \in S_i} \sigma^c(x)$ and $\inf_{x \in S_i} \sigma^c(x)$ can be computed by simply evaluating the vertices of $S_i$.
In Proposition \ref{Prop:MultiCLass}, we show that upper and lower bounds for the integral of a multi-dimensional Gaussian distribution, such as those appearing in Proposition \ref{Theorem:BOundsGenericMulticlass}, can be obtained by optimising uni-dimensional integrals over both the input and latent space.
%Each of these integrals can be optimised by employing Proposition \ref{Prop:Gaussian}.
In what follows, we call $\mu_{i:j}(x)$ the subvector of $\mu(x)$ containing only the components from $i$ to $j$, and similarly we define $\Sigma_{i:k,j:l}(x)$, the submatrix of $\Sigma(x)$ containing rows from $i$ to $k$ and columns from $j$ to $l$. 
%Then, the following proposition follows.
\begin{proposition}
\label{Prop:MultiCLass}
Let $S=\prod_{i=1}^{C}[ k_{i}^1, k_{i}^2]$ be an axis-parallel hyper-rectangle.
For $i \in \{1,\ldots,C-1 \}$ and $f \in \mathbb{R}^{C-1-i}$, define $\mathcal{I}:=i+1:C$ and: %$\mathcal{I}:=i:i$ and $I:=i+1:|C|$.  Call 
\begin{align*}
    \mu^f_i(x)=\mu_i(x)-\Sigma_{i,\mathcal{I}}(x)\Sigma_{\mathcal{I},\mathcal{I}}^{-1}(f-\mu_{\mathcal{I}}(x))\\
    \Sigma^f_i(x)= \Sigma_{i,i}(x)-\Sigma_{i,\mathcal{I}}(x)\Sigma_{\mathcal{I},\mathcal{I}}^{-1}\Sigma_{i,\mathcal{I}}^T(x).
\end{align*}
Let $S^{i+1}=\prod_{j=i+1}^{C} [ k_{j}^{1},k_{j}^{2}]$, then we have that:
\begin{align*}
\max_{x\in T}\int_{S}&\mathcal{N}(z|\mu(x),\Sigma(x)) \leq  \max_{x\in T} \int_{k_{C}^1}^{k_{C}^2} \hspace*{-0.3 cm}\mathcal{N}(z|\mu_{C}(x),\\ & \Sigma_{C,C}(x) )dz  \prod_{i=1}^{C-1}  \max_{\substack{x\in T \\ f\in S^{i+1}}} \hspace*{-0.2 cm} \int_{k_{i}^1}^{k_{i}^2} \hspace*{-0.2 cm} \mathcal{N}(z|\mu^f_i(x),\Sigma^f_i(x) )dz \\%\end{align*}
%\begin{align*} 
\min_{x\in T}\int_{S}&\mathcal{N}(z|\mu(x),\Sigma(x)) \geq \min_{x\in T} \int_{k_{C}^1}^{k_{C}^2} \hspace*{-0.3 cm} \mathcal{N}(z|\mu_{C}(x), \\
& \Sigma_{C,C}(x) )dz  \prod_{i=1}^{C-1}  \min_{\substack{x\in T \\ f\in S^{i+1}}} \hspace*{-0.2 cm} \int_{k_{i}^1}^{k_{i}^2} \hspace*{-0.2 cm} \mathcal{N}(z|\mu^f_i(x),\Sigma^f_i(x) )dz.
%&\inf_{x\in T}\int_{S}\mathcal{N}(z|\mu(x),\Sigma(x)) \geq\\
%&\quad\quad  \inf_{x\in T} \int_{k_{C,1}}^{k_{C,2}}\mathcal{N}(z|\mu_{C}(x),\Sigma_{C,C|}(x) )dz  \prod_{i=1}^{C-1}\inf_{x\in T,f\in S^{i+1}} \int_{k_{i,1}}^{k_{i,2}}\mathcal{N}(z|\mu^f(x,i),\Sigma^f(x,i) )dz 
\end{align*}
%where $S^{i+1}=\prod_{j=i+1}^{C} [ k_{j,1},k_{j,2}]$.
\end{proposition}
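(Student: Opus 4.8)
The plan is to exploit the chain-rule factorisation of a multivariate Gaussian density into a product of univariate conditional Gaussians, and then to bound each one-dimensional factor separately. First I would observe that the quantities $\mu^f_i(x)$ and $\Sigma^f_i(x)$ defined in the statement are precisely the mean and variance of the conditional distribution of the $i$-th latent coordinate given that coordinates $i+1,\ldots,C$ equal $f$; they are the standard Gaussian conditioning formulas applied to the partition $\{i\}\cup\mathcal{I}$ with $\mathcal{I}=i+1:C$ (I would double-check the sign and the dimension of $f$ against the textbook formula, since the statement writes $f\in\mathbb{R}^{C-1-i}$ whereas $\mathcal{I}$ has $C-i$ entries). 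Consequently, for every fixed $x$,
\begin{align*}
\mathcal{N}(z\mid\mu(x),\Sigma(x)) = \mathcal{N}(z_C\mid\mu_C(x),\Sigma_{C,C}(x))\prod_{i=1}^{C-1}\mathcal{N}\!\left(z_i\mid \mu_i^{z_{i+1:C}}(x),\Sigma_i^{z_{i+1:C}}(x)\right).
\end{align*}

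Then, by Fubini's theorem I would write the integral over the hyper-rectangle $S=\prod_{i=1}^C[k_i^1,k_i^2]$ as the iterated integral obtained by integrating $z_1$ first, then $z_2$, up to $z_C$ last:
\begin{align*}
\int_S \mathcal{N}(z\mid\mu(x),\Sigma(x))\,dz = \int_{k_C^1}^{k_C^2}\!\!\mathcal{N}(z_C\mid\cdots)\cdots\int_{k_1^1}^{k_1^2}\!\!\mathcal{N}\!\left(z_1\mid z_{2:C},\cdots\right)dz_1\cdots dz_C.
\end{align*}
The core of the argument is a peeling step from the innermost integral outward. Writing $g_i(x,f)=\int_{k_i^1}^{k_i^2}\mathcal{N}(z_i\mid\mu_i^f(x),\Sigma_i^f(x))\,dz_i$, I note that each such factor lies in $[0,1]$ and is non-negative. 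For the upper bound, I would replace the innermost factor $g_1(x,z_{2:C})$ by its supremum $\max_{x\in T,\,f\in S^2}g_1(x,f)$; because the conditioning coordinates $z_{2:C}$ range inside $S^2$ throughout the remaining integrals and the rest of the integrand is non-negative, this replacement only increases the value. Iterating for $i=2,\ldots,C-1$, and finally bounding the outer factor $g_C(x)\le\max_{x\in T}g_C(x)$, yields a product of suprema that is independent of $x$; taking the maximum over $x\in T$ on the left gives the claimed upper bound. The lower bound follows identically with every $\max$ replaced by $\min$, using that replacing a factor by a smaller non-negative constant decreases the integral.

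The step I expect to be the main obstacle is making the peeling rigorous: I must verify that each conditioning argument $z_{i+1:C}$ genuinely stays within $S^{i+1}=\prod_{j=i+1}^C[k_j^1,k_j^2]$ when the outer integrals are taken over their respective intervals, so that bounding $g_i$ by its supremum over $f\in S^{i+1}$ is legitimate, and that the non-negativity of every remaining conditional density justifies substituting a constant bound inside a nested integral while preserving the direction of the inequality. Once this monotonicity-under-substitution lemma is in place, the remaining algebra is routine, and the same reasoning applied with infima delivers the minimisation inequality.
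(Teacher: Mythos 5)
Your proposal is correct and takes essentially the same route as the paper: the paper's proof likewise factorises the Gaussian integral coordinate-by-coordinate (conditioning coordinate $i$ on coordinates $i+1,\ldots,C$) and bounds each univariate conditional factor by its extremum over $x\in T$ and $f\in S^{i+1}$, using that a Gaussian conditioned on jointly Gaussian components is again Gaussian; your Fubini ``peeling'' step plays exactly the role of the paper's auxiliary lemma $P(X\in A\mid Y\in B)\le\sup_{y\in B}P(X\in A\mid Y=y)$. The issues you flag are real but harmless typos in the statement ($f$ should lie in $\mathbb{R}^{C-i}$ since $\mathcal{I}$ has $C-i$ entries, and the conditional mean should carry a $+$ sign), not gaps in the argument.
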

Proposition \ref{Prop:MultiCLass} reduces the computation of the bounds for the multi-class case to a product of extrema of univariate Gaussian distributions for which Proposition \ref{Prop:Gaussian} can be iteratively applied.
Analogously to what we discussed for the binary case, the resulting bound can be refined through a branch and bound algorithm to ensure convergence up to any desired tolerance $\epsilon > 0 $.
Notice that the computational complexity for the multi-class case is exponential in $C$.
\section{EXPERIMENTAL RESULTS}
%\ap{Work in progress}
\label{Sec:ExperimentalResults}

\begin{figure}[h]
	\centering
	{\hspace*{-.4cm}  \includegraphics[width = 0.24\textwidth]{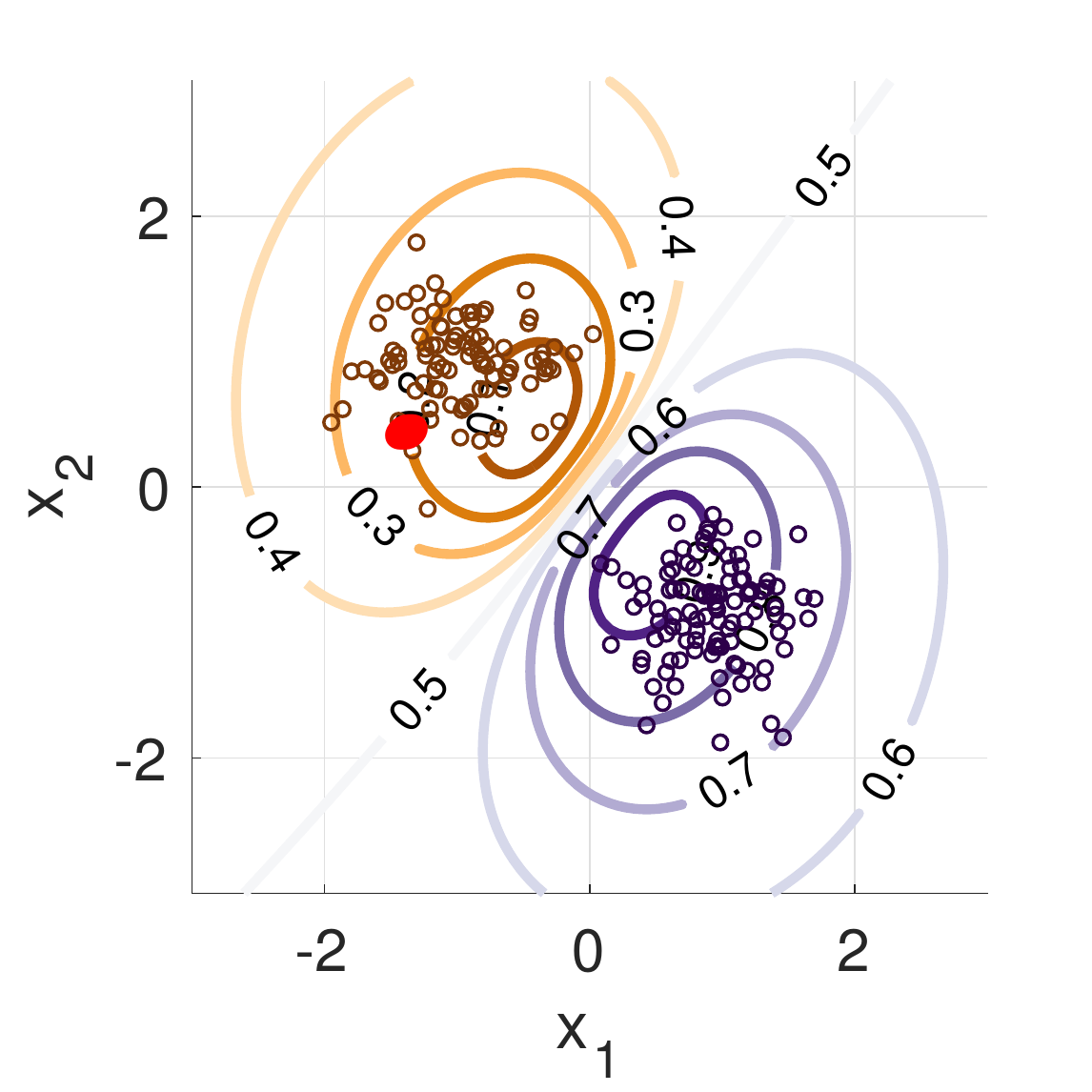}}
	{\hspace*{-.2cm}  \includegraphics[width = 0.24\textwidth]{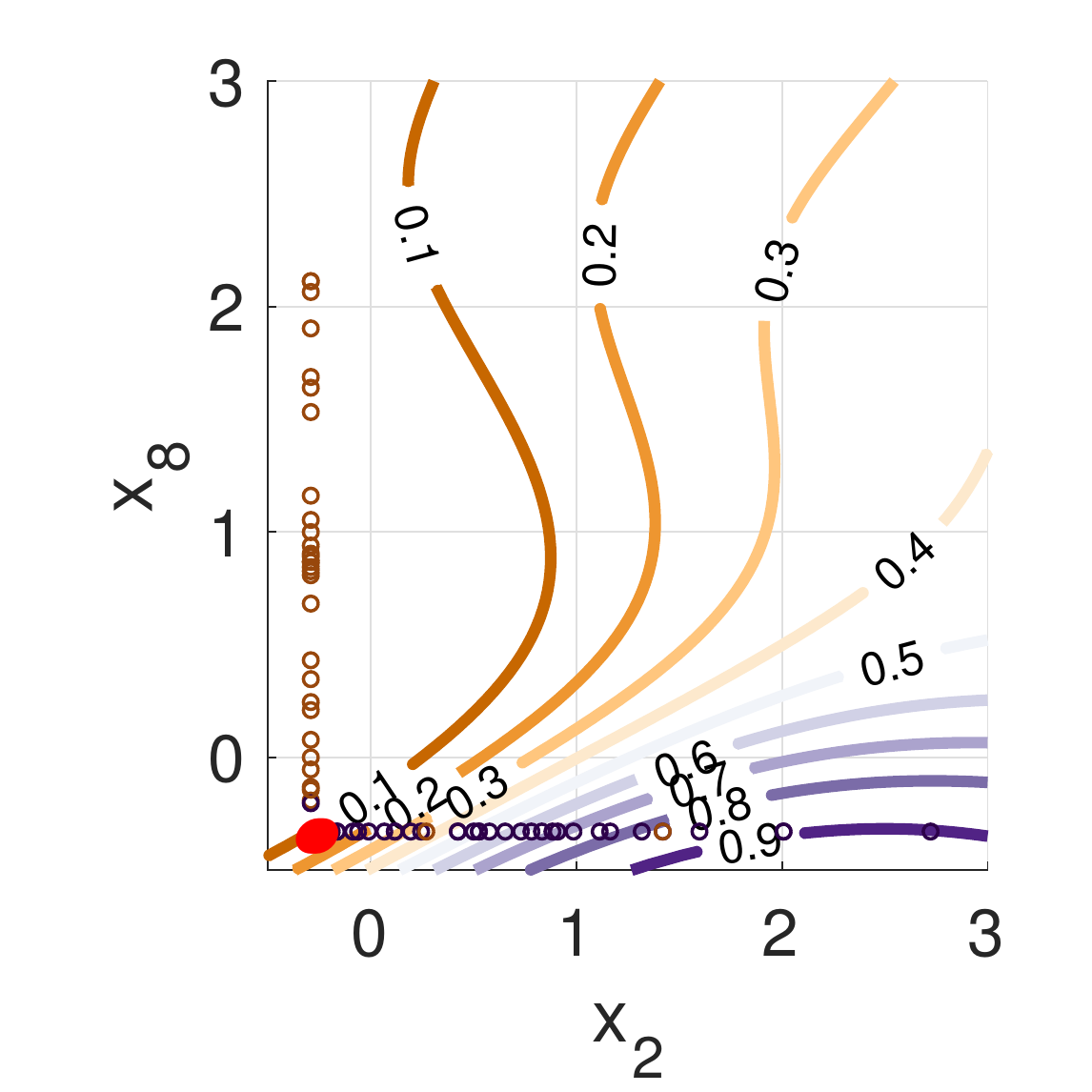}}\\
	{
	\hspace*{0cm} \includegraphics[width = 0.23\textwidth]{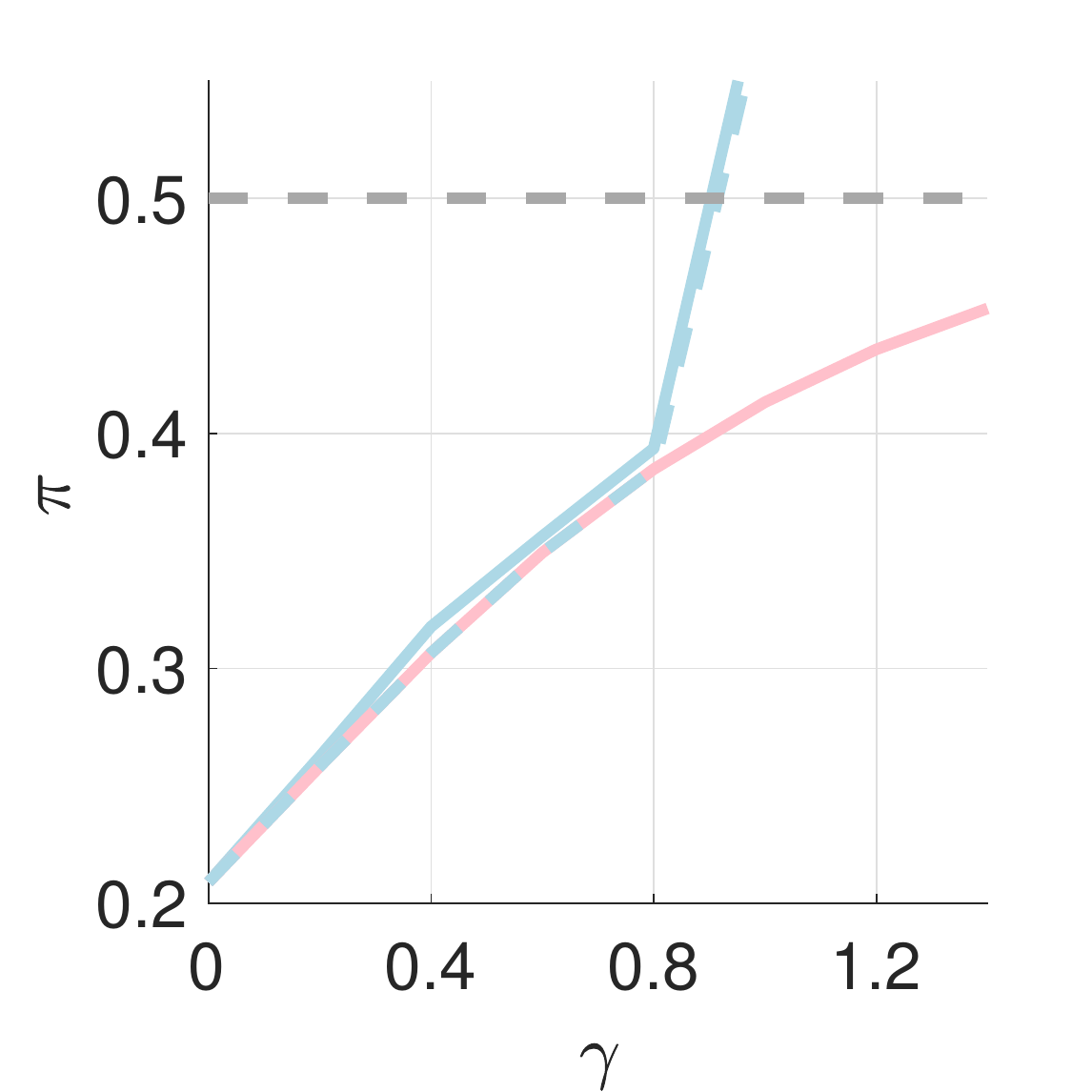}}
	\hspace*{0cm} {\includegraphics[width = 0.23\textwidth]{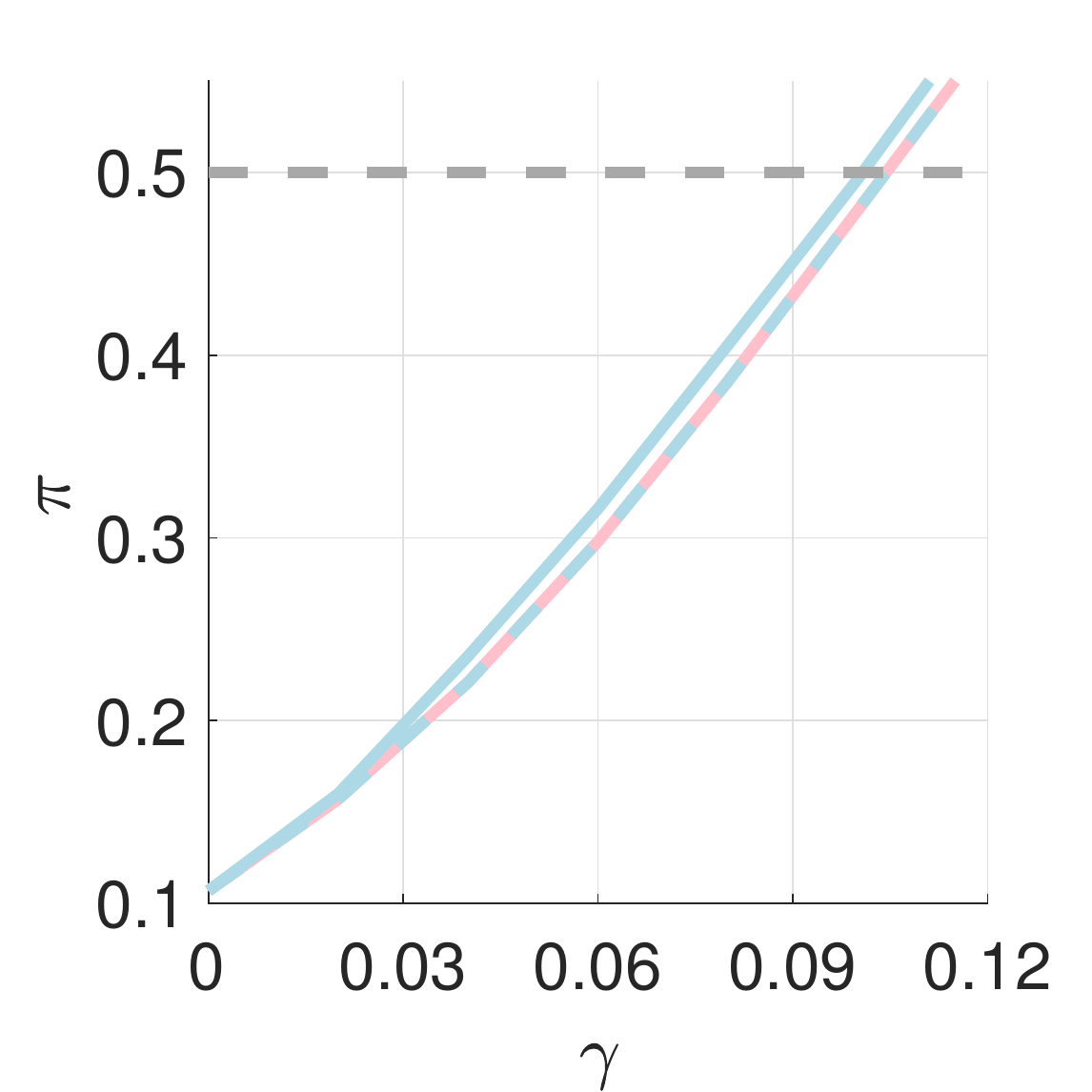}}
	{\hspace*{0cm}  \includegraphics[width = 0.45\textwidth]{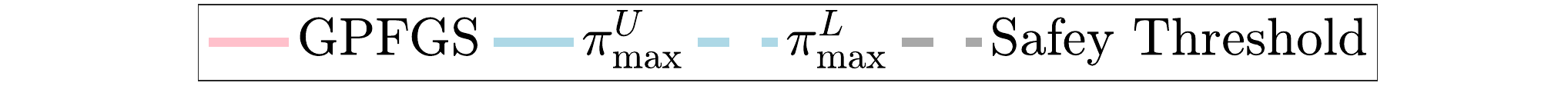}} 
	\caption{\textbf{First row}: Contour plot and test points for Synthetic2D (left); projected contour plot and test points for 2 dimensions of SPAM (right, dimensions $2$ and $8$ as selected by $L_1$-penalised logistic regression); red dots mark selected test points. \textbf{Second row}: Safety analysis for the two selected test point. Shown are the upper and lower bounds for tolerance $\epsilon = 0.02$ on $\piSup{T}$ (solid and dashed blue curves) and the GPFGS adversarial attack (pink curve). %The true values of $\piSup{T}$ are guaranteed to be between the solid and dashed blue curves.
	}
	\label{fig:safety1}
\end{figure}
\begin{figure}[h]
	\centering
	{\hspace*{0.2cm}  \includegraphics[width = 90pt]{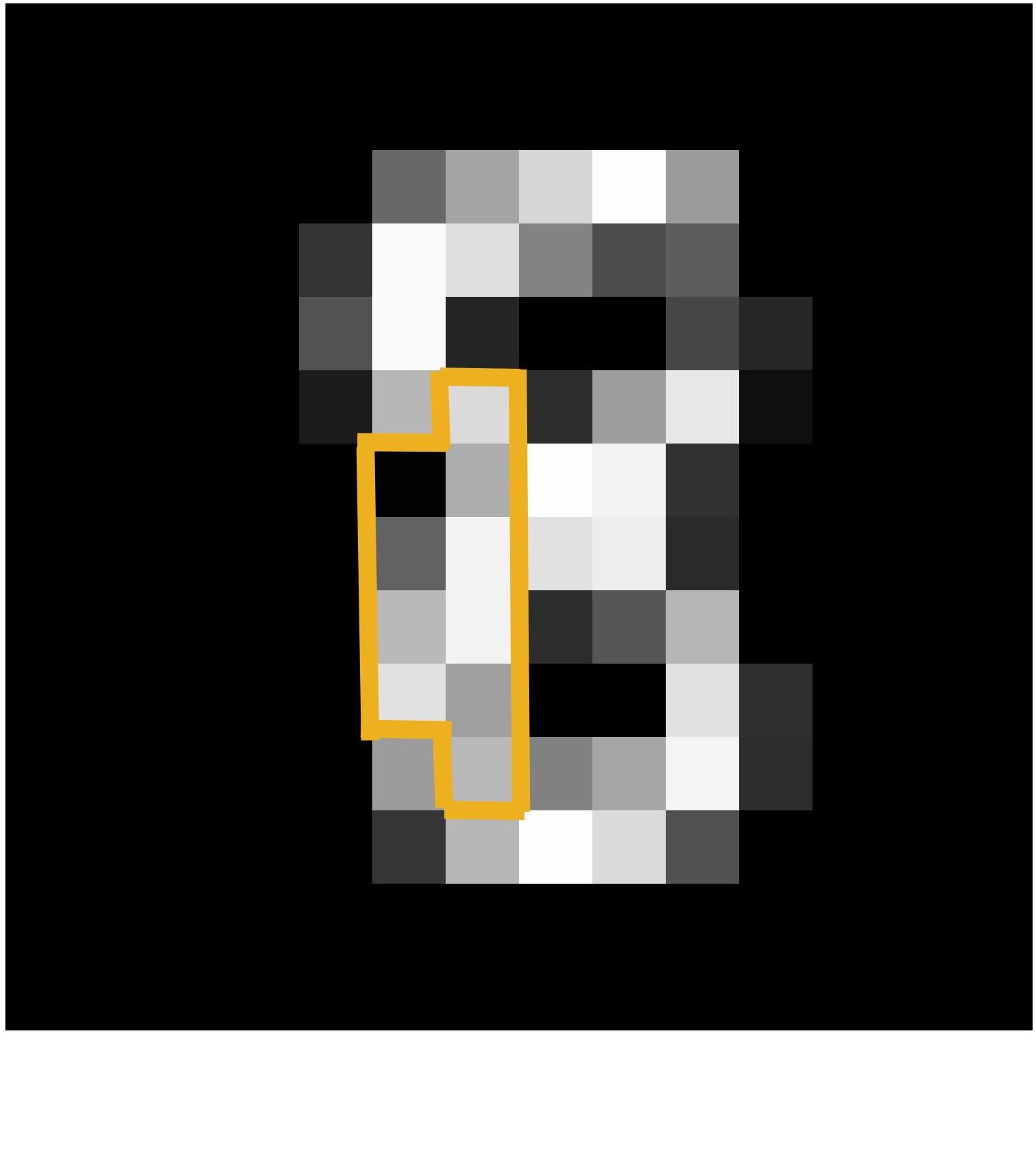}}
	{  \includegraphics[width = 90pt]{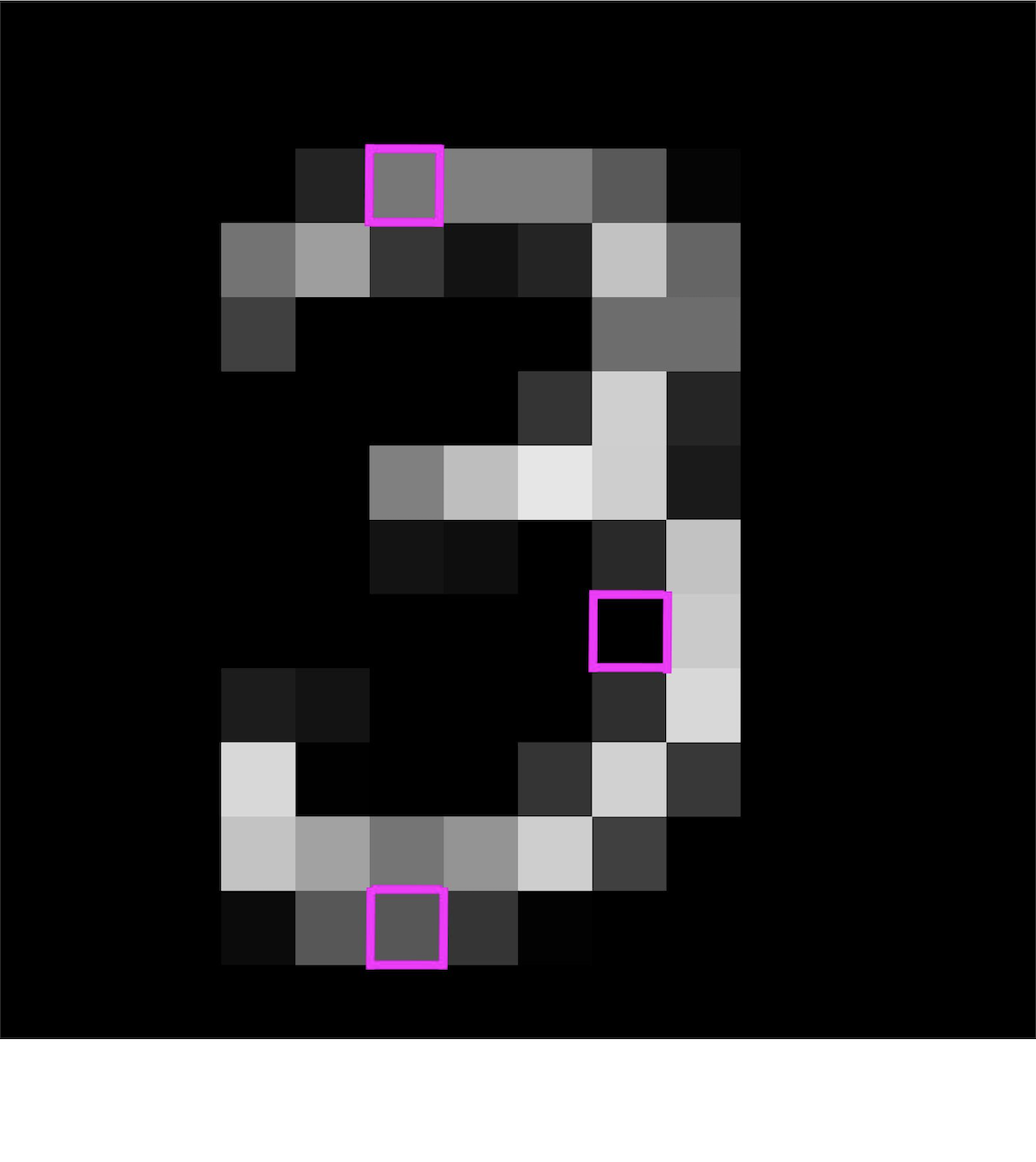}}\\
	\hspace*{0cm} {\includegraphics[width = 0.23\textwidth]{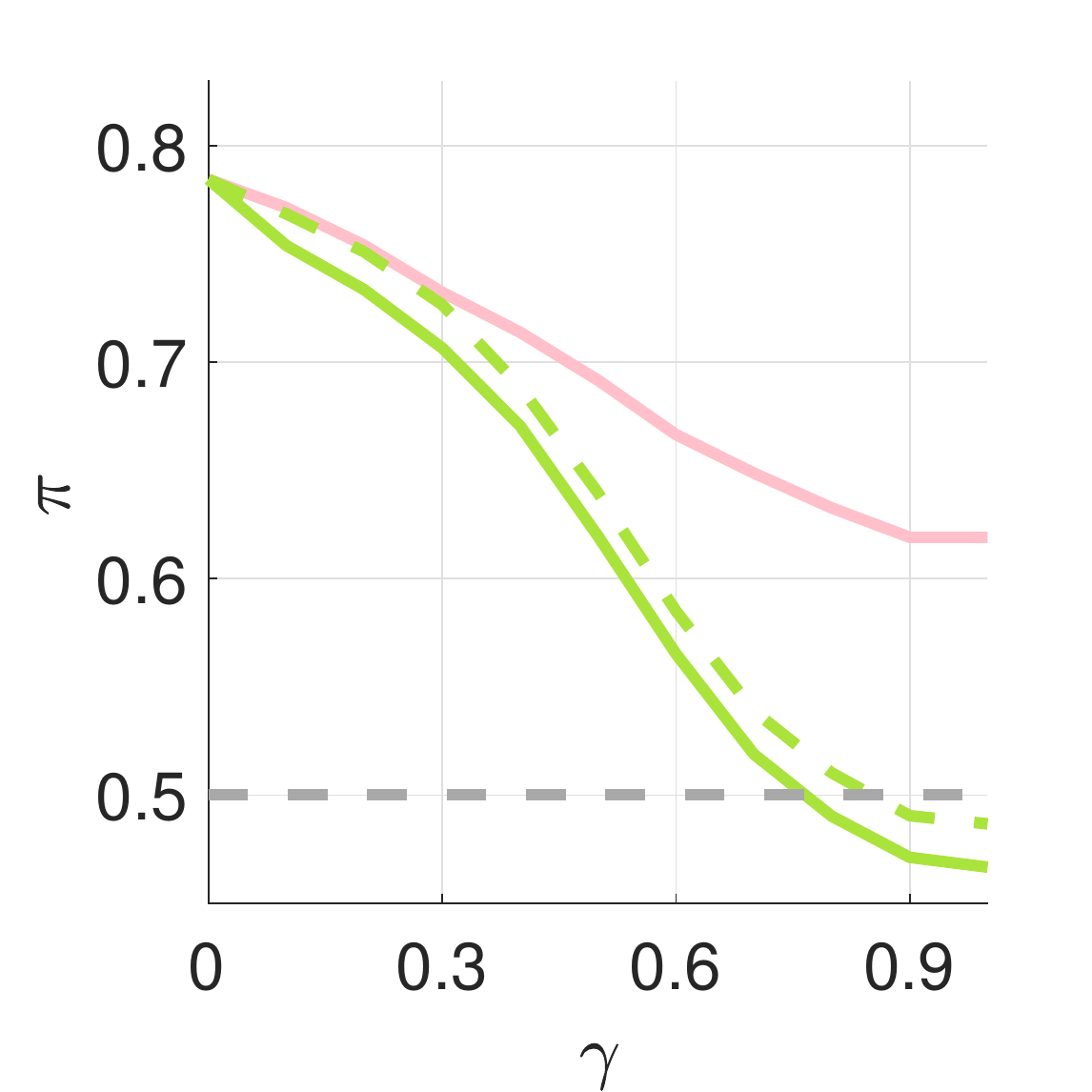}}
	\hspace*{0cm} {\includegraphics[width = 0.23\textwidth]{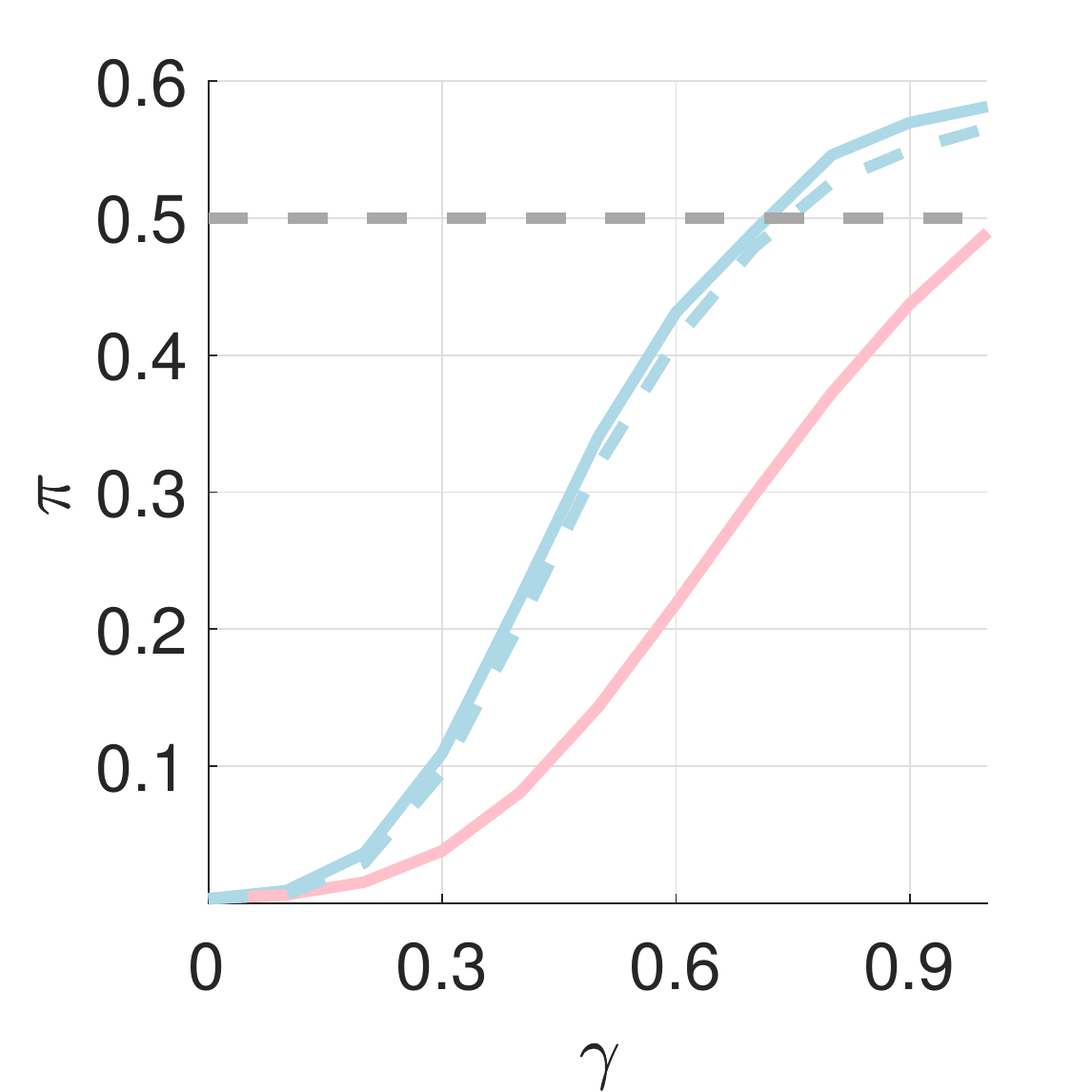}}
	{\hspace*{0cm}  \includegraphics[width = 0.45\textwidth]{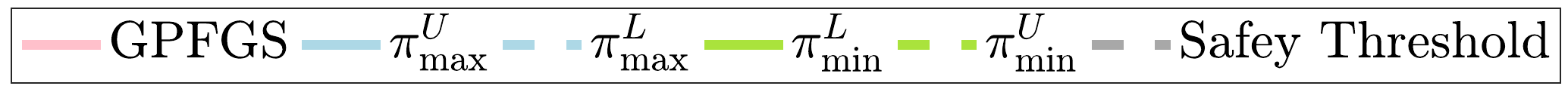}} 
	\caption{\textbf{First row}: Sample of 8 from MNIST38 along with 10 pixels selected by SIFT (left) and sample of 3 from MNIST38 along with the 3 pixels that have the shortest lengthscales after GPC training (right). \textbf{Second row}: Safety analysis for the two images. Shown are the upper and lower bounds for $\epsilon = 0.02$ on either $\piSup{T}$ or $\piInf{T}$ (solid and dashed blue resp.\ green curves) and the GPFGS adversarial attack (pink curve). %The true values of $\piSup{T}$ or $\piInf{T}$ are guaranteed to be between the solid and dashed blue (green) curves.
	}
	\label{fig:safety2}
\end{figure}

We employ our methods to experimentally analyse the robustness profile of GPC models in adversarial settings\footnote{Code: \textit{https://github.com/andreapatane/check-GPclass}}.
%In Section \ref{subsec:safety} we check for adversarial local safety in neighborhoods around specific test points, and compare the guarantees we provide with an adversarial attack for GPC models \citep{grosse2017wrong}.
%In Section \ref{subsec:robustness} we empirically compare different training regimes for Bayesian inference with GPC, analysing the effect of the approximate Bayesian techniques used and hyper-parameter fitting.
%Finally, in Section \ref{subsec:interpretability} we show how adversarial local robustness measures can be used for interpretability analysis, and compare our results with those obtained by LIME \citep{ribeiro2016should}.
We give results for three datasets: (i) Synthetic2D, generated by shifting a two-dimensional standard-normal either along the first dimension (class 1) or the second one (class 2); (ii) the SPAM dataset \citep{Dua:2019}; (iii) a subset of the MNIST dataset \citep{lecun1998mnist} with classes $3$ and $8$ (MNIST38) and a subset with classes $3$, $5$ and $8$ (MNIST358).
For scalability, results for MNIST38 are given for feature-level analysis (as done in \citet{ruan2018reachability} for deep networks).
Namely, we analyse either salient patches detected by SIFT \citep{lowe2004distinctive} or we select the relevant pixels corresponding to the shortest GP length-scales. %Our approach is independent of the particular feature selection method chosen.
%We evaluate our methods on three datasets.
%The first, Synthetic 2D, is generated by shifting two-dimensional standard-normals either along the first dimension (Class 1) or along the second dimension (Class 2). Secondly, we use the SPAM dataset from the UCI database \citep{Dua:2019}. Lastly, we use the MNIST38 subset (which contains only classes $3$ and $8$) and the MNIST358 subset (only classes $3$, $5$ and $8$) of the original MNIST dataset \citep{lecun1998mnist}.
%Test set accuracies obtained are $100 \%$, $93 \%$ and $94-98 \%$ respectively.
%Training was performed using the probit likelihood for the binary classification, and the softmax in the multiclass case. \rev{As we are interested in worst-case guarantuees, we use the $L_{\infty}$-norm induced metric when defining $\gamma$-balls around test points, as by definition these contain the $\gamma$-balls of any other $L$-norm induced metric.}
%
%
\subsection{Adversarial Local Safety}
\label{subsec:safety}
%For ARNO: I just re-arrenged a bit the discussion, and water down a bit the final claim of the subsection.
We depict the local adversarial safety results for four points selected from the Synthethic2D, SPAM, and MNIST38 datasets in Figures \ref{fig:safety1} and \ref{fig:safety2}. %- first rows of the plots show the test points selected, robustness results are given in the second row. 
%That is we check for the existence of adversarial examples in neighborhood regions of $\gamma > 0$ for increasing values of $\gamma$ (x-axis in the second row).
%For each value of $\gamma$ we compute the class probabilities extrema $\piInf{T}$ and $\piSup{T}$ and check 
To this end, we set $T \subseteq \mathbb{R}^d$ to be a $L_\infty$ $\gamma-$ball around the chosen test point %\footnote{That is, $T=\{ x \in \mathbb{R}^d \,s.t.\, |x-x^*|_{\infty} \leq \gamma \}$ for test point $x^*$. Note that any other $L_P$ norm could have been equally used.}
and iteratively increase $\gamma$ (x-axis in the second row plots), checking whether there are adversarial examples in $T$.
Namely, if the point is originally assigned to class 1 (respectively class 2) we check whether the minimum classification probability in $T$ is below the decision boundary threshold, that is, if $\piInf{T} < 0.5$ (resp. $\piSup{T} > 0.5$). %\footnote{Note that, unlike analyses purely based on the latent mean, our approach can be used for decision boundaries different from $0.5$ (e.g.\ one vs. all classification or robust decision making).}.
We compare the values provided by our method (blue solid and dashed line for class 2, green solid and dashed line for class 1) with GPFGS \citep{grosse2018limitations}, a gradient based heuristic attack for GPC (pink line).
%
%\begin{figure}[h]
%	\centering
%	{\hspace*{0.2cm}  \includegraphics[width = 90pt]{figures/FIg8_new_copy.jpg}}
%	{  \includegraphics[width = 90pt]{figures/3figNIPSpurple.png}}\\
%	\hspace*{0cm} {\includegraphics[width = 0.23\textwidth]{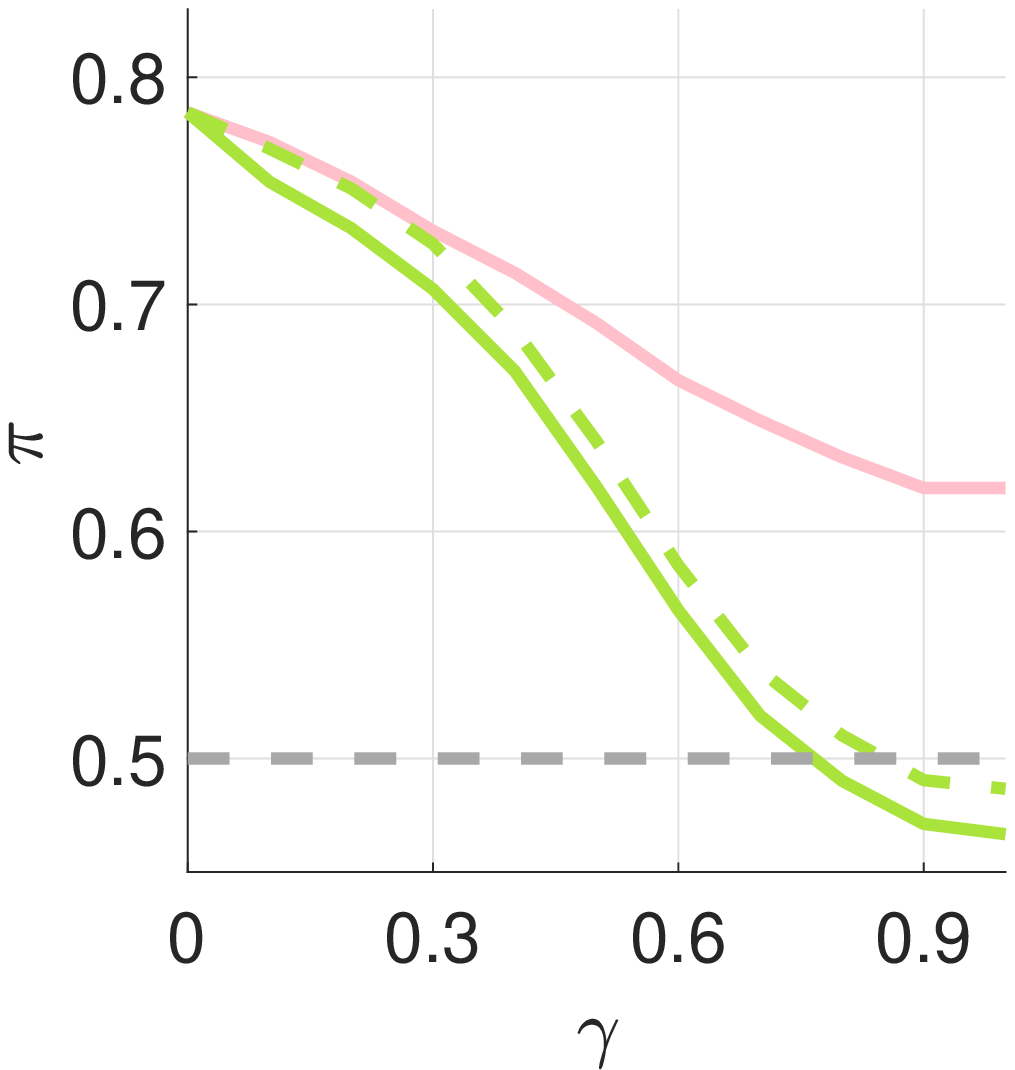}}
%	\hspace*{0cm} {\includegraphics[width = 0.23\textwidth]{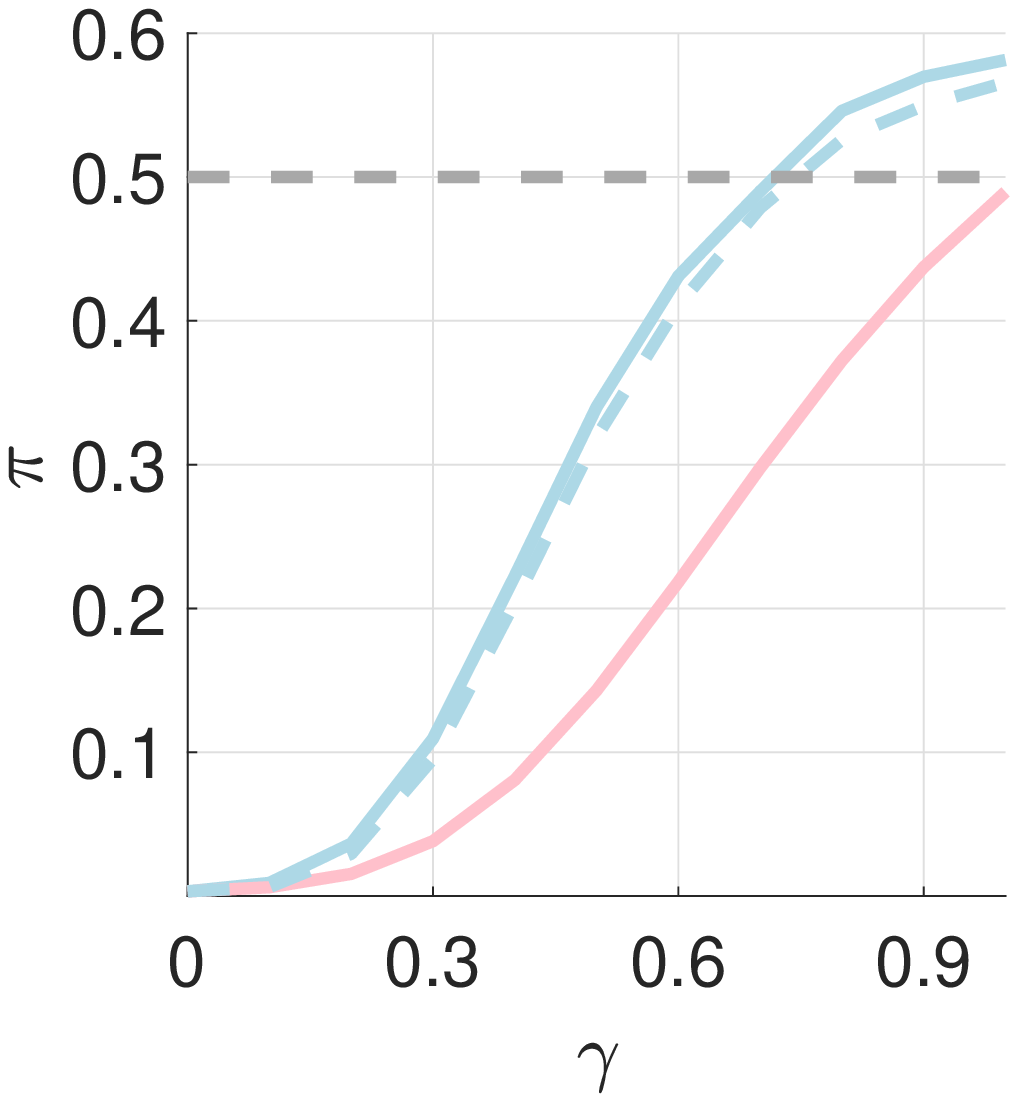}}
%	{\hspace*{0cm}  \includegraphics[width = 0.45\textwidth]{figures/legend_fig2_cropped.pdf}} 
%	\caption{\textbf{First row}: Sample of 8 from MNIST38 along with 10 pixels selected by SIFT (left) and sample of 3 from MNIST38 along with the 3 pixels that have the shortest lengthscales after GPC training (right). \textbf{Second row}: Safety analysis for the two images. Shown are the upper and lower bounds for $\epsilon = 0.02$ on either $\piSup{T}$ or $\piInf{T}$ (solid and dashed blue resp.\ green curves) and the GPFGS adversarial attack (pink curve). %The true values of $\piSup{T}$ or $\piInf{T}$ are guaranteed to be between the solid and dashed blue (green) curves.
%	}
%	\label{fig:safety2}
%\end{figure}
%
%
Naturally, as $\gamma$ increases, the neighborhood region $T$ becomes larger, hence the confidence for the initial class can decrease.
Interestingly, while our method succeeds in finding adversarial examples in all cases shown (i.e.\ both the lower and upper bound on the computed quantity cross the decision boundary), the heuristic attack fails to find adversarial examples in the Synthetic2D and in the MNIST38 case.
This happens as GPFGS builds on linear approximations of the GPC function, hence failing to find solutions to Eqn \eqref{Eq:infSubROsustness} when there are non-linearities.
In particular, near the point selected for the Synthetic2D dataset (red dot in the contour plot) the gradient of the GPC points away from the decision boundary.
Hence, no matter the value of $\gamma$, GPFGS will not go above 0.5 in this case (pink line of the bottom-left plot). 
On the other hand, for the SPAM dataset, the GPC model is locally linear around the selected test point (red dot in top right contour plot).
%As such, following the gradient around it will lead to values close to optimality.
Interestingly, the MNIST38 examples (Figure \ref{fig:safety2}) provide results analogous to those of Synthetic2D.
While our method finds adversarial examples on both occasions, GPFGS fails to do so (even with $\gamma = 1.0$ which is the maximum region possible for normalised pixel values).

\subsection{Adversarial Local Robustness}
\label{subsec:robustness}
We evaluate the empirical distribution of $\delta$-robustness (see Definition \ref{ProbDef:Robustness}) on $50$ randomly selected test points for each of the three datasets considered.
That is, given $T$, we compute $\delta =  \pi_{\max}(T) - \pi_{\min}(T) $.
Notice that a smaller value of $\delta$ implies a more robust model. 
In particular, we analyse how the GPC model robustness is affected by the training procedure used. 
We compare the robustness obtained when using either the Laplace or the EP posterior approximations technique.
Further, we investigate the influence of the number of marginal likelihood evaluations (epochs) performed during hyper-parameter optimisation on robustness.

\begin{figure*}[h]
	\centering
	{\includegraphics[width = 0.28\textwidth]{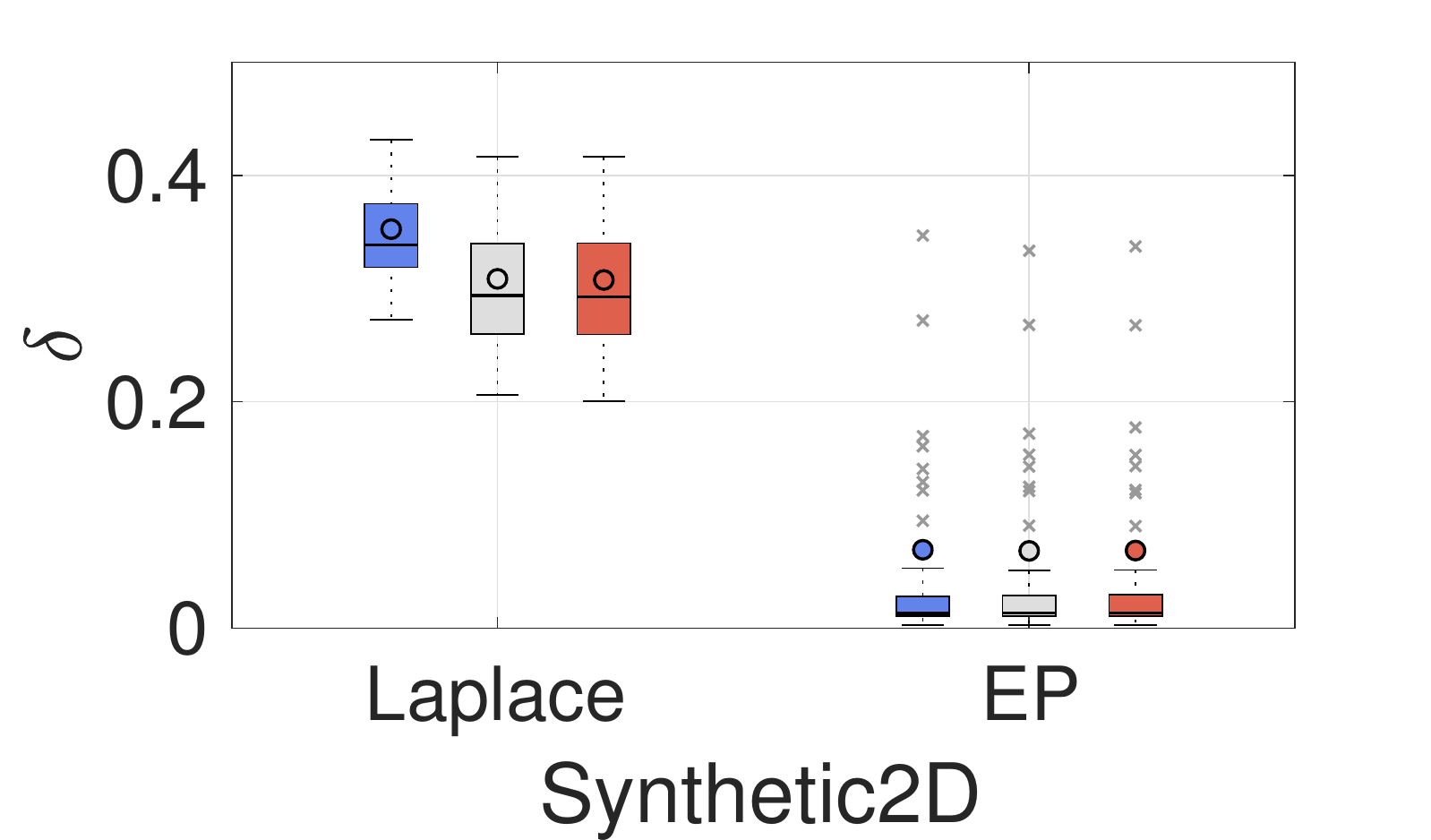}
	\includegraphics[ width = 0.28\textwidth]{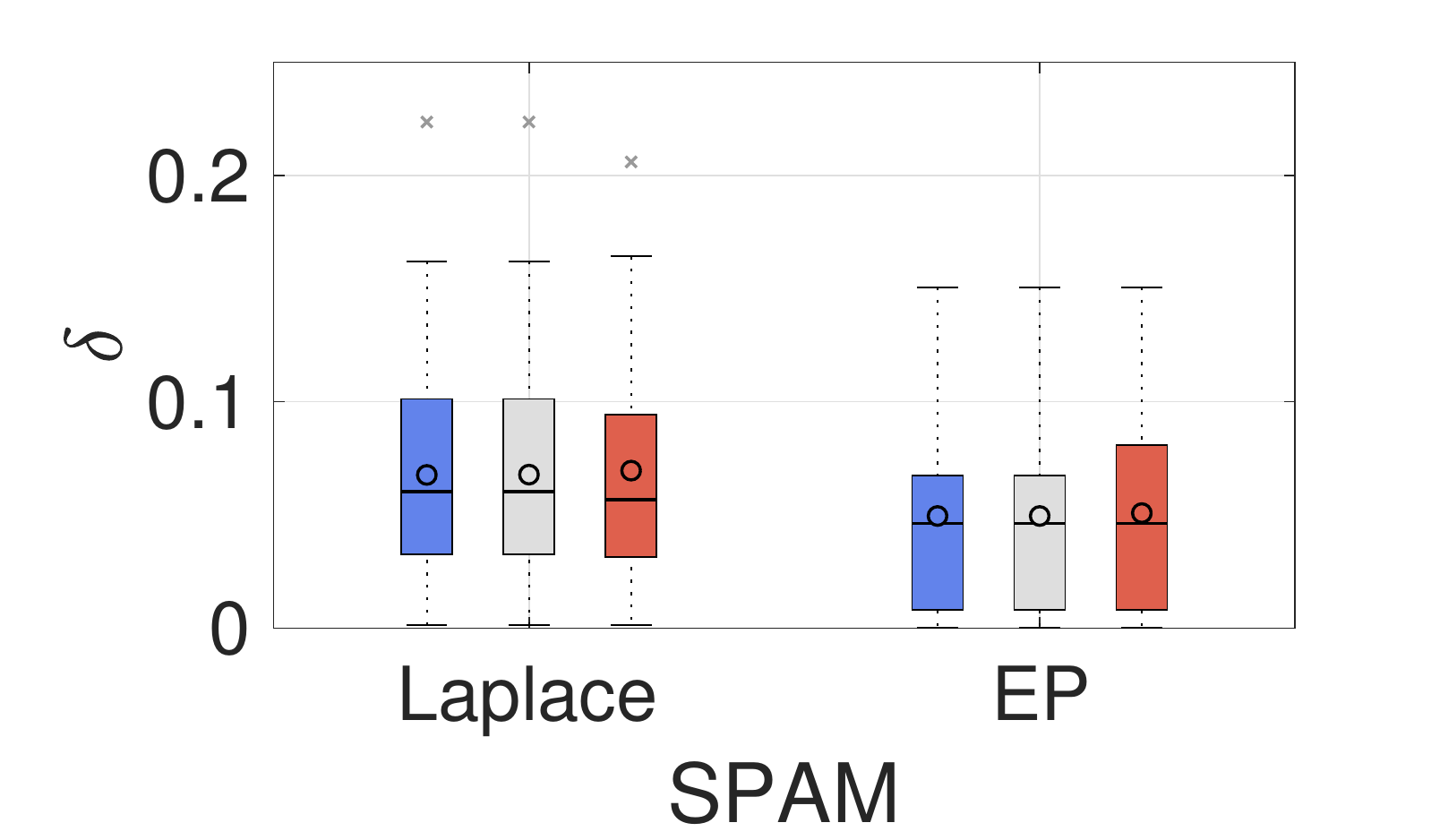}
	\includegraphics[ width = 0.28\textwidth]{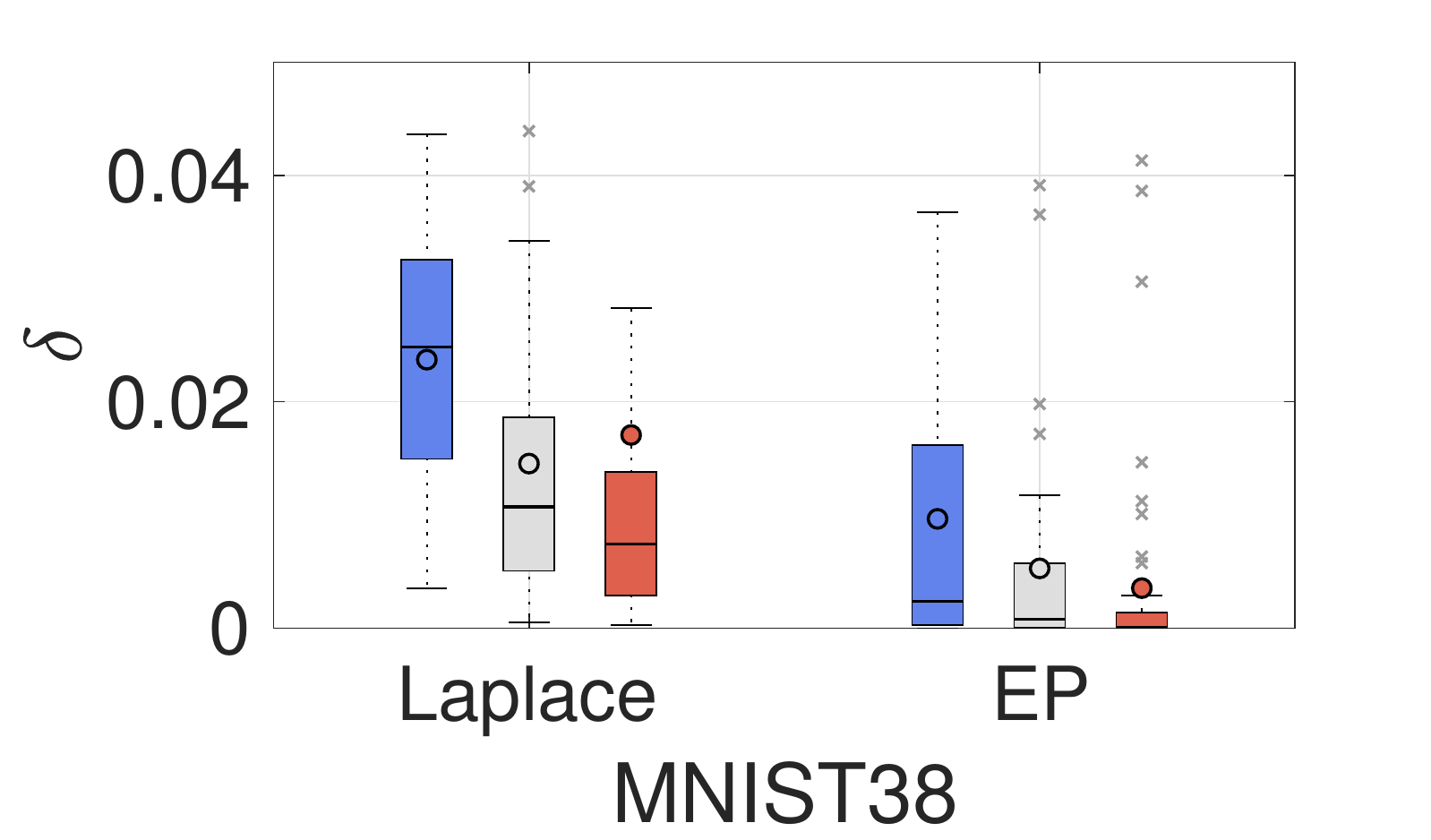}}
	 {\hspace*{0.2cm} \includegraphics[width = 0.32\textwidth]{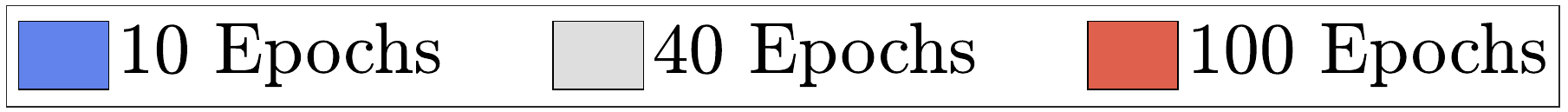}}
	 
	\caption{Boxplots for the distribution of robustness on the three datasets, %considered in this paper,
	comparing Laplace and EP approximation. %(a smaller $\delta$ implies a more robust model). 
	}
	\label{fig:Robustness}
\end{figure*}

Results are depicted in Figure \ref{fig:Robustness}, %, using $\gamma = 0.5$, $\gamma = 0.03$ and $\gamma = 0.1$ respectively for the Synthetic 2D, the SPAM and the MNIST dataset \rev{(so that Algorithm \ref{alg:bnb_sketch} terminates in reasonable time)}, and comparing results for $10$, $40$ and $100$ hyper-parameter optimisation \rev{epochs}. 
for $10$, $40$ and $100$ hyper-parameter optimisation epochs. 
Note that the analyses for the MNIST38 samples are restricted only to the most influential SIFT feature, %one of them, 
and thus $\delta$ values for MNIST38 are %sensibly 
smaller in magnitude than %that 
for the other two datasets (for which all the input variables are simultaneously changed).
Interestingly, %even though these results are to be considered empirical in nature, we systematically observe 
this empirical analysis demonstrates that
GPCs trained with EP are consistently more robust than those trained using Laplace.
In fact, for both Synthetic2D and MNIST38, EP yields a model about 5 times more robust than Laplace.
For SPAM, the difference in robustness is the least pronounced.
While Laplace approximation works by local approximations, EP calibrates mean and variance estimation by a global approach, which generally results in a more accurate approximation \citep{rasmussen2004gaussian}.
We compare Laplace and EP posterior approximations with that made by Hamiltonian Monte Carlo (HMC) - that is, as in \citet{minka2001expectation} we use HMC as gold standard. 
The empirical distances found on the posterior approximation w.r.t.\ HMC are on average as follows (smaller values are better): (i) Synthetic2D - Laplace: 1.04, EP: 0.14; (ii) SPAM - Laplace: 0.35, EP: 0.32; (iii) MNIST38 - Laplace: 0.52, EP: 0.32. 
This shows a correlation between the robustness and the posterior approximation quality in the datasets considered.
These results quantify and confirm for GPCs that a more refined estimation of the posterior is beneficial for model adversarial robustness \citep{cardelli2019statistical}. 
Interestingly, the values of $\delta$ decrease as the number of training epochs increases, thus robustness improves with training epochs.
This is in contrast to what is observed in the deep learning literature \citep{tsipras2018robustness}.
More training in the Bayesian settings may imply better calibration of the latent mean and variance function to the observed data. 
%Finally, the $\delta$-robustness values obtained for the SPAM datasets are quite stable across all the training procedures explored here, with small differences that are not statistically significant. Since the SPAM dataset is almost perfectly linearly separable, we in fact obtain a similar GPC independently of the training procedure. % applied.

\subsection{Interpretability Analysis}
\label{subsec:interpretability}
Finally, we show how adversarial robustness can be used for interpretability analysis for GPC models.
We provide comparison with pixel-wise  LIME \citep{ribeiro2016should}, a model-agnostic interpretability technique that relies on local linear approximations.
Given a test point $x^*$ consider the one-sided intervals $T^i_{\gamma}(x^*) = [x^*, x^* + \gamma e_i]$ (with $e_i$ being the vector of $0$s except for $1$ at dimension $i$).
We compute how much the maximum and minimum values can change over the one-sided intervals in both directions:
\begin{align*}
    \mathbf{\Delta}^i_{\gamma}(x^*) = &\left(\piSup{T^i_{\gamma}(x^*)} -\piSup{T^i_{-\gamma}(x^*)}\right)\\
    & + \left(\piInf{T^i_{\gamma}(x^*)}-\piInf{T^i_{-\gamma}(x^*)}\right). \label{eq:interpretcenter}
\end{align*}
Intuitively, this provides a non-linear generalisation of numerical gradient estimation (more details in Supplementary Material) which is close to the metric used in \citet{ribeiro2016should} as $\gamma$ shrinks to $0$.
While $\mathbf{\Delta}^i_{\gamma}(x^*)$ is local to a given $x^*$, following LIME, global interpretability information is obtained by averaging local results over $M$ test points, i.e.\ by computing  $\mathbf{\Delta}^i_{\gamma} = \frac{1}{M} \sum_{j = 1}^{M} \mathbf{\Delta}^i_{\gamma}(x^j)$.

%Finally, we show how the robustness measure introduced in Definition \ref{ProbDef:Robustness} can be used for pointwise interpretability analysis and compare our findings with those obtained by pixel-wise interpretability using LIME \citep{ribeiro2016should}, a popular model-agnostic interpretability technique using local linear approximations. To this end, we introduce an interpretability metric that can be seen as a non-linear generalisation of numerical gradient estimation. Defining one-sided intervals $T^i_{\gamma}(x^*) = [x^*, x^* + \gamma*e_i]$ (with $e_i$ being the vector of $0$s except for $1$ at dimension $i$), we compare how much the maximum and minimum values can change over the one-sided intervals in both directions (similarly to symmetric differences in numerical gradient estimation). This means we compute
%\begin{align*}
%    \mathbf{\Delta^i_{\gamma}}(x^*) &= &[\piSup{T^i_{\gamma}(x^*)} -\piSup{T^i_{-\gamma}(x^*)}]\\
%    && + [\piInf{T^i_{\gamma}(x^*)}-\piInf{T^i_{-\gamma}(x^*)}]. \label{eq:interpretcenter}
%\end{align*}
%\begin{equation*}
%        \mathbf{\Delta^i_{\gamma}} = \frac{1}{M} \sum_{j = 1}^{M} \mathbf{\Delta^i_{\gamma}}(x^j)
%\end{equation*}
%for a global analysis, following LIME's approach of aggregating local insights into a global insight by averaging over a selection of $M$ test points.

\begin{figure}[h]
  	\centering
	\includegraphics[clip = on, trim = 10mm 10mm 10mm 10mm ,width = 0.125\textwidth]{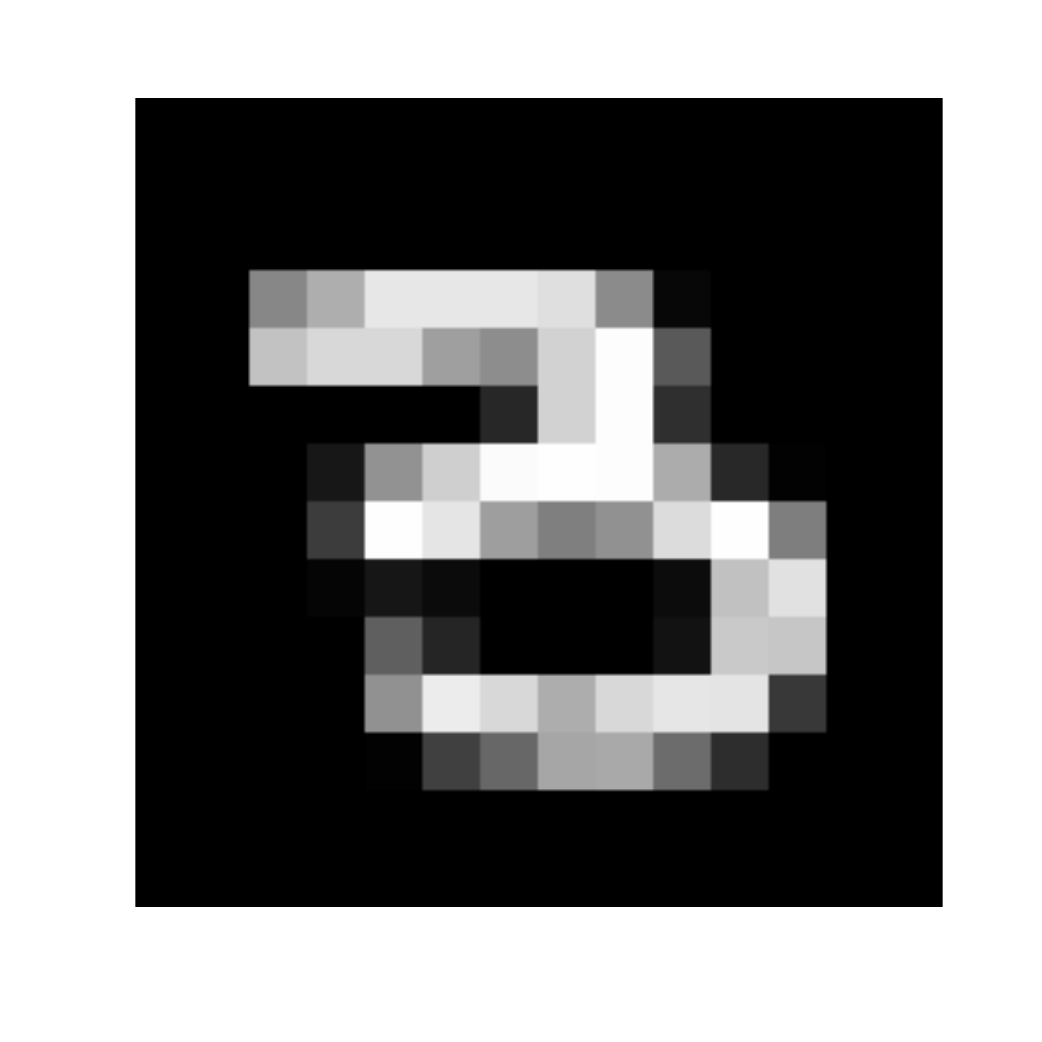}
	\includegraphics[clip = on, trim = 10mm 10mm 10mm 10mm ,width = 0.125\textwidth]{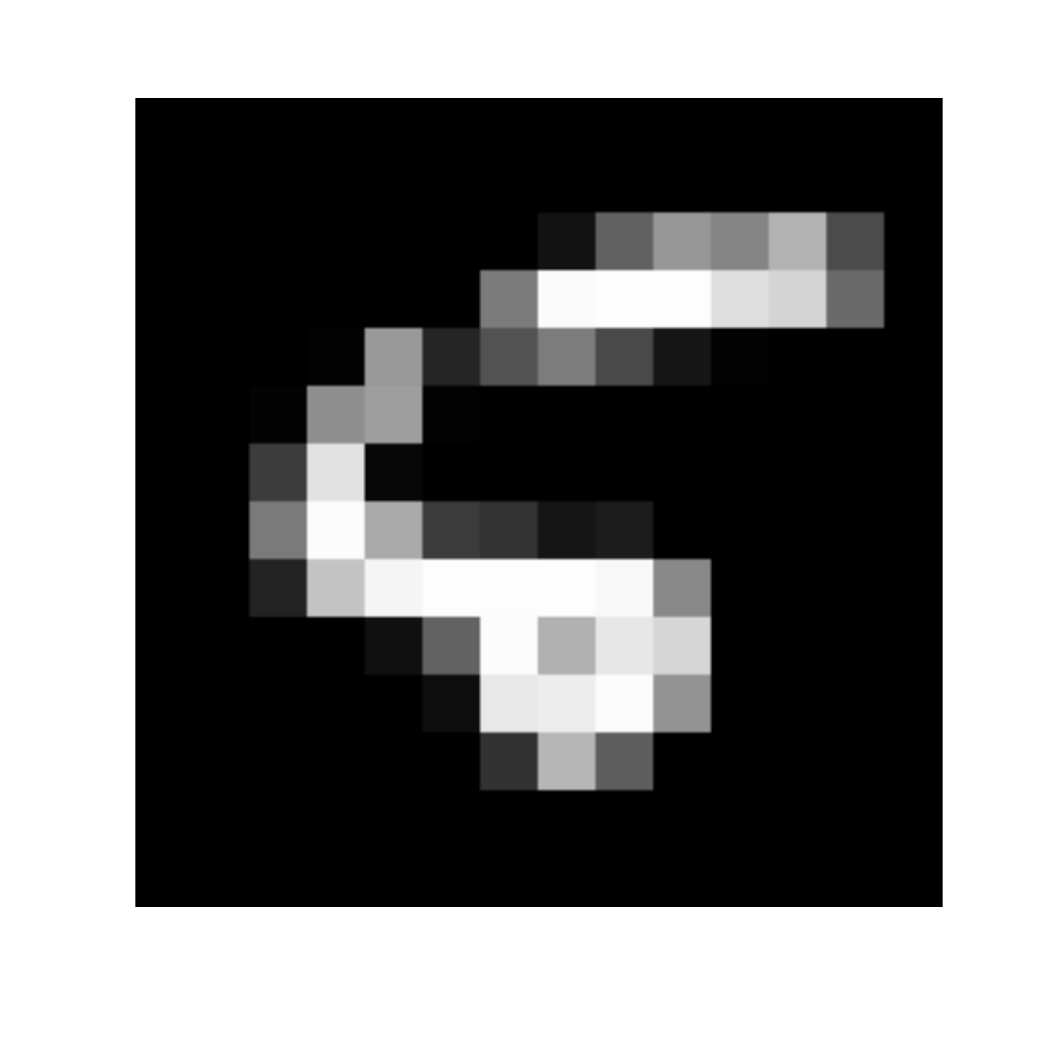}
	\includegraphics[clip = on, trim = 10mm 10mm 10mm 10mm ,width = 0.125\textwidth]{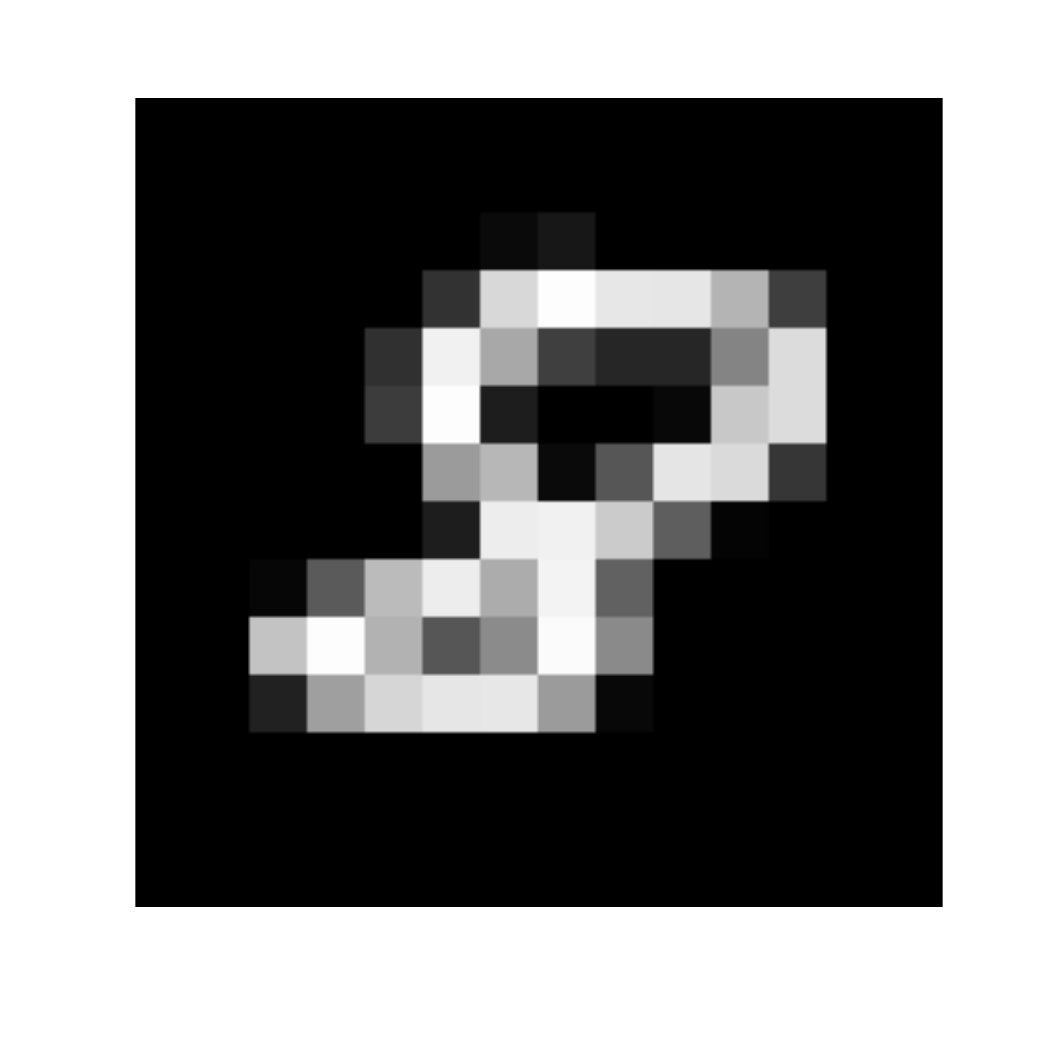} \\ 
	
	\includegraphics[clip = on, trim = 10mm 10mm 10mm 10mm ,width = 0.125\textwidth]{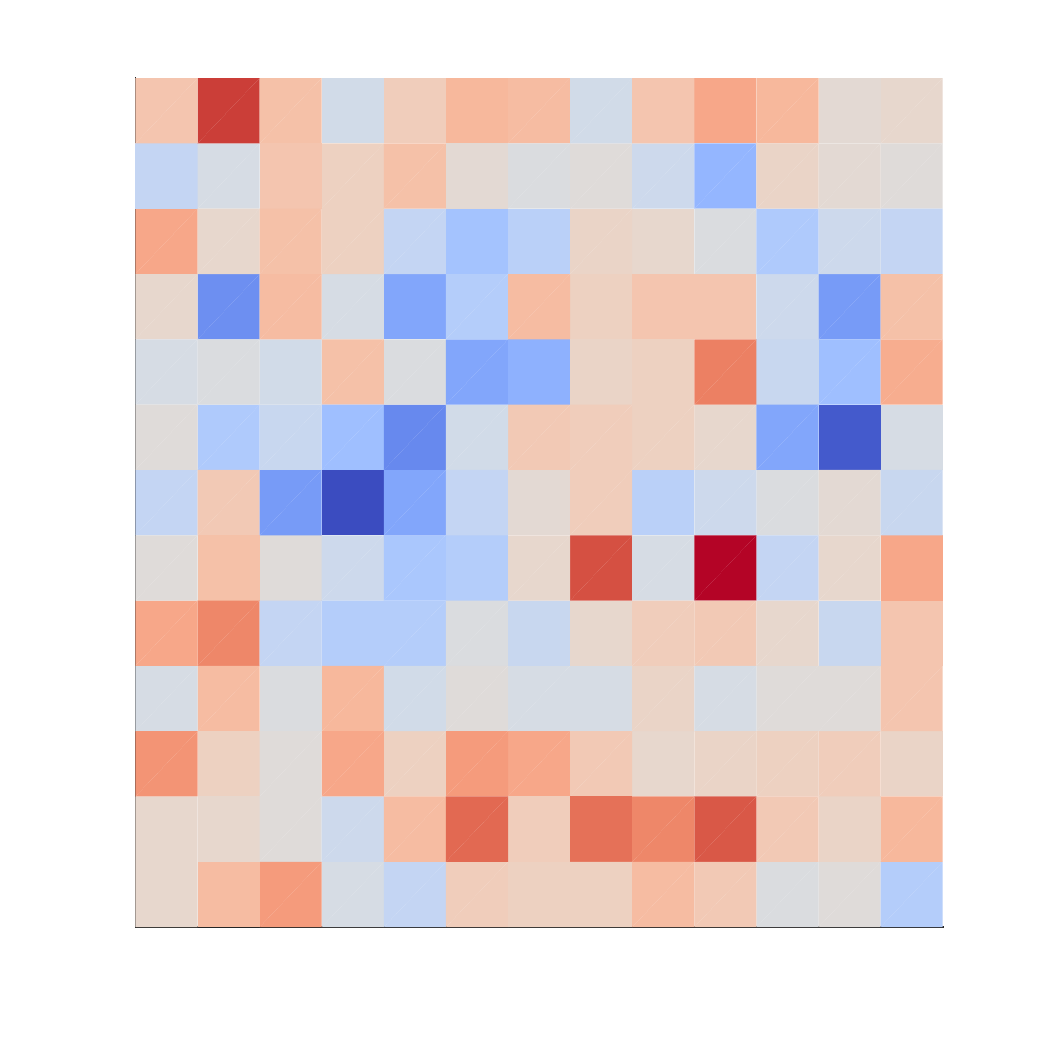}
	\includegraphics[clip = on, trim = 10mm 10mm 10mm 10mm ,width = 0.125\textwidth]{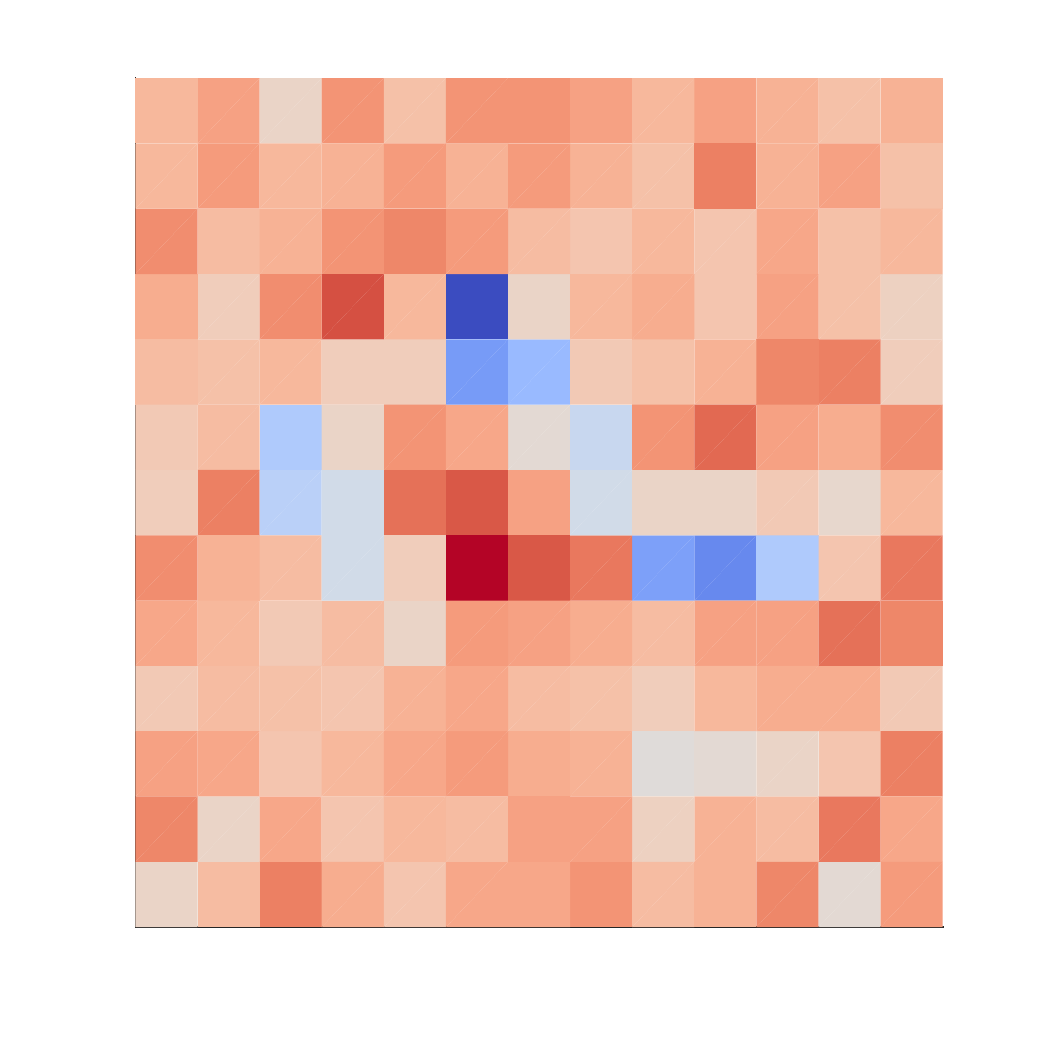}
	\includegraphics[clip = on, trim = 10mm 10mm 10mm 10mm ,width = 0.125\textwidth]{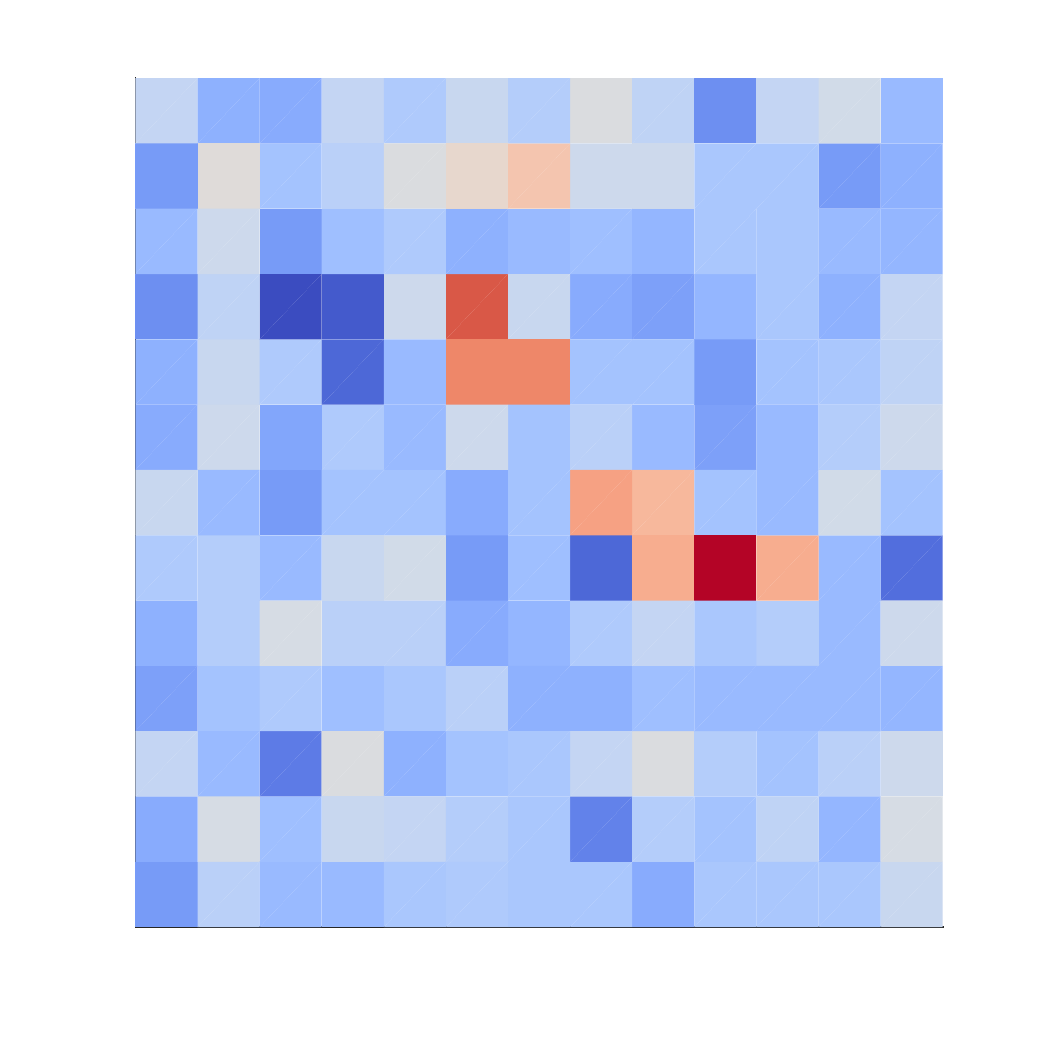} \\

	\includegraphics[clip = on, trim = 10mm 10mm 10mm 10mm ,width = 0.125\textwidth]{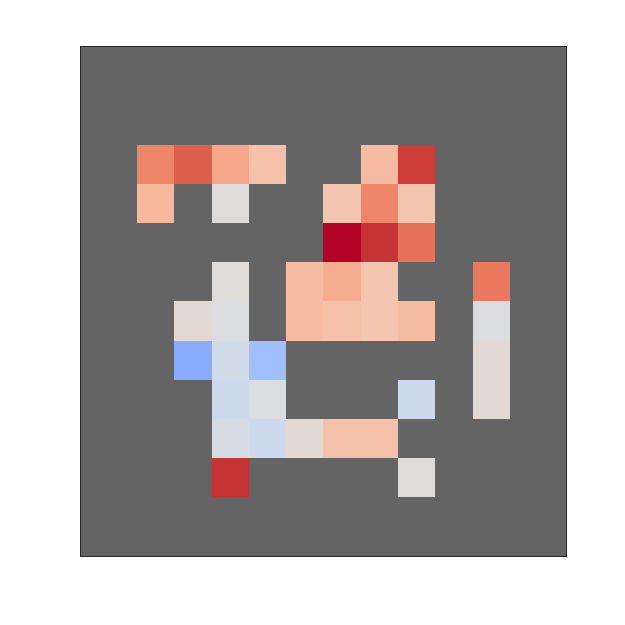}
	\includegraphics[clip = on, trim = 10mm 10mm 10mm 10mm ,width = 0.125\textwidth]{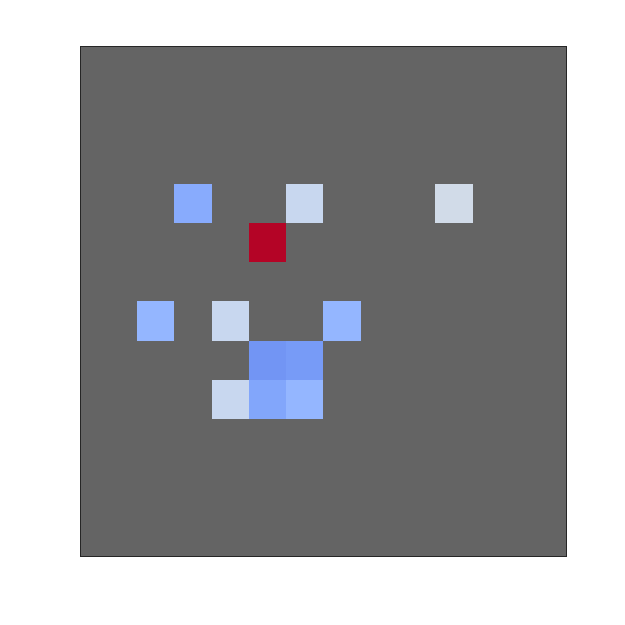}
	\includegraphics[clip = on, trim = 10mm 10mm 10mm 10mm ,width = 0.125\textwidth]{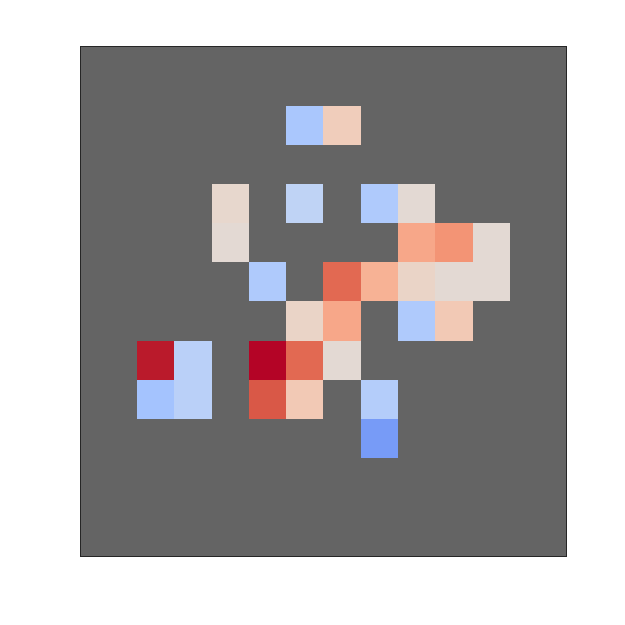}
	\caption{\textbf{First row}: Samples selected from MNIST358. \textbf{Second row}: Interpretability metric estimation using our method. \textbf{Third row}: Results obtained using LIME.} 
	\label{fig:Interpretability}
	\vspace*{-0.0cm}
\end{figure}
\paragraph{Local Interpretability for MNIST358}
Figure \ref{fig:Interpretability} shows the results for three samples selected from MNIST358 (top row), with the heat maps depicting the results of our method (second row) and those for LIME (third row, greyed out pixels are marked as irrelevant by LIME).
The colour gradient varies from red (positive impact, pixel value increase causing increased class probability of shown digit) to blue (negative impact, pixel value increase decreasing the class probability). 
For digit 3, our method obtains for example a contiguous blue patch on the left. Increasing the values of these pixels would modify the 3 into an 8. %, and the blue colour assigned by our method indicates that the corresponding decrease in class 3 probability would be reflected by the GPC model at hand. 
Indeed, when whitening the pixels of the blue patch, the class 3 probability assigned by the model decreases from $0.58$ to $0.40$.
Similarly, for digit 5, our methods identify a blue patch that would change the 5 into an 8 and again the GPC model indeed lowers its class 5 probability when the patch is whitened.
Similarly, for digit 8, our method identifies a blue patch of 3 pixels towards the top left, which would turn it into something resembling digit 3 if whitened. %This happens indeed to the classification probabilities, which increase from $0.19$ to $0.22$ for class $3$ while decreasing from $0.54$ to $0.5$ when adding $0.5$ to only these 3 pixels.
%The patterns found by LIME in these examples are either not as clear or different. 
%We argue that the different outcomes are due to the fact that, unlike LIME, our metric does not rely on the assumption of local linearity of the prediction model, which we inspect further in the next paragraph.
\begin{figure}[h]
	\centering
	{\hspace*{.5cm}  \includegraphics[width = 0.35\textwidth]{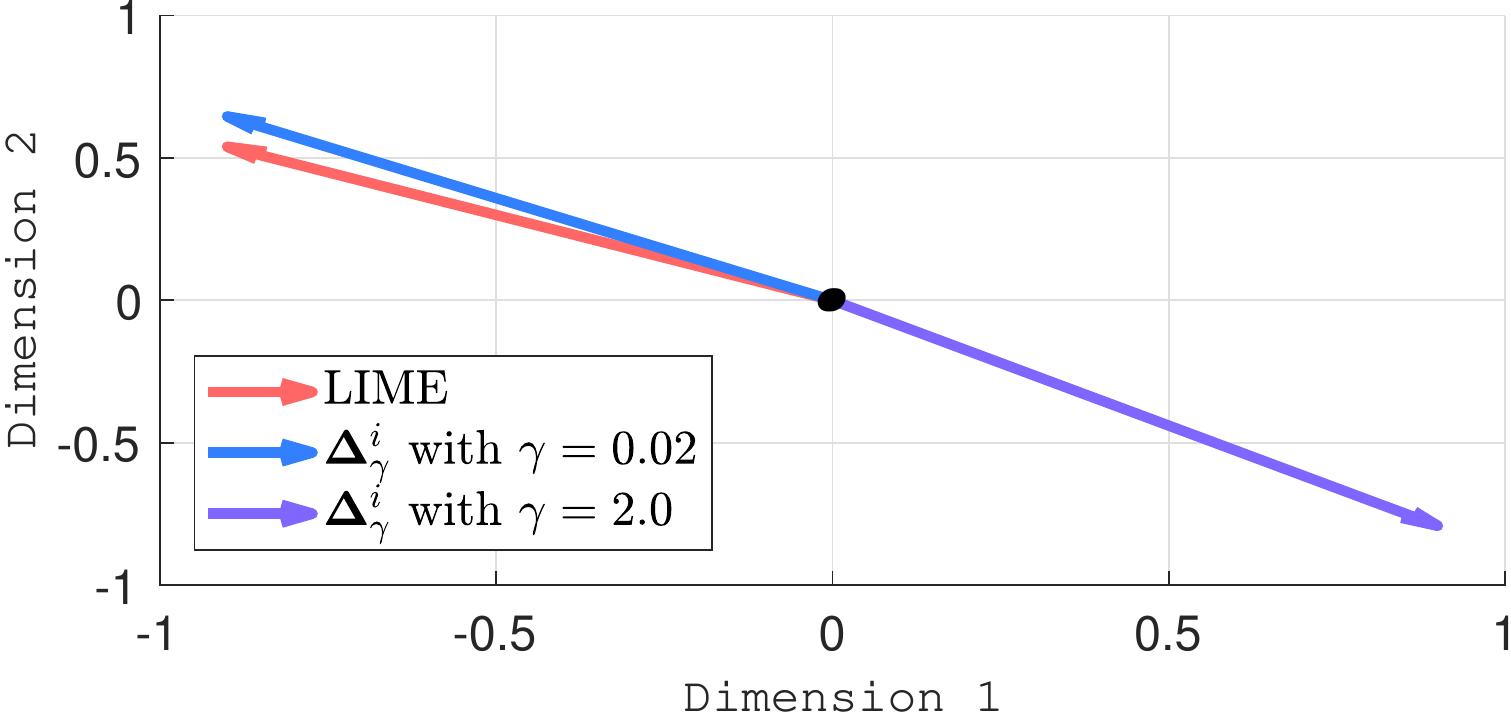}}
	{\hspace*{.5cm}  \includegraphics[width = 0.35\textwidth]{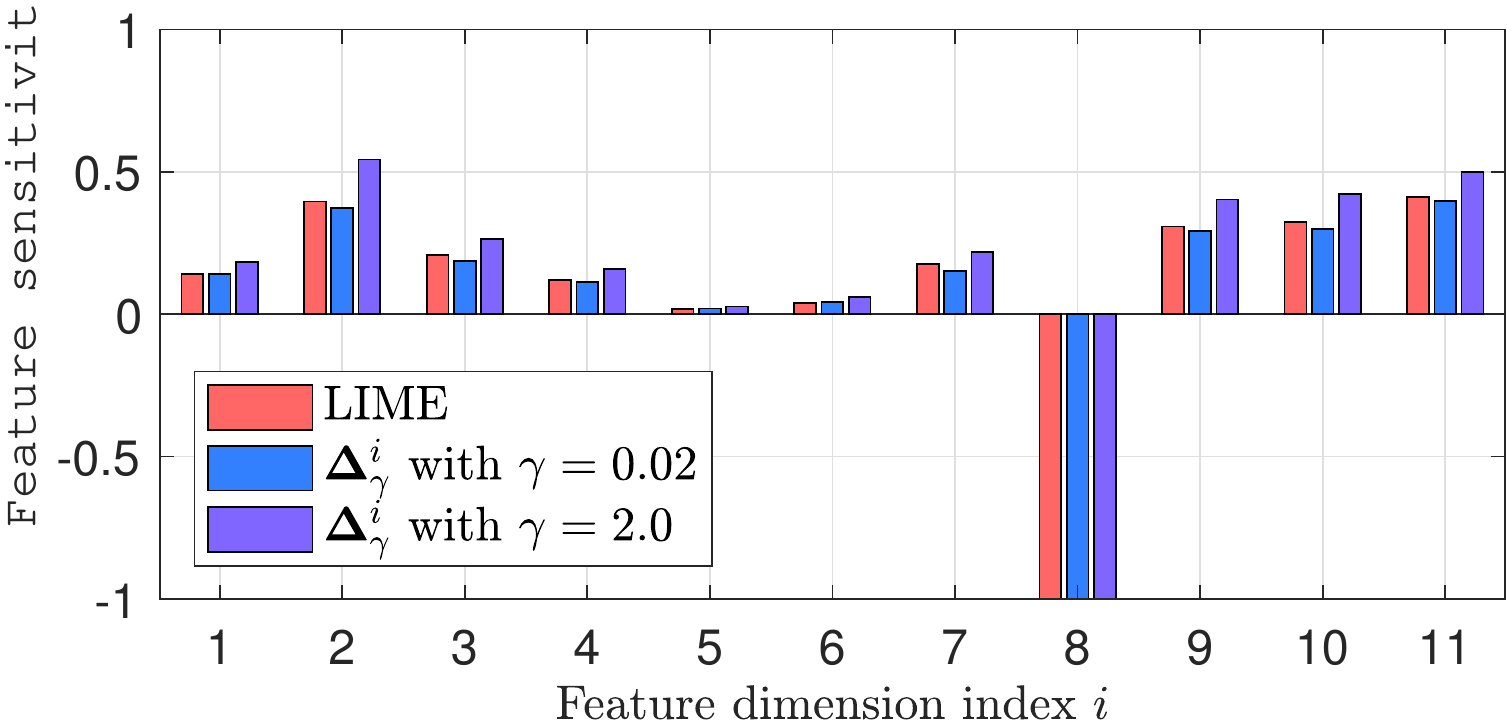}}
	%\vspace{0.2cm}
	\caption{Global feature sensitivity analysed by LIME and our metric $\mathbf{\Delta}^i_{\gamma}$. All values normed to unit scale for better comparison. \textbf{Top:} Results for Synthetic2D dataset mapped out on plane. \textbf{Bottom:} Results for SPAM dataset.} 
	\label{fig:interpret_appendix}
\end{figure}
\paragraph{Global Interpretability for the Binary Datasets}
We perform global interpretability analysis on the GPC models trained on the Synthetic2D and SPAM datasets, using $50$ random test points.
The results are shown in Figure \ref{fig:interpret_appendix}.
For Synthetic2D (top row), LIME suggests that a higher probability of belonging to class 1 (depicted as the direction of the arrow in the plot) corresponds to lower values along dimension 1 and higher values along dimension 2.
%For the synthetic 2D dataset (top row), we see that non-linearities of the GPC model can have a significant effect on the outcome of the interpretability analysis. LIME's global metric, which averages over the local metrics for each test point, indicates that a higher probability of belonging to Class 1 (direction of the arrow) corresponds to lower values along Dimension 1 and higher values along Dimension 2.
As can be seen in the corresponding contour plot in Figure \ref{fig:safety1} (top left), the exact opposite is true however.
LIME, being built on linearity approximations, fails to take into account the global behaviour of the GPC.
%Inspecting the contour lines and test point locations, it is easy to understand why: most test points lie in a location where, in the immediate neighbourhood, this global relationship is reversed.
%Aggregating the local effects thus gives an incorrect global impression, which is also true for a very small $\gamma$ corresponding to a very local application of our metric $\mathbf{\Delta^i_{\gamma}}$. However our metric is able to overcome this shortfall by using a larger $\gamma$, for which the global relationship is correctly reflected.
When using a small value of $\gamma$ our approach obtains similar results to LIME.
However, with $\gamma = 2.0$ the global relationship between input and output values is correctly captured.
For SPAM, on the other hand (Figure \ref{fig:interpret_appendix}, bottom), due to linearity of the dataset and GPC, a local analysis correctly reflects the global picture.
%The results computed in fact are similar both for LIME and the method we compute, with slight quantitative differences with $\gamma = 2.0$.
%In this case, LIME is probably a better choice for model interpretation as it takes less time to calculate its metrics.

%\rev{Altogether, these results demonstrate that our method can offer additional non-linear insights that might be missed by existing state-of-the-art methods like LIME \citep{ribeiro2016should}. While we do not claim that it is superior to LIME (or other methods), we believe it is complementary and offers benefits when performing GPC model selection.}

%
\section{CONCLUSION}
We presented a method for computing, for any compact set of input points, the class probability range of a GPC model across all points in that set, up to any precision $\epsilon > 0$. This allows us to analyse robustness and safety against adversarial attacks, which we have demonstrated on multiple datasets and approximate Bayesian inference techniques.
%As our theoretical results are valid for any Gaussian approximation, we plan to use them to formally quantify the robustness of other Bayesian classification models. %, such as Bayesian neural networks.
%\subsubsection*{Acknowledgements}
%Use unnumbered third level headings for the acknowledgements title.
%All acknowledgements go at the end of the paper.
\section{ACKNOWLEDGMENT}
This project was partly funded by the EU's Horizon 2020 research and innovation programme under the Marie Sk\l{}odowska-Curie (grant agreement No 722022), the ERC under the European Union’s Horizon 2020 research and innovation programme (grant agreement No. 834115) and the EPSRC Programme Grant on Mobile Autonomy (EP/M019918/1). Further funding was received from the Konrad-Adenauer-Stiftung and the Oxford Man-Institute of Quantitative Finance.

{
%\small
\bibliographystyle{iclr2020_conference}
\bibliography{Biblio}

}

\clearpage
\appendix
\section*{Supplementary Material: ADVERSARIAL ROBUSTNESS GUARANTEES FOR CLASSIFICATION WITH GAUSSIAN PROCESSES}
In the first Section of this Supplementary Material we present the proof of Propositions \ref{Theorem:BOundsGenericBIclass} and \ref{Prop:Gaussian}, as well as Theorem \ref{TH:convergenceBrenchAndBOund}.
Further technical results concerning multiclass classification are treated in Section \ref{app:multiclass}.
In Section \ref{app:probit} we detail the case of binary classification using the probit likelihood function.
In Section \ref{app:variancebound} we detail our approach for computing a lower bound of the predictive variance and mention how promising candidate points for the GPC bounding can be computed. 
We empirically analyse the computational complexity of the branch and bound methodology in a runtime analysis in Section \ref{app:runtime}.
In Section \ref{app:experiments} we describe the datsets used and detail the experimental settings. 
Finally, in Section \ref{app:interpretability}, details for the interpretability metric we use in the experimental section are given.

\section{PROOFS FOR BINARY CLASSIFICATION BOUNDS}
\label{app:proofs}
\subsection{Proof of Proposition \ref{Theorem:BOundsGenericBIclass}}

\begin{proof}
We detail the proof for $\min_{x \in T} \pi(x \vert \mathcal{D}).$ The $\max$ case follows similarly.
\begin{align*}
  &\min_{x \in T} \pi(x \vert \mathcal{D})  \\
  &\quad \text{(By definition)}\\
  =&\min_{x \in T}\int_{- \infty}^{+ \infty} \sigma (\bar{f}) q(f(x)=\bar{f} | \mathcal{D})d\bar{f}\\
  &\quad \text{(By additivity of integrals)}\\
   =&  \min_{x \in T}\sum_{i=1}^{N} \int_{a_i}^{b_i} \sigma(\bar f) q(f(x)=\bar{f} | \mathcal{D})d\bar{f} \\
   &\quad \text{(By monotonicity of $\sigma$ and non-negativity of $q$)}\\
     \geq & \min_{x \in T }\sum_{i=1}^{N}   \int_{a_i}^{b_i} \sigma(a_i) q(f(x)=\bar{f} | \mathcal{D})d\bar{f}  \\
     &\quad \text{(By definition of minimum and of $q$)} \\
    \geq &  \sum_{i=1}^{N}  \sigma(a_i) \min_{x \in T} \int_{a_i}^{b_i} \mathcal{N}(\bar f | \mu(x),\Sigma(x))d \bar{f}
\end{align*}
\end{proof}

\subsection{Proof of Proposition 2}
\begin{proof}
We provide the proof for the $\min$ case, similar arguments hold for the $\max$.
By definition of $\mu^L_T$, $\mu^U_T$, ${\Sigma}^L_T $, ${\Sigma}^U_T$ we have that:

\begin{align*}
&\min_{x \in T} \int_{a}^{b} \mathcal{N}(\bar{f} | \mu(x),\Sigma(x))d \bar{f} 
\geq\\
&\min_{\substack{  \mu \in [\mu^L_T,\mu^U_T] \\ \Sigma \in [{\Sigma}^L_T ,{\Sigma}^U_T]  }} \int_{a}^{b} \mathcal{N}(\bar{f} | \mu,\Sigma) d \bar{f} =\\
&\frac{1}{2} \min_{\substack{  \mu \in [\mu^L_T,\mu^U_T] \\ \Sigma \in [{\Sigma}^L_T ,{\Sigma}^U_T]  }} \left( \erf \left(\frac{\mu-a}{ \sqrt{2\Sigma}}\right) - \erf \left(\frac{\mu-b}{ \sqrt{2\Sigma}}\right) \right):= \\
&\frac{1}{2} \min_{\substack{  \mu \in [\mu^L_T,\mu^U_T] \\ \Sigma \in [{\Sigma}^L_T ,{\Sigma}^U_T]  }} \Phi(\mu,\Sigma) . 
\end{align*}
By looking at the partial derivatives we have that:
\begin{align*}
    &\frac{\partial \Phi(\mu,\Sigma)}{\partial \mu} =\\
    &\frac{\sqrt{2}}{\sqrt{ \pi \Sigma }} \left( e^{-\frac{({\mu} - b)^2}{2\Sigma}  } -  e^{-\frac{({\mu} - a)^2}{2\Sigma}  } \right) 
    \geq 0 \Leftrightarrow 
     \mu \leq \frac{a + b}{2} =: \mu^m
\end{align*}
and that if $\mu \not\in [a,b]$: 
\begin{align*}
    &\frac{\partial \Phi(\mu,\Sigma)}{\partial \Sigma} = \\
&    \frac{1}{\sqrt{2 \pi \Sigma^3 }} \left( ({\mu} - b_i) e^{-\frac{({\mu} - b_i)^2}{2\Sigma^2}  } - ({\mu} - a_i) e^{-\frac{({\mu} - a_i)^2}{2\Sigma^2}  } \right) 
    \geq 0\\ 
    &\Leftrightarrow 
    \Sigma \leq \frac{{(\mu-a)^2-(\mu-b)^2}}{{2 \log \frac{\mu-a}{\mu-b}}} := \Sigma^m(\mu) 
\end{align*}
otherwise the last inequality has no solutions.
As such $\mu^m$ and $\Sigma^m$ will correspond to global maximum wrt to $\mu$ and $\Sigma$ respectively.
As $\Phi$ is symmetric wrt $\mu^m$ we have that the minimum value wrt to $\mu$ is always obtained for the point furthest away from $\mu^c$, that is: $\underline{\mu} = \argmax_{\mu \in [ \mu^L_T, \mu^U_T ]} \vert \mu^m - \mu \vert $.
The minimum value wrt to $\Sigma$ will hence be either for $\Sigma^L_T$ or $\Sigma^U_T$, that is $\underline{\Sigma} =\argmin_{\Sigma \in \{\Sigma^L_T, \Sigma^U_T\}} \Phi(\underline{\mu},\Sigma).$

\end{proof}

\subsection{Proof of Theorem \ref{TH:convergenceBrenchAndBOund} }

\begin{proof}
We consider the $\min$ case. The $\max$ case follows similarly.

In order to show the convergence of the branch and bound, we need to show that for any test point $x$ there exists $r>0$ and a partition of the latent space $\mathcal{S}=\{S_i,i=\{1,...,N \} \}$ such that for the  interval $I=[x-rI,x+rI]$ we have that for any $\bar x \in I$
$$| \pi(\bar x \vert \mathcal{D} )-\sum_{i=1}^{N}  \sigma(a_i) \min_{x \in I} \int_{a_i}^{b_i} \mathcal{N}(\bar f | \mu(x),\Sigma(x))d \bar{f}|\leq \epsilon.$$
In order to do that, we first observe that by the Lipschitz continuity of mean and variance we have that for $x_1,x_2 \in I$, it holds that
$$ |\mu(x_1)- \mu(x_2)|\leq K^{\mu}r  $$
$$ |\Sigma(x_1)- \Sigma(x_2)|\leq K^{\Sigma}r, $$
for certain $K^{\mu},K^{\Sigma}>0.$
Now, for $S_i \in \mathcal{S},$ consider  $x^i$ such that $ \int_{a_i}^{b_i} \mathcal{N}(\bar f | \mu(x^i),\Sigma(x^i))d \bar{f}=\min_{x \in I} \int_{a_i}^{b_i} \mathcal{N}(\bar f | \mu(x),\Sigma(x))d \bar{f}$.
Further, due to the monotonicity and continuity  of $\sigma$, we can consider a uniform discretisation of the y-axis for $\sigma$ in $N$ intervals. That is, for all $S_i\in \mathcal{S}$, we have that $\sigma(b_i)=\sigma(a_i)+\frac{1}{N}$.
At this point, for any $\bar x \in I$ the following calculations follow
\begin{align}
    & |\pi(\bar{x} \vert \mathcal{D} )-\sum_{i=1}^{N}  \sigma(a_i)  \int_{a_i}^{b_i} \mathcal{N}(\bar f | \mu(x^i),\Sigma(x^i))d \bar{f} |\\  
    & \text{(By Definition)} \nonumber\\
    =&\nonumber \vert  \int  \sigma(\bar f)  \mathcal{N}(\bar f | \mu(\bar x),\Sigma(\bar x))d \bar{f}-\\
    &\quad \quad  \quad  \sum_{i=1}^{N}  \sigma(a_i)  \int_{a_i}^{b_i} \mathcal{N}(\bar f | \mu(x^i),\Sigma(x^i))d \bar{f} \vert\\
    &\text{(By additivity of integral and re-ordering terms)}  \nonumber\\
    =\nonumber&|\sum_{i=1}^{N} \big ( \int_{a_i}^{b_i} \sigma(\bar f)  \mathcal{N}(\bar f | \mu(\bar x),\Sigma(\bar x))d \bar{f} \, - \,\\
    &\quad \quad \quad \int_{a_i}^{b_i} \sigma(a_i) \mathcal{N}(\bar f | \mu(x^i),\Sigma(x^i))d \bar{f} \big)|\\
    & \text{(As for any $\bar{f}\in S_i$, 
    $\sigma(a_i)\leq \sigma(f_i)\leq \sigma(a_i)+\frac{1}{N} $ })\nonumber\\
%    $\int_{a_i}^{b_i} \sigma(\bar f)  \mathcal{N}(\bar f | \mu(\bar x),\Sigma(\bar x))d \bar{f} \geq \int_{a_i}^{b_i} \sigma(a_i) \mathcal{N}(\bar f | \mu(x^i),\Sigma(x^i))d \bar{f}$ )}  \nonumber\\
  \nonumber  \leq & | \sum_{i=1}^{N} \big ( \int_{a_i}^{b_i} (\sigma(a_i) + \frac{1}{N} )  \mathcal{N}(\bar f | \mu(\bar x),\Sigma(\bar x))  \, - \, \noindent\\
& \quad  \quad \quad \quad \quad \quad  \quad \quad \quad  \quad   \sigma(a_i) \mathcal{N}(\bar f | \mu(x^i),\Sigma(x^i))d \bar{f} \big) |\\ 
    & \text{(By Triangle Inequality)}  \nonumber\\ \nonumber
   \leq & | \sum_{i=1}^{N}  \int_{a_i}^{b_i}  \frac{1}{N}   \mathcal{N}(\bar f | \mu(\bar x),\Sigma(\bar x)) d\bar f | + \\ \nonumber
   &| \sum_{i=1}^{N} \big( \sigma(a_i)  \int_{a_i}^{b_i}  \mathcal{N}(\bar f | \mu(\bar x),\Sigma(\bar x))   - \\
   &\quad  \quad \quad \quad \quad \quad  \quad \quad \quad  \quad   \mathcal{N}(\bar f  \mu(x^i),\Sigma(x^i))d \bar{f} \big) | \\
    &\text{(By Re-ordering terms and Triangle Inequality)}  \nonumber\\
    \leq & | \frac{1}{N} \int     \mathcal{N}(\bar f | \mu(\bar x),\Sigma(\bar x)) d\bar f | \nonumber\\  
    & \nonumber + \sum_{i=1}^{N} \sigma(a_i) |  \int_{a_i}^{b_i}  \mathcal{N}(\bar f | \mu(\bar x),\Sigma(\bar x))  \, - \\
    & \quad  \quad \quad \quad \quad \quad  \quad \quad \quad  \quad  \mathcal{N}(\bar f \mu(x^i),\Sigma(x^i))d \bar{f}  | \\
    &\text{(By properties of integrals and  $\sigma(f)\in [0,1]$)}   \nonumber\\ \nonumber
    \leq &\frac{1}{N} + \sum_{i=1}^{N} | \int_{a_i}^{b_i} ( \mathcal{N}(\bar f | \mu(\bar x),\Sigma(\bar x))  \, -\\
    &\quad  \quad \quad \quad \quad \quad  \quad \quad \quad  \quad  \ \mathcal{N}(\bar f | \mu(x^i),\Sigma(x^i)) ) d \bar{f} |
    \label{Eqn:FinalFOrmProof}
\end{align}
Now, as $ |\mu(\bar x),- \mu(x^i),|\leq K^{\mu} r  $ and
$ |\Sigma^2(\bar x)- \Sigma^2(x^i)|\leq K^{\Sigma} r$, we have that as $r\to 0$ both mean and variance converge to the same value. Hence, this implies that for each $S_i \in S$
\begin{align*}\lim_{r \to 0} \big( \int_{a_i}^{b_i}  \mathcal{N}&(\bar f | \mu(\bar x),\Sigma(\bar x)) d \bar f \, - \,\\ &\int_{a_i}^{b_i} \mathcal{N}(\bar f | \mu(x^i),\Sigma(x^i))d \bar{f} \big) =0.
\end{align*}
 As a consequence, for any  $\epsilon>0$, we can choose $N=\lceil \frac{2}{\epsilon} \rceil$ and then select $r$ such that the second term in Eqn \eqref{Eqn:FinalFOrmProof} is bounded by $\frac{\epsilon}{2}.$

\end{proof}

\section{BOUNDS FOR MULTICALSS CLASSIFICATION}
\label{app:multiclass}

\subsection*{Proof of Proposition \ref{Theorem:BOundsGenericMulticlass}}
\begin{proof}
We detail the proof for $\min_{x \in T} \pi^c(x \vert \mathcal{D} ).$ The $\max$ case follows similarly.
\begin{align*}
  &\min_{x \in T} \pi^c(x \vert \mathcal{D} )\\
  &\quad \text{(By definition)}\\
  &=\min_{x \in T}\int \sigma^c (\bar{f}) q(f(x)=\bar{f} | \mathcal{D})d\bar{f}\\
  &\quad \text{(By additivity of integral)}\\
   &=  \min_{x \in T}\sum_{i=1}^{N} \int_{S_i} \sigma^c(\bar f) q(f(x)=\bar{f} | \mathcal{D})d\bar{f}  \\
  &    \quad \text{(Because $q$ is non-negative)}\\
  & \geq  \min_{x \in T}\sum_{i=1}^{N} \int_{S_i} \min_{y \in S_i}\sigma^c(y) q(f(x)=\bar{f} | \mathcal{D})d\bar{f} \\
&    \quad \text{(By definition of infimum)} \\
    & \geq  \sum_{i=1}^{N} \min_{y \in S_i}\sigma^c(y) \min_{x \in T} \int_{S_i}  q(f(x)=\bar{f} | \mathcal{D})d\bar{f} \\
&    \quad \text{(By Definition of $q$)}\\
     &=   \sum_{i=1}^{N} \min_{y \in S_i}\sigma^c(y) \min_{x \in T} \int_{S_i} \mathcal{N}(\bar{f} | \mu(x),\Sigma(x))d \bar{f} 
\end{align*}
\end{proof}

Proposition \ref{Theorem:BOundsGenericMulticlass} in the main text implies that if we can compute infimum and supremum of the softmax over a set of the latent space (shown in Lemma \ref{lemma:computationMinMaxMulticlass}) and the mean and covariance matrix that maximise a Gaussian integral (shown in Proposition \ref{Prop:MultiCLass}), then upper and lower bounds on $\piInf{T}$ and $\piSup{T}$ can be derived. 

\begin{lemma}
\label{lemma:computationMinMaxMulticlass}
Let $S\subset \mathbb{R}^{|C|}$  be an axis-parallel hyper-rectangle.   Call $f^{max}=\argmax_{f \in S}\sigma^c(f)$ and $f^{min}=\argmin_{f \in S}\sigma^c(f)$. Assume $\sigma$ is the softmax function. Then, $f^{max}$ and $f^{min}$ are vertices of $S.$
\end{lemma}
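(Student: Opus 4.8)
The plan is to exploit the fact that the softmax $\sigma^c(f) = e^{f_c}/\sum_{j=1}^{C} e^{f_j}$ is strictly monotone in each of its arguments separately, with a direction of monotonicity that does not depend on the values of the remaining coordinates. Once this is established, a standard coordinate-pushing argument places the extrema at vertices of the box $S = \prod_{i=1}^{C}[k_i^1,k_i^2]$, and in fact identifies them explicitly.

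First I would establish the coordinate-wise monotonicity. Fixing all coordinates except $f_c$ and writing $K = \sum_{j \neq c} e^{f_j} > 0$, we have $\sigma^c(f) = (1 + K e^{-f_c})^{-1}$, which is strictly increasing in $f_c$. Fixing all coordinates except $f_i$ for some $i \neq c$, and writing $M = e^{f_c}$ and $L = \sum_{j \neq i} e^{f_j} > 0$, we have $\sigma^c(f) = M/(e^{f_i} + L)$, which is strictly decreasing in $f_i$. Equivalently, a direct differentiation gives $\partial \sigma^c/\partial f_c = \sigma^c(1-\sigma^c) > 0$ and $\partial \sigma^c/\partial f_i = -\sigma^c\sigma^i < 0$ for $i \neq c$, where $\sigma^i = e^{f_i}/\sum_j e^{f_j}$. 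The crucial point is that the sign of each of these is constant over all of $\mathbb{R}^{C}$.

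Next I would run the pushing argument for $f^{max}$, the case of $f^{min}$ being symmetric. Take any maximiser $f^\star \in S$. For each coordinate $i$ in turn, restrict $\sigma^c$ to the segment obtained by varying only $f_i$ within $[k_i^1,k_i^2]$ and fixing the other coordinates at their current values; by the monotonicity above this restriction is strictly monotone, so its maximum is attained at the appropriate endpoint, namely $k_c^2$ if $i = c$ and $k_i^1$ if $i \neq c$. Replacing $f_i^\star$ by this endpoint cannot decrease $\sigma^c$, so the new point is still a maximiser. Because the direction of monotonicity in coordinate $i$ is independent of the other coordinates, performing this replacement coordinate by coordinate terminates at the single vertex with $f_c = k_c^2$ and $f_i = k_i^1$ for $i \neq c$, which is therefore a maximiser; strict monotonicity in fact makes it the unique one. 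The symmetric argument places $f^{min}$ at the vertex with $f_c = k_c^1$ and $f_i = k_i^2$ for $i \neq c$.

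I do not expect a genuine obstacle here. The only point requiring care is that the monotonicity holds \emph{uniformly}, i.e.\ that the direction of monotonicity in coordinate $i$ is the same wherever the other coordinates sit, since this is precisely what guarantees that pushing a later coordinate does not undo the progress made on an earlier one. For the softmax the partial-derivative signs are globally constant, so this is immediate, and the remaining bookkeeping of the coordinate-by-coordinate argument is routine.
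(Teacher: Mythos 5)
Your proof is correct. Both your argument and the paper's hinge on exactly the same computation — the partial derivatives of the softmax, $\partial \sigma^c/\partial f_c = \sigma^c(1-\sigma^c) > 0$ and $\partial \sigma^c/\partial f_i = -\sigma^c \sigma^i < 0$ for $i \neq c$, with signs constant on all of $\mathbb{R}^C$ — but the mechanism for concluding differs. The paper phrases the problem as a constrained optimisation with linear (box) constraints and invokes the KKT conditions: since no partial derivative of $\sigma^c$ ever vanishes, stationarity can only hold at points where every coordinate has an active constraint, i.e.\ at vertices. You instead run an elementary coordinate-pushing argument based on uniform coordinate-wise monotonicity, which avoids any appeal to optimality conditions and buys you strictly more: you identify the extremal vertices explicitly (the maximiser has $f_c = k_c^2$ and $f_i = k_i^1$ for $i \neq c$, with the reversed pattern for the minimiser) and you get uniqueness of each extremum, neither of which the paper's proof states. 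Your emphasis on the monotonicity direction being independent of the remaining coordinates is precisely the point that makes the sequential pushing sound, and it is the same global sign-constancy fact that makes the paper's KKT argument work; in that sense the two proofs are two dressings of one observation, with yours the more self-contained and informative, and the paper's the more compact for a reader comfortable with nonlinear programming.
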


\begin{proof}
$S$ is an axis-parallel hyper-rectangle. As a consequence, it can be written as intersection of constraints of the form
$- f_i\leq -k_{i,1}$ and $ f_i\leq k_{i,2}$, where $f_i$ is the i-th component of vector $f$.
Hence, the optimisation problem for the maximisation case (minimisation case is equivalent) can be rewritten as follows:
\begin{align*}
    &\max  \sigma^c(f)\\
    \text{such that }&\forall i \in \{1,...,|C| \}-f_i\leq -k_{i,1},\quad f_i\leq k_{i,2}. 
\end{align*}
In order to solve this problem we can apply the  Karush-Kuhn-Tucker (KKT) conditions.
Being the constraints independent of $f$, the KKT conditions imply that in order to conclude the proof we just need to show that for all $f \in S, c \in \{1,...,|C| \}$, $\frac{ d \sigma^c(f)}{d f_c}\neq 0.$ This is shown in what follows.

For $f \in \mathbb{R}^n$ and $c \in \{1,...n \}$ We have 
$$ \sigma^c(f)=\frac{\exp ( f_c)}{\sum_{j=1}^C \exp (f_{j})}.$$
Then, we obtain
$$ \frac{d \sigma^c(f)}{d f_c}=\frac{\exp({f_c}) (\sum_{j\neq c} \exp (f_{j}))}{(\sum_{j=1}^C \exp (f_{j}))^2}, $$
while for $i \neq c$ we have
$$ \frac{d \sigma^c(f)}{f_i}=-\frac{\exp({f_c}) \exp (f_{i})}{(\sum_{j=1}^C \exp (f_{j}))^2}. $$
This implies that for $f \in \mathbb{R}^n$ and $i\neq c$ we always have
$$ \frac{d \sigma^c(f)}{d f_c}>0  \quad \quad   \frac{d \sigma^c(f)}{d f_i}<0  .$$
\end{proof}

Note that in Lemma \ref{lemma:computationMinMaxMulticlass} we assumed that $S$ is an hyper-rectangle. However, the lemma can be trivially extended to more general sets given by the intersection of arbitrarily many  half-spaces generated by hyper-planes perpendicular to one of  the axis. 

The following Lemma is needed to prove Proposition \ref{Prop:MultiCLass}.

\begin{lemma}
\label{Lemma:supProbab}
Let $X$ and $Y$ be random variables with joint density function $f$. Consider measurable sets $A$ and $B$. Then, it holds that 
$$ P(X\in A | Y\in B)\leq \sup_{y \in B}  P(X\in A | Y=y). $$
\end{lemma}
\begin{proof}
\begin{align*}
    &P(X\in A | Y \in B)\\
    & = \frac{P( X\in A \wedge Y \in B  )}{P(Y \in B)}\\
    &= \frac{\int_{x \in A} \int_{y \in B}  f(X=x \wedge Y= y) dx dy  }{P(Y \in B)}\\
    &=  \frac{\int_{x \in A} \int_{y \in B}  f(X=x | Y= y) f(Y=y) dx dy  }{P(Y \in B)}\\
    &\leq   \frac{\int_{x \in A} \int_{y \in B} \sup_{\bar y \in B } f(X=x | Y= \bar y) f(Y=y) dx dy  }{P(Y \in B)}\\
    &=\frac{\int_{x \in A} \sup_{\bar y \in B } f(X=x | Y= \bar y) dx \int_{y \in B}  f(Y=y)dy  }{P(Y \in B)}\\
    &=\frac{ \sup_{ y \in B} P(X \in A | Y=y)  P(X \in B)  }{P(Y \in B)}\\
    &=  \sup_{ y \in B} P(X \in A | Y=y),
\end{align*}
\end{proof}

\subsection{Proof of Proposition \ref{Prop:MultiCLass}.}
We consider the supremum case. The infimum follows similarly.
Let $\mathbf{y}(x)$ be a normal random variable with mean $\mu(x)$ and covariance matrix $\Sigma(x).$  Then, we have 
\begin{align*}
&\sup_{x\in T}\int_{S}\mathcal{N}(\bar{f}|\mu(x),\Sigma(x)) d \bar{f} \\
=&\sup_{x \in T} {P(\mathbf{y}(x)\in S)}\\
=&\sup_{x \in T}  P(\wedge_{i=1}^{C}k_{i}^1\leq \mathbf{y}_i(x)\leq k_{i}^{2})\\
=&\sup_{x \in T} \prod_{i=1}^{C} {P(k_{i}^1\leq \mathbf{y}_i(x)\leq k_{i}^{2}| \wedge_{j=i+1}^C k_{j}^1\leq \mathbf{y}_j(x)\leq k_{j}^2 ) }\\
&\quad \text{(By Lemma \ref{Lemma:supProbab})}\\
\leq & \sup_{x \in T} \prod_{i=1}^{C} \sup_{f \in S^{i+1}} P(k_{i}^1\leq \mathbf{y}_i(x)\leq k_{i,2}| \\
&  \quad \quad \quad \quad \quad \quad \quad \quad \quad \quad \quad \quad \wedge_{j=i+1}^C \mathbf{y}_j(x)=f_{j-i})  \\
\leq &  \prod_{i=1}^{C}\sup_{x \in T,f \in S^{i+1}} P(k_{i}^1\leq \mathbf{y}_i(x)\leq k_{i}^2| \\
& \quad \quad \quad \quad \quad \quad \quad \quad \quad \quad \quad \quad \wedge_{j=i+1}^C \mathbf{y}_j(x)=f_{j-i})  \\
\end{align*}
Notice that for each $i \in \{1,...,C \}$, $P(k_{i}^1\leq \mathbf{y}_i(x)\leq k_{i}^2| \wedge_{j=i+1}^C \mathbf{y}_j(x)=f_{j-i})$ is the integral of a uni-dimensional Gaussian random variable, as a Gaussian random variable conditioned to a jointly Gaussian random variable is still Gaussian.

\section{BOUNDS FOR PROBIT BINARY CLASSIFICATION}
\label{app:probit}
For the case that the likelihood $\sigma$ is taken to be the probit function, that is, $\sigma (\bar{f})= \Phi ( \lambda \bar{f})$ is the cdf of the univariate standard Gaussian distribution scaled by $\lambda > 0$, it holds that  
\begin{equation*}
    \pi (x\, | \, \mathcal{D}) = \Phi \left( \frac{\mu (x)}{ \sqrt{ \lambda^{-2} + \Sigma (x) }} \right),
\end{equation*} 
where $\mu (x)$ and $\Sigma (x)$are the mean and variance of  $q(f(x)=\bar f | \mathcal{D})$ \cite{bishop2006pattern}. We can use this result to derive analytic upper and lower bounds for Eqn \eqref{Eq:infSubROsustness} without the need to apply Proposition \ref{Theorem:BOundsGenericBIclass}, by relying on upper and lower bounds for the latent mean and variance functions. 
This can be obtained by direct inspection of the derivatives of $\pi(x \vert \mathcal{D})$.
\begin{lemma}\label{lemma:bounds_probit}
Let $T \subseteq \mathbb{R}^d$. Then, we have that
\begin{align} \label{eq:lb_probit1}
    \Phi \left( \frac{\mu_T^L}{\sqrt{\lambda^{-2} + \underline{\Sigma}}} \right) \leq \piInf{T} \quad
    \end{align}
    and
    \begin{align} \label{eq:lb_probit2}
    \piSup{T} \leq \Phi \left( \frac{\mu_T^U}{\sqrt{\lambda^{-2} + \overline{\Sigma}}} \right) 
\end{align}
with $\underline{\Sigma}={\Sigma}^U_T$ if $\mu_T^L \geq 0$ and ${\Sigma}^L_T $ otherwise, while $\overline{\Sigma}={\Sigma}^L_T $ if $\mu_T^U \geq 0$ and ${\Sigma}^U_T$ otherwise.
\end{lemma}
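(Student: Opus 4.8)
The plan is to exploit the closed-form probit identity $\pi(x \mid \mathcal{D}) = \Phi\bigl(\mu(x)/\sqrt{\lambda^{-2} + \Sigma(x)}\bigr)$ recalled just above the lemma, and to reduce the optimisation over $T$ to a monotonicity analysis of the single two-variable function $g(\mu,\Sigma) := \Phi\bigl(\mu/\sqrt{\lambda^{-2}+\Sigma}\bigr)$ on the rectangle $[\mu_T^L,\mu_T^U]\times[\Sigma_T^L,\Sigma_T^U]$. First I would note that, by the defining inequalities \eqref{eq:mean_variance_bounds1} and \eqref{eq:mean_variance_bounds2}, every $x\in T$ satisfies $\mu(x)\in[\mu_T^L,\mu_T^U]$ and $\Sigma(x)\in[\Sigma_T^L,\Sigma_T^U]$. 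Consequently $\piInf{T}=\min_{x\in T} g(\mu(x),\Sigma(x)) \geq \min g$ and $\piSup{T} \leq \max g$, where both extrema are now taken over the rectangle; it therefore suffices to locate the extremal corners of $g$.

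Next I would inspect the two partial derivatives. Writing $\phi=\Phi'$ for the standard normal density, the chain rule gives $\partial g/\partial\mu = \phi(\cdot)/\sqrt{\lambda^{-2}+\Sigma} > 0$, so $g$ is strictly increasing in $\mu$ for every fixed $\Sigma$; hence the minimum of $g$ fixes the mean at $\mu_T^L$ and the maximum fixes it at $\mu_T^U$. For the variance, $\partial g/\partial\Sigma = -\mu\,\phi(\cdot)/\bigl(2(\lambda^{-2}+\Sigma)^{3/2}\bigr)$, whose sign is exactly the opposite of the sign of $\mu$. Thus, once the mean has been set at its extremal corner, $g$ is monotone in $\Sigma$ in a direction dictated solely by the sign of that fixed mean.

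The case analysis then reproduces the stated corners. For $\piInf{T}$ the mean is frozen at $\mu_T^L$: if $\mu_T^L\geq 0$ then $g$ is non-increasing in $\Sigma$ and the minimum sits at $\Sigma_T^U$, while if $\mu_T^L<0$ it sits at $\Sigma_T^L$, which is precisely the definition of $\underline{\Sigma}$. Symmetrically, for $\piSup{T}$ the mean is frozen at $\mu_T^U$ and $g$ is maximised at $\Sigma_T^L$ when $\mu_T^U\geq 0$ and at $\Sigma_T^U$ otherwise, giving $\overline{\Sigma}$. I do not anticipate a genuine obstacle, since this is a direct corner-extremum computation akin to (and simpler than) the derivative inspection already performed in the proof of Proposition \ref{Prop:Gaussian}; the only point needing a word of care is that the joint box optimisation separates, which holds because $g$ is monotone in $\mu$ uniformly in $\Sigma$, so the $\mu$-optimisation can be resolved first and the residual one-variable problem in $\Sigma$ settled by the sign argument (the degenerate case $\mu=0$, where $g\equiv\tfrac12$ independently of $\Sigma$, is harmless).
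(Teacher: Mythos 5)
Your proof is correct and is exactly the argument the paper intends: the paper gives no detailed proof of this lemma, stating only that it follows ``by direct inspection of the derivatives of $\pi(x\,\vert\,\mathcal{D})$'', and your monotonicity analysis of $g(\mu,\Sigma)=\Phi\bigl(\mu/\sqrt{\lambda^{-2}+\Sigma}\bigr)$ over the box $[\mu_T^L,\mu_T^U]\times[\Sigma_T^L,\Sigma_T^U]$ (increasing in $\mu$, and monotone in $\Sigma$ with direction given by the sign of $\mu$) is precisely that inspection, carried out in full. The sign computations and the resulting corner selections for $\underline{\Sigma}$ and $\overline{\Sigma}$ match the lemma's statement, and your handling of the separation of the box optimisation and of the degenerate case $\mu=0$ is sound.
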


\section{BOUNDS ON LATENT MEAN AND VARIANCE}
\label{app:variancebound}
In this section of the Supplementary Material we briefly review how lower and upper bounds on the a-posteriori mean and variance can be computed, and further show how this give us candidate points for the evaluation of bounds (that is line 6 in Algorithm \ref{alg:bnb_sketch} of the main paper).

We obtain bounds on latent mean and variance by applying the framework presented in \cite{cardelli2018robustness} for computation of $\mu^L_T, \mu^U_T$ and $\Sigma^U_T $, and subsequently extend it for the computation of $\Sigma^L_T$.
Briefly, assuming continuity and differentiability of the kernel function defining the GPC covariance, it is possible to find linear upper and lower bounds on the covariance vector, which can be propagated through the inference formula for $q(f(x)=\bar{f} | \mathcal{D})$.
The bounding functions obtained in this way can be analytically optimised for $\mu^L_T$ and $\mu^U_T$, while convex quadratic programming is used to obtain $\Sigma^U_T$ (see \cite{cardelli2018robustness} for details).
Finally, we solve the concave quadratic problem that arises when computing
$\Sigma^L_T$ by adapting methods introduced in \cite{rosen1986global}, which reduces the problem to the solution of $2 \vert \mathcal{D} \vert + 1$ linear programming problems.
This is detailed in the following subsection.

As discussed in Section \ref{Sec:BinaryBounds} in order to obtain $\piInfU{T}$ and $\piSupL{T}$ it suffice to evaluate the GPC in any point inside $T$.
However, the more close $\piInfU{T}$ and $\piSupL{T}$ are to $\piInf{T}$ and $\piSup{T}$ respectively, the more quicker will be the convergence of the branch and bound algorithm (as per line 7 in Algorithm \ref{alg:bnb_sketch} in the main paper).
Notice that, in solving the optimisation problems associated to  $\mu^L_T, \mu^U_T$, $\Sigma^U_T $ and $\Sigma^L_T$ we obtain four extrema points in $T$ on which the GPC assume the optimal values a-posteriori mean and variance values.
As these points belong to $T$ and provide extreme points for the latent function they make promising candidates for the evaluation of $\piInfU{T}$ and $\piSupL{T}$.
Specifically in line 6 of Algorithm \ref{alg:bnb_sketch} (main paper), we evaluate the GPC on all four the extrema and select the one that gives the best bound among them. 
\subsection{Lower Bound on Latent Variance}
Let $\covvec(x) = [r_1(x),\ldots,r_M(x)]$ be the vector of covariance between a test point and the training set $\mathcal{D}$ with $ \vert \mathcal{D} \vert = M$, and let $R$ be the inverse covariance matrix computed in the training set, and $\Sigma_p$ be the (input independent) self kernel value.
By explicitly using the variance inference formula, we are interested in finding a lower bound for: $\min_{x \in T} \left( \Sigma_p - \covvec(x)^T R  \covvec(x) \right) = \Sigma_p + \min_{x \in T} \left( - \covvec(x)^T R \covvec(x)  \right)$.
We proceed by introducing the $M$ auxiliary variables $r_i = \covvec_i(x)$, yielding a quadratic objective function on the auxiliary variable vector $\covvec = [r_1, \ldots, r_M]$, that is $ - \covvec^T R \covvec $. 
Analogously to what is done in \cite{cardelli2018robustness} we can compute two matrices $A_r$, $A_x$ and a vector $b$ such that $\covvec = \covvec(x)$ implies $A_r \covvec + A_x x \leq b $, hence obtaining the quadratic program:
\begin{align}\label{prob:concave}
    \min - \covvec^T R \covvec \\
    \textrm{Subject to:} \quad &A_r \covvec + A_x x \leq b \nonumber \\
    &r_i^L \leq r_i \leq r_i^U \qquad  i = 1,\ldots, M  \nonumber  \\
    &x_i^L \leq x_i \leq x_i^U \qquad  i = 1,\ldots, m  \nonumber  
\end{align}
whose solution provides a lower bound (and hence a safe approximation) to the original problem $\min_{x \in T} \left( - \covvec(x)^T R \covvec(x)  \right)$.
Unfortunately, as $R$ is positive definite, we have that $-R$ is negative definite; hence the problem posed is a concave quadratic program for which a number of local optima may exist.
As we are instead dealing with worst-case scenario analyses, we are  actually interested in computing the global minimum. 
This however is an NP-hard problem \cite{rosen1986global} whose exact solution would make a branch and bound algorithm based on it impractical.
Following the methods discussed in \cite{rosen1986global}, we instead proceed to compute a safe lower bound to that.
The main observation is that,  being $R$ symmetric positive definite, there exist a matrix of eigenvectors $U = [ \mathbf{u}_1, \ldots ,\mathbf{u}_M ] $ and a diagonal matrix of the associated eigenvalues $\lambda_i$ for $i = 1,\ldots,M$, $\Lambda $ such that $R = U \Lambda U^T$.
We hence define $\hat{r}_i =  \mathbf{u}_1^T r_i$ for $i = 1,\ldots,M$ to be the rotated variables and compute their ranges $[\hat{r}_i^L,\hat{r}_i^U]$ by solution of the following $2M$ linear programming problems:
\begin{align*}
    \min / \max \quad  &\mathbf{u}_i^T r_i \\
    \textrm{Subject to:} \quad \quad &A_r \covvec + A_x x \leq b \\
    &r_j^L \leq r_i \leq r_j^U \qquad  j = 1,\ldots, M \\
    &x_j^L \leq x_i \leq x_j^U \qquad  j = 1,\ldots, m. 
\end{align*}

Implementing the change of variables into Problem \ref{prob:concave} we obtain:
\begin{align*}
    \min - \mathbf{\hat{r}}^T \Lambda \mathbf{\hat{r}} \\
    \textrm{Subject to:} \quad &\hat{A}_{\hat{r}} \mathbf{\hat{r}} + A_x x \leq b  \\
    &\hat{r}_i^L \leq \hat{r}_i \leq \hat{r}_i^U \qquad  i = 1,\ldots, M    \\
    &x_i^L \leq x_i \leq x_i^U \qquad  i = 1,\ldots, m    
\end{align*}
where we set $\hat{A} = A U $.
We then notice that $\mathbf{\hat{r}}^T \Lambda \mathbf{\hat{r}} =  \sum_i \lambda_i \hat{r}_i^2$.
By using the methods developed in \cite{cardelli2018robustness} it is straightforward to find coefficients of a linear under approximations $\alpha_i$ and $\beta_i$ such that: $\alpha_i + \beta_i \hat{r}_i \leq - \lambda_i \hat{r}_i^2$ for $i = 1,\ldots,M$.
Defining $\mathbf{\beta} = [\beta_1,\ldots,\beta_M]$, and $\hat{\alpha} = \sum_{i = 1}^M \alpha_i$ we then have that the solution to the following linear programming problem provides a valid lower bound to Problem \ref{prob:concave} and can be hence used to compute a lower bound to the latent variance:
\begin{align*}
    \min \left( \hat{\alpha} + \mathbf{\beta}^T \mathbf{\hat{r}} \right) \\
    \textrm{Subject to:} \quad &\hat{A}_{\hat{r}} \mathbf{\hat{r}} + A_x x \leq b  \\
    &\hat{r}_i^L \leq \hat{r}_i \leq \hat{r}_i^U \qquad  i = 1,\ldots, M    \\
    &x_i^L \leq x_i \leq x_i^U \qquad  i = 1,\ldots, m.    
\end{align*}

%\section{Branch and bound algorithm}
%\label{app:algorithm}
%\input{sections/appendix_algorithm.tex}

\section{RUNTIME ANALYSIS}
\label{app:runtime}
In this section of the Supplementary Material we empirically analyse the CPU time required for convergence of Algorithm \ref{alg:bnb_sketch} in the MNIST38 dataset. For the first $50$ test points and a $\gamma-$ball $T$ of dimensionality $d$, we calculated $\piSup{T}$ up to a pre-specified error tolerance $\epsilon$. We use $\gamma = 0.125$ and $\gamma = 0.25$, corresponding to up to $50 \%$ of the normalised input domain.
All runtimes analysed below were obtained on a MacBook Pro with a 2.5 Ghz Intel Core i7 processor and 16GB RAM running on macOS Mojave 10.14.6.

\begin{figure*}
	\centering
	{\hspace*{-.5cm}  \includegraphics[width = 0.4\textwidth]{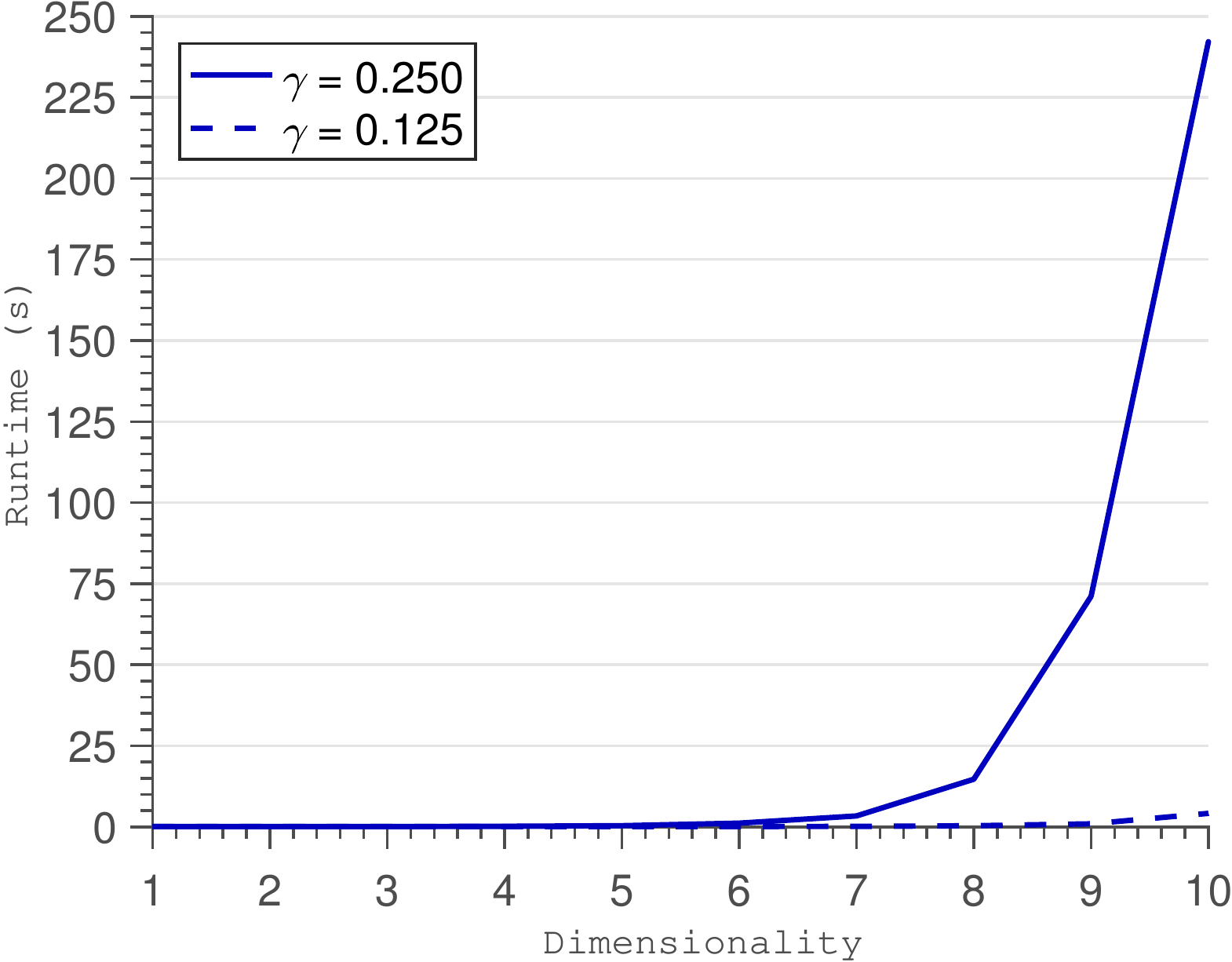}}
	{\hspace*{.5cm}  \includegraphics[width = 0.4\textwidth]{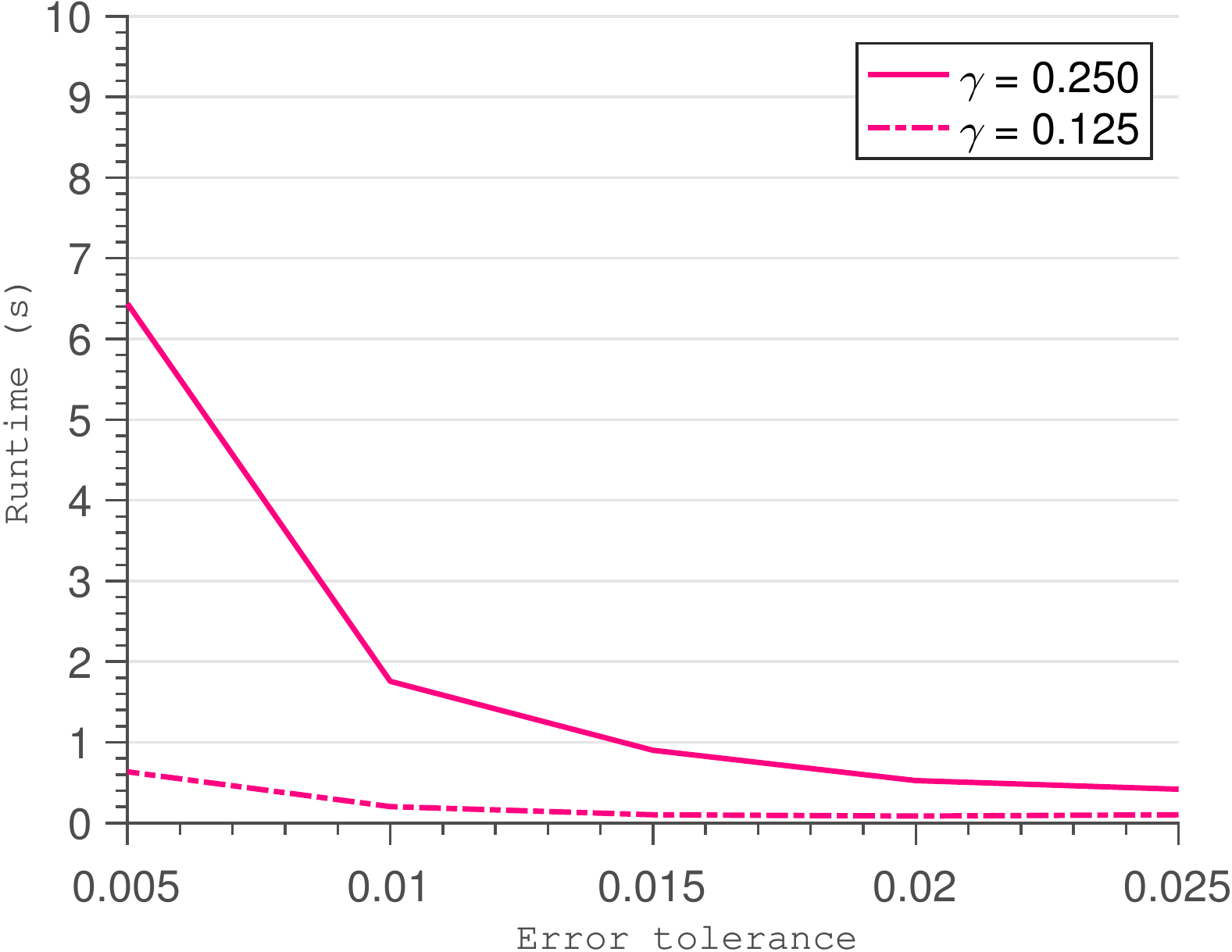}}
	\caption{Average runtimes of Algorithm \ref{alg:bnb_sketch} to calculate $\piSup{T}$ up to specified error tolerance $\epsilon$ among the first $50$ test points of MNIST38. \textbf{Left}: Average runtimes for increasing number of dimensions at $\epsilon = 0.025$. \textbf{Right}: Average runtimes for different values of $\epsilon$ with number of dimensions $d = 5$.}
	\label{fig:runtime}
\end{figure*}

\subsection{Runtime Depending on Dimension of Compact Subset.}
First, we analysed the effect of increasing $d$, by fixing $\epsilon = 0.025$ and increasing the number of pixels selected by SIFT to define $T$ from $1$ to $10$. The results are shown in terms of average runtime in Figure \ref{fig:runtime} on the left. For $\gamma = 0.25$, we can observe the exponential behaviour of the computational complexity in terms of number of dimensions, as the runtime quickly grows from below $5$ seconds to almost $250$ seconds beyond $7$ dimensions. However, for $\gamma = 0.125$ the exponential curve seems to be shifted further to the right, as still for $10$ dimensions Algorithm \ref{alg:bnb_sketch} terminates in only a few seconds. Given that for $\gamma = 0.125$, $T$ spans up to $25 \%$ of the input domain (on the selected pixels), we consider this quite fast.

\subsection{Runtime Depending on Error Tolerance}
Second, we analysed the effect of the error tolerance $\epsilon$, by calculating the bounds for each $\epsilon \in \{0.005, 0.01, 0.015, 0.02, 0.025\}$ with the number of pixels selected by SIFT (i.e. $d$) fixed to $5$. The results are shown in Figure \ref{fig:runtime} on the right. The behaviour seems to be roughly inversely exponential this time with lower error tolerance $\epsilon$ naturally demanding higher runtimes. In practice, one would seldom expect to require precision of $\epsilon < 0.01$ though, at which point Algorithm \ref{alg:bnb_sketch} still terminates in under $2$ seconds on average even for $\gamma = 0.25$. 

\section{EXPERIMENTAL SETTINGS}
\label{app:experiments}

\subsection{Datasets}

Our synthetic two-dimensional dataset contains 1,200 points, of which 50 \% belong to Class 1 and 50 \% belong to Class 2. The points were generated by shifting draws from a two-dimensional standard-normal random variable by $5$, either along the first dimension (Class 1) or along the second dimension (Class 2). Subsequently, we normalise the data by subtracting its mean and dividing by its standard deviation.

SPAM is a binary dataset that contains 4,601 samples, of which 60\% are benign. Each sample consists of 54 real-valued and three integer-valued features. However, identical or better prediction accuracies can be achieved with models involving only 11 of those 57 variables, among them e.g. the frequency of the word 'free' in the email, the share of \$ signs in its body, or the total number of capital letters, which is why we only use these 11 selected variables. We normalise the data by subtracting its mean and dividing by its standard deviation. 

MNIST38 contains 8,403 samples of images of handwritten digits, of which roughly 50 \% are 3s and 50 \% are 8s. Each sample consists of a $28 \times 28$ pixel image in gray scale (integer values between $0$ and $255$) which following convention, we normalise by dividing by $255$. For better scalability we then downsample to $14 \times 14$ pixels.

The subset of MNIST which we use to compile MNIST358 contains 5,715 samples of images of handwritten digits, of which roughly 36 \% are 3s, 31 \% are 5s and 34 \% are 8s. Each sample consists of a $28 \times 28$ pixel image in gray scale (integer values between $0$ and $255$) which following convention, we normalise by dividing by $255$. For better scalability we then downsample to $14 \times 14$ pixels.

\subsection{Experimental Settings}

For the binary experiments, we use 1,000 randomly selected points as a training set and 200 randomly selected points as a test set. For the multiclass experiments, scalability of GPs is even more of an issue so we just work with 500 randomly selected points as training set.

For the GP training of binary classification problems, we use the GPML package for Matlab. For the GP training of multiclass classification problems, we use the GPstuff package. 

For the safety verification experiments in Section \ref{subsec:safety}, we used a GPC model with a probit likelihood function and the Laplace approximation for the posterior. For the synthetic 2D data, the number of epochs (marginal likelihood evaluations) performed during hyper-parameter optimisation was restricted to 20. For the SPAM data, it was restricted to 40. Finally for the attacks on MNIST38 it was restricted to 10 and 20.

For the robustness experiments in Section \ref{subsec:robustness}, we give the specifications of training in the paper itself.

For the interpretability experiments in Section \ref{subsec:interpretability}, we use a multiclass GPC model with softmax link function and the Laplace approximation for the posterior. We limit the number of iterations performed during hyper-parameter optimisation to 10. 

The code for the GPFGS attacks as well as LIME was implemented by us in Matlab according to the original Python code provided by the authors.

%\subsection{GPFGS Attacks}
%GPFGS attacks have been developed in \cite{grosse2018limitations} as the GP analogue of the well-known fast gradient sign method (FGSM) attack for neural networks \cite{goodfellow2014explaining}. Given a perturbation budget $\gamma$, FGSM attacks a model at test point $x^*$ by perturbing each dimension by $\gamma$ in the direction of its gradient at $x^*$. Since for GPs, the integral in Eqn \ref{Eq:MultiClassClassification} makes calculating the gradient directly impossible, \cite{grosse2018limitations} propose to base the attack on the gradient of the latent mean function $\mu(\cdot)$ instead. The GPFGS attack is thus defined by
%\begin{equation}
%x_{adv} = x^* + \gamma \times \mathrm{sign}(\nabla \mu(x^*))
%\end{equation}

%\subsection{LIME}
%\LL{Not needed}
%\rev{Write a short section on LIME}

%The code for LIME was implemented by us in Matlab according to the original Python code provided by the authors.

%\section{Derivation of Interpretability metric $\mathbf{\Delta}^i_{\gamma}$}
\section{DETAILS ON INTERPRETABILITY METRIC}
\label{app:interpretability}

%\subsection{Derivation of Interpretability metric $\mathbf{\Delta}^i_{\gamma}$}

%The canonical way to understand which features are relevant to a GP classification model with an ARD kernel is to analyse the lengthscales of each feature. However, this has multiple drawbacks. Apart from only being feasible when an ARD kernel is used, it does only allow for a global analysis of feature relevance. Furthermore, it is directionless, i.e. while the lengthscale of a feature could tell us how relevant that feature is to the classifier, it can not tell us if an increase in its value is more likely to increase a class probability assigned by the model or to decrease it. 
%In order to shed light on the direction of the relevance of a feature, a popular method for any kind of model is LIME , which analyses the impact of each feature by building a local linear approximation model imitating the behaviour of the model to be analysed. While this allows for both a local and a directed analysis of feature relevance, a potential drawback lies in the question whether a good linear approximation to the possibly even locally highly-nonlinear model can be achieved.

Below, we briefly derive our metric for interpretability analysis $\mathbf{\Delta}^i_{\gamma}$, which by using our bounds does not rely on local linearity , in a bit more detail.
%and compare the outcomes for three different datasets.

%\subsection{A metric for interpretability analysis}

For a testpoint $x^*$ and dimension $i$, we define $T^i_{\gamma}(x^*) = [x^*, x^* + \gamma*e_i]$ like in the main paper. To analyse the impact of changes in dimension $i$, we propose to analyse how much the maximum of the assigned class probabilities can differ from the initial class probability $\pi(x^*)$ over such a one-sided interval compared to  how much the minimum differs from that initial probability. In other words, we calculate 
\begin{align}
    \Delta^i_{\gamma}(x^*) = & \left( \piSup{T^i_{\gamma}(x^*)}  -\pi(x^*)\right) \\
    & - \left(\pi(x^*)-\piInf{T^i_{\gamma}(x^*)}\right). \label{eq:interpret}
\end{align}
If increasing the value of dimension $i$ makes the model favor assigning lower class probabilities, we would expect this value to be negative and vice versa. To make it more robust, we center the analysis by calculating the proposed metric 
\begin{align}
    \mathbf{\Delta^i_{\gamma}}(x^*) = & \Delta^i_{\gamma}(x^*) - \Delta^i_{-\gamma}(x^*) \\
    = &\left(\piSup{T^i_{\gamma}(x^*)} -\piSup{T^i_{-\gamma}(x^*)}\right)\\
    & + \left(\piInf{T^i_{\gamma}(x^*)}-\piInf{T^i_{-\gamma}(x^*)}\right). \label{eq:interpretcenter}
\end{align}
Finally, if instead of a local analysis a global analysis is desired, we suggest following LIME's approach in aggregating local insights to a global insight by averaging over a selection of test points $M$
\begin{equation}
    \mathbf{\Delta^i_{\gamma}} = \frac{1}{M} \sum_{j = 1}^{M} \mathbf{\Delta^i_{\gamma}}(x^j).
\end{equation}
Ideally, $M$ contains all test points; however, if for computational reasons a subselection is to be made, the SP algorithm in \cite{ribeiro2016should} could be used.

\end{document}